\def\isarxivversion{1} 
\ifdefined\isarxivversion

\documentclass[12pt]{article}
\usepackage[margin=1.25in]{geometry}
\usepackage{times}
\usepackage{amsmath, amsfonts, bm}
\usepackage{amsthm,graphicx}
\usepackage[dvipsnames]{xcolor}
\usepackage{mathtools}

\usepackage{amssymb}
\usepackage{pifont}

\usepackage{makecell}
\usepackage[utf8]{inputenc} 
\usepackage[T1]{fontenc} 
\usepackage{amsfonts}  
\usepackage{mathrsfs} 
\usepackage{xspace}

\allowdisplaybreaks

\usepackage[unicode=true,
 bookmarks=false,
 breaklinks=false,pdfborder={0 0 1},colorlinks=false]
 {hyperref}
\hypersetup{
 colorlinks,citecolor=blue,filecolor=blue,linkcolor=blue,urlcolor=blue}

\else
\documentclass[anon,12pt]{colt2021}
\fi

\usepackage{algorithm}
\usepackage{array}
\usepackage{makecell}
\usepackage{amssymb}
\usepackage{multirow}
\usepackage{color}
\usepackage[english]{babel}
\usepackage{graphicx}
\usepackage{natbib}
\usepackage{wrapfig}
\usepackage{epstopdf}
\usepackage{url}
\usepackage{graphicx}
\usepackage{color}
\usepackage{epstopdf}
\usepackage{algpseudocode}

\usepackage[T1]{fontenc}
\usepackage{bbm}
\usepackage{comment}
\usepackage{tikz}
\usepackage{tikzit}

\tikzstyle{Default}=[fill=white, draw=black, shape=circle]
\tikzstyle{Rec}=[fill=white, draw=black, shape=rectangle]

\tikzstyle{Unidirectional}=[->]
\tikzstyle{Double Arrow}=[<->]
\tikzstyle{Line}=[-]
\tikzstyle{Dashed arrow}=[->, dashed]
\tikzstyle{Dashed double arrow}=[<->, dashed]
\tikzstyle{dashed line}=[-, dashed]
\tikzstyle{blue line}=[-, color=blue]
\tikzstyle{Dotted Arrow}=[->, dotted]

\usepackage{booktabs,caption}
\usepackage[flushleft]{threeparttable}

 
\usepackage{tikz}
\usepackage{hyperref}
\hypersetup{colorlinks=true,citecolor=blue,linkcolor=red}

\usetikzlibrary{arrows}


\ifdefined\isarxivversion
\else 

\fi

\definecolor{mygreen}{RGB}{80,180,0}

\definecolor{dacong}{RGB}{10,103,68}

\ifdefined\isarxivversion
\usepackage{amsmath}
\usepackage{amsthm}
\usepackage{subfig}

\usepackage{enumitem}

\newtheorem{theorem}{Theorem}
\newtheorem{lemma}{Lemma}
\newtheorem{definition}{Definition}
\newtheorem{proposition}{Proposition}
\newtheorem{corollary}{Corollary}

\newtheorem{remark}{Remark}
\newtheorem{example}{Example}
\else
\fi

\newtheorem{assumption}{Assumption}

\usepackage{mdframed}
\mdtheorem{problem}{Problem}

\newcommand{\cS}{\mathcal{S}}
\newcommand{\cA}{\mathcal{A}}




\newcommand{\wh}{\widehat}
\newcommand{\wt}{\widetilde}
\newcommand{\ov}{\overline}

\renewcommand{\tilde}{\wt}
\renewcommand{\hat}{\wh}
\renewcommand{\bar}{\ov}

\newcommand{\VC}{\mathcal{V}}
\newcommand{\WC}{\mathcal{W}}
\newcommand{\BW}{B_{w,\alpha}}
\newcommand{\BF}{B_{f,\alpha}}
\newcommand{\BFP}{B_{f',\alpha}}
\newcommand{\BV}{B_{v,\alpha}}
\newcommand{\BE}{B_{e,\alpha}}
\newcommand{\va}{v^*_{\alpha}}
\newcommand{\wa}{w^*_{\alpha}}
\newcommand{\vz}{v^*_{0}}
\newcommand{\ve}{v^*_{\alpha_{\epsilon}}}
\newcommand{\wz}{w^*_{0}}
\newcommand{\we}{w^*_{\alpha_{\epsilon}}}
\newcommand{\pia}{\pi^*_{\alpha}}
\newcommand{\piz}{\pi^*_0}
\newcommand{\pie}{\pi^*_{\alpha_{\epsilon}}}
\newcommand{\La}{L_{\alpha}}
\newcommand{\Lz}{L_0}
\newcommand{\da}{d^*_{\alpha}}
\newcommand{\dz}{d^*_0}
\newcommand{\de}{d^*_{\alpha_{\epsilon}}}

\newcommand{\BFE}{B_{f,\alpha_{\epsilon}}}
\newcommand{\BFZ}{B_{f,0}}
\newcommand{\BWE}{B_{w,\alpha_{\epsilon}}}
\newcommand{\BWZ}{B_{w,0}}
\newcommand{\BEE}{B_{e,\alpha_{\epsilon}}}
\newcommand{\BVE}{B_{v,\alpha_{\epsilon}}}
\newcommand{\BFPZ}{B_{f',0}}
\newcommand{\WCZ}{\mathcal{W}^*_0}
\newcommand{\BS}{B_{w,u}}
\newcommand{\BST}{B_{w,l}}
\newcommand{\PIW}{\Pi_{B_w}}
\newcommand{\piaw}{\pi^*_{\alpha,B_w}}
\newcommand{\pizw}{\pi^*_{0,B_w}}
\newcommand{\vaw}{v^*_{\alpha,B_w}}
\newcommand{\waw}{w^*_{\alpha,B_w}}
\newcommand{\daw}{d^*_{\alpha,B_w}}
\newcommand{\BEZ}{B_{e,0}}

\newcommand{\piew}{\pi^*_{\alpha'_{\epsilon},B_w}}
\newcommand{\vew}{v^*_{\alpha'_{\epsilon},B_w}}
\newcommand{\wew}{w^*_{\alpha'_{\epsilon},B_w}}

\newcommand{\dew}{d^*_{\alpha'_{\epsilon},B_w}}
\newcommand{\vzw}{v^*_{0,B_w}}
\newcommand{\wzw}{w^*_{0,B_w}}
\newcommand{\dzw}{d^*_{0,B_w}}

\newcommand{\Lp}{\tilde{L}_{\alpha}}
\newcommand{\wab}{w^*_{\bar{\alpha}}}
\newcommand{\vab}{v^*_{\bar{\alpha}}}
\newcommand{\vav}{v^*_{\alpha,\mathcal{V}}}
\newcommand{\waww}{w^*_{\alpha,\mathcal{W}}}
\newcommand{\eop}{\epsilon_{{opt}}}
\newcommand{\eag}{\epsilon_{\alpha,{app}}}
\newcommand{\erva}{\epsilon_{\alpha,r,v}}
\newcommand{\erwa}{\epsilon_{\alpha,r,w}}
\newcommand{\eun}{\epsilon_{{un}}}
\newcommand{\eopu}{\epsilon_{{opt}}}
\newcommand{\eagu}{\epsilon_{\alpha_{un},{app}}}
\newcommand{\BFU}{B_{f,\alpha_{un}}}
\newcommand{\BWU}{B_{w,\alpha_{un}}}
\newcommand{\BEU}{B_{e,\alpha_{un}}}
\newcommand{\BVU}{B_{v,\alpha_{un}}}
\newcommand{\eopub}{\epsilon_{{opt}}}
\newcommand{\eagub}{\epsilon_{{app}}}
\newcommand{\lstat}{\epsilon_{stat}}
\newcommand{\lstatt}{\epsilon_{stat,2}}
\newcommand{\wzlp}{w^*}

\newcommand{\FC}{\mathcal{F}}
\newcommand{\TC}{\mathcal{T}}
\newcommand{\SE}{\mathcal{E}}

\newcommand{\mainalg}{\texttt{PRO-RL}\xspace}
\newcommand{\aalg}{\texttt{Inexact-PRO-RL}\xspace}
\newcommand{\algbc}{\texttt{PRO-RL-BC}\xspace}

\definecolor{yxc}{RGB}{255,0,0}
\definecolor{yjc}{RGB}{190,0,255}
\definecolor{whz}{RGB}{1,11,111}
\definecolor{hbh}{RGB}{1,150,20}

\makeatletter
\newcommand*{\RN}[1]{\expandafter\@slowromancap\romannumeral #1@}
\makeatother

\title{}

\ifdefined\usebigfont

\usepackage{times}
\usepackage[fontsize=13pt]{scrextend}
\AtBeginDocument{\newgeometry{letterpaper,left=1.56in,right=1.56in,top=1.71in,bottom=1.77in}}
\else
\usepackage{scrextend}
\fi

\begin{document}

\ifdefined\isarxivversion
\title{Offline Reinforcement Learning with Realizability and Single-policy Concentrability
}

\author{Wenhao Zhan\thanks{Princeton University.}\\
	\and
	\hspace{0.3in}Baihe Huang\thanks{Peking University.}\\
	\and
    \hspace{0.3in}Audrey Huang\thanks{University of Illinois Urbana-Champaign.}\\
	\and
	\hspace{0.3in}Nan Jiang\footnotemark[3]\\
	\and
	\hspace{0.3in}Jason D. Lee\footnotemark[1] \\
	}

\date{February 9, 2022}

\else
\fi

%

\maketitle


\begin{abstract}
Sample-efficiency guarantees for offline reinforcement learning (RL) often rely on strong assumptions on both the function classes (e.g., Bellman-completeness) and the data coverage (e.g., all-policy concentrability). Despite the recent efforts on relaxing these assumptions, existing works are only able to relax one of the two factors, leaving the strong assumption on the other factor intact. As an important open problem, can we achieve sample-efficient offline RL with weak assumptions on \textit{both} factors?

In this paper we answer the question in the positive. We analyze a simple algorithm based on the primal-dual formulation of MDPs, where the dual variables (discounted occupancy) are modeled using a density-ratio function against offline data. With proper regularization, we show that the algorithm enjoys polynomial sample complexity, under \textit{only} realizability 
and single-policy concentrability. 
We also provide alternative analyses based on different assumptions to shed light on the nature of primal-dual algorithms for offline RL. 
\end{abstract}
%

\setcounter{tocdepth}{2}

\section{Introduction} \label{sec:intro}

Offline (or batch) reinforcement learning (RL) learns decision-making strategies using solely historical data, and is a promising framework for applying RL to many real-world applications. Unfortunately, offline RL training is known to be difficult and unstable \citep{fujimoto19off,wang2020statistical,wang2021instabilities}, primarily due to two fundamental challenges.
The first challenge is distribution shift, that the state distributions induced by the candidate
policies may deviate from the offline data distribution, creating difficulties in accurately
assessing the performance of the candidate policies. The second challenge is the sensitivity
to function approximation, that errors can amplify exponentially over the horizon even with
good representations \citep{du2019good,weisz2020exponential,wwk21}. 

These challenges not only manifest themselves as degenerate behaviors of practical algorithms,  
but are also reflected in the strong assumptions needed for providing  sample-efficiency guarantees to classical algorithms. 
(In this paper, by sample-efficiency we mean a sample complexity that is polynomial in the relevant parameters, including the horizon, the capacities of the function classes, and the degree of data coverage.)
As an example, the guarantees of the popular Fitted-Q Iteration \citep{ernst05tree, munos2008finite, pmlr-v120-yang20a, chen2019information} require the following two assumptions:
\begin{itemize}[leftmargin=*, itemsep=0pt]
\item \textbf{(Data) All-policy concentrability: } The offline data distribution provides good coverage (in a technical sense) over the state distributions induced by \textit{all} candidate policies. 
\item \textbf{(Function Approximation) Bellman-completeness: } The value-function class is \textit{closed} under the Bellman optimality operator.\footnote{Approximate policy iteration algorithms usually require a variant of this assumption, that is, the closure under the policy-specific Bellman operator for \textit{every} candidate policy \citep{munos2003error,antos08learning}.} 
\end{itemize}
Both assumptions are very strong and may fail in practice, and algorithms whose guarantees rely on them naturally suffer from performance degradation and instability \citep{fujimoto19off,wang2020statistical,wang2021instabilities}. On one hand, \textit{all-policy concentrability} not only requires a highly exploratory dataset (despite that historical data in real applications often lacks exploration), but also implicitly imposes structural assumptions on the MDP dynamics \citep[Theorem 4]{chen2019information}. On the other hand, \textit{Bellman-completeness} is much stronger than realizability (that the optimal value function is simply contained in the function class), and is \textit{non-monotone} in the function class, that the assumption can be violated more severely when a richer function class is used.

To address these challenges, a significant amount of recent efforts in offline RL have been devoted to relaxing these strong assumptions via novel algorithms and analyses. Unfortunately, these efforts are only able to address either the data or the function-approximation assumption, and no existing works address both simultaneously. For example, \cite{liu2020provably, rajaraman2020towards, rashidinejad2021bridging, jin2020pessimism, xie2021bellman, uehara2021pessimistic} show that pessimism is an effective mechanism for mitigating the negative consequences due to lack of data coverage, and provide guarantees under \textit{single-policy concentrability}, that the data only covers a single good policy (e.g., the optimal policy). However, they require completeness-type assumptions on the value-function classes or \textit{model} realizability.\footnote{When a model class that contains the true MDP model is given, value-function classes that satisfy a version of Bellman-completeness can be automatically induced from the model class \citep{chen2019information}, so model realizability is even stronger than Bellman-completeness. Therefore, in this work we aim at only making a constant number of realizability assumptions of real-valued functions.
} \citet{xie2020batch} only require realizability of the optimal value-function, but their data assumption is even stronger than all-policy concentrability. To this end, we want to ask:
\begin{center}
\textbf{\textit{Is sample-efficiency possible with realizability and single-policy concentrability?}}
\end{center}

In this work, we answer the question in the positive by proposing the first model-free algorithm that only requires relatively weak assumptions on both data coverage and function approximation. 
The algorithm is based on the primal-dual formulation of linear programming (LP) for MDPs \citep{puterman2014markov,wang2017primal}, where we use marginalized importance weight (or density ratio) to model the dual variables which correspond to the discounted occupancy of the learned policy, a practice commonly found in the literature of off-policy evaluation (OPE) \citep[e.g.,][]{liu2018breaking}. Our main result (Corollary~\ref{cor:sample unregularized}) provides polynomial sample-complexity guarantees when the density ratio and (a regularized notion of) the value function of the regularized optimal policy are realizable, and the data distribution covers such an optimal policy. We also provide a number of extensions and alternative analyses to complement the main result and provide deeper understanding of the behavior of primal-dual algorithms in the offline learning setting: (see also  Table~\ref{tab:main results} for a summary of the results) 
\begin{enumerate}[leftmargin=*, itemsep=0pt]
\item Section~\ref{sec:results agnostic} extends the main result to account for approximation and optimization errors, and Section~\ref{sec:results constrained} handles the scenario where the optimal policy is not covered and we need to compete with the best policy supported on data. 
\item Section~\ref{sec:results BC} handles the case where the behavior policy is unknown (which the main algorithm needs) and estimated by behavior cloning.
\item Our main result crucially relies on the use of  regularization. In Section~\ref{sec:results unregularized} we study the unregularized algorithm, and provide performance guarantees under alternative assumptions. 
%
\end{enumerate}

\begin{table}[h!]
	\begin{center}
			\caption{Assumptions required by existing algorithms and our algorithms to learn an $\epsilon$-optimal policy efficiently. Here $\pi$ is a policy and $d^{\pi}$ is the associated discounted state-action occupancy. $\da=d^{\pia}$ where $\pia$ is the $\alpha$-regularized optimial policy (defined in Section~\ref{sec:algorithm}). In particular, $\dz$ is the discounted state-action occupancy of the unregularized optimal policy. $d^D$ is the distribution of the offline dataset. $\FC,\Pi,\WC,\VC$ are the approximation function classes and $\TC$ is the Bellman operator. $Q^{\pi}$ is th action value function of $\pi$ and $Q^*$ is the unregularized optimal action value function. $\va$($\vew$) is the $\alpha$-regularized optimal value function (with respect to the covered policy class), defined in Section~\ref{sec:algorithm} (Section~\ref{sec:results constrained}), and particularly $\vz$ is the unregularized optimal value function. $\wa$($\wew$) is the optimal density ratio $\frac{\da}{d^D}$ (with respect to the covered policy class), as stated in Section~\ref{sec:algorithm} (Section~\ref{sec:results constrained}). Here we compete with the unregularized optimal policy by default and will mark when competing against the regularized optimal policy.}
			\label{tab:main results}
			\begin{tabular}{c|c|c}
					
					Algorithm & Data & Function Class\\
					\hline
					AVI & \multirow{2}{*}{$\Vert\frac{d^{\pi}}{d^D}\Vert_{\infty}\leq B_w,\textcolor{red}{\forall\pi}$} & $\TC f\in\FC,\textcolor{red}{\forall f\in\FC}$ {\scriptsize	\citep{munos2008finite}}\\
					\cline{1-1}\cline{3-3}
					API & &$\TC^{\pi}f\in\FC,\textcolor{red}{\forall f\in\FC,\pi\in\Pi}$ {\scriptsize\citep{antos2008learning}}\\
					\hline
					BVFT & \textcolor{red}{Stronger} than above & $Q^*\in\FC${\scriptsize\citep{pmlr-v139-xie21d}}\\
					\hline
					\multirow{2}{*}{Pessimism} & \multirow{2}{*}{$\Vert\frac{\dz}{d^D}\Vert_{\infty}\leq B_w$} & $\TC^{\pi}f\in\FC,\textcolor{red}{\forall f\in\FC,\pi\in\Pi}$ {\scriptsize\citep{xie2021bellman}}\\
					\cline{3-3}
					& & $\wz\in\WC, Q^{\pi}\in\FC, \textcolor{red}{\forall\pi\in\Pi}${\scriptsize\citep{jiang2020minimax}}\\
					\hline
					\textbf{\mainalg} & \multirow{2}{*}{$\Vert\frac{\da}{d^D}\Vert_{\infty}\leq B_w$} & \multirow{2}{*}{$\wa\in\WC,\va\in\VC$ \scriptsize(\textcolor{blue}{Theorem~\ref{thm:sample regularized}})}\\
					(against $\pia$) & & \\
					\hline
					\multirow{2}{*}{\textbf{\mainalg}} & \multirow{2}{*}{$\Vert\frac{\dz}{d^D}\Vert_{\infty}\leq B_w$} & \multirow{2}{*}{$\wew\in\WC,\vew\in\VC$ \scriptsize(\textcolor{blue}{Corollary~\ref{cor:sample unregularized constrained}})}\\
					&&\\
					\hline
					 & \multirow{2}{*}{$\Vert\frac{\dz}{d^D}\Vert_{\infty}\leq B_w,\frac{\dz(s)}{d^D(s)}\geq B_{w,l},\forall s$} & \multirow{4}{*}{$\wz\in\WC,\vz\in\VC$ \scriptsize(\textcolor{blue}{Corollary~\ref{cor:sample unregularized 0}})}\\
					 \textbf{\mainalg}&&\\ 
					 with $\alpha=0$ &\multirow{2}{*}{$\frac{d^{\pi}(s)}{d^D(s)}\leq B_{w,u},\textcolor{red}{\forall\pi},s$}& \\
					 & & \\
					\hline
				\end{tabular}
		\end{center}
\end{table}
\subsection{Related works}
Section~\ref{sec:intro} has reviewed the analyses of approximate value/policy iteration, and we focus on other related works in this section. 

\paragraph{Lower bounds} When we only assume the realizability of the optimal value-function, a number of recent works have established information-theoretic hardness for offline learning under relatively weak data coverage assumptions \citep{wang2020statistical,amortila2020variant,zanette2021exponential,chen2021infinite}. A very recent result by \citet{foster2021offline} shows a stronger barrier, that even with \textit{all-policy concentrability} and the realizability of the value functions of \textit{all policies}, it is still impossible to obtain polynomial sample complexity in the offline learning setting. These works do not contradict our results, as we also assume the realizability of the density-ratio function, which circumvents the existing lower bound constructions. In particular, as \citet[Section 1.3]{foster2021offline} have commented, their lower bound no longer holds if the realizability of importance weight is assumed, as a realizable weight class would have too large of a capacity in their construction and would explain away the sample-complexity lower bound that scales with $|\cS|$. 


\paragraph{Marginalized importance sampling (MIS)} 
As mentioned above, a key insight that enables us to break the lower bounds against value-function realizability is the use of marginalized importance weights (or density ratio). Modeling such functions is a common practice in MIS, a recently popular approach in the OPE literature \citep{liu2018breaking, uehara2019minimax, kostrikov2019imitation,nachum2020reinforcement,zhang2020dendice}, though most of the works focus exclusively on policy evaluation. 

Among the few works that consider policy optimization, AlgaeDICE \citep{nachum2019algaedice} optimizes the policy using MIS as a subroutine for policy evaluation, and \citet{jiang2020minimax} analyze AlgaeDICE under the realizability of \textit{all} candidate policies' value functions. Similarly, MABO \citep{xie2020q} only needs realizability of the optimal value function, but the weight class needs to realize the density ratio of \textit{all} candidate policies. 
The key difference in our work is the use of the LP formulation of MDPs \citep{puterman2014markov} to directly solve for the optimal policy, without trying to evaluate other policies. This idea has been recently explored by OptiDICE \citep{lee2021optidice}, which is closely related to and has inspired our work. However, \citet{lee2021optidice} focuses on developing an empirical algorithm, and as we will see, multiple design choices in our algorithms deviate from those of OptiDICE and are crucial to obtaining the desired sample-complexity guarantees. 


\section{Preliminaries}
\label{sec:setting}

\paragraph{Markov decision process (MDP).} 
We consider an infinite-horizon discounted MDP $\mathcal{M}=(\mathcal{S},\mathcal{A},P,r,\gamma,\mu_{0})$ \citep{bertsekas2017dynamic}, where $\mathcal{S}$ is the state space, $\mathcal{A}$ is the action space, $\gamma \in [0,1)$ is the discount factor, 
$P: \mathcal{S}\times\mathcal{A}\to\Delta(\mathcal{S})$ is the transition function, $\mu_0\in\Delta(\mathcal{S})$ is the initial state distribution, and $r:\mathcal{S}\times\mathcal{A}\to[0,1]$ is the reward function. Here, we assume $\cS$ and $\cA$ to be finite, but our results will not depend on their cardinalities and 
can be extended to the infinite case naturally. We also assume $\mu_0(s)>0$ for all $s\in\mathcal{S}$; since our analysis and results will not depend on $\min_{s\in\mathcal{S}}\mu_0(s)$, $\mu_0(s)$ for any particular $s$ can be arbitrarily small and therefore this is a trivial assumption for certain technical conveniences.

A policy $\pi:\mathcal{S}\to\Delta(\mathcal{A})$ specifies the action selection probability in state $s$, and the associated discounted state-action occupancy is defined  as
$d^{\pi}(s,a) \coloneqq (1-\gamma) \sum_{t=0}^{\infty} \gamma^t \text{Pr}_\pi( s_t = s, a_t = a),$ 
where the subscript of $\pi$ in $\text{Pr}_{(\cdot)}$ or $\mathbb{E}_{(\cdot)}$ refers to the distribution of trajectories generated as $ s_0\sim \mu_0 $, $a_t\sim\pi(\cdot|s_t)$, $s_{t+1}\sim P(\cdot|s_t,a_t)$ for all $t\geq0$. For brevity, let $d^{\pi}(s)$ denote the discounted state occupancy $\sum_{a\in\mathcal{A}}d^{\pi}(s,a)$. A policy $\pi$ is also associated with a value function $V^{\pi}:\mathcal{S}\to\mathbb{R}$ and an action-value (or Q) function $Q^\pi: \cS\times\cA\to\mathbb{R}$ as follows: $\forall s\in\cS, a\in\cA$,
	%
$\textstyle
V^{\pi}(s):=\mathop{\mathbb{E}}_\pi
\left[\sum_{t=0}^{\infty}\gamma^t r(s_t,a_t) ~\Big\vert~ s_0=s\right],  ~~
Q^\pi(s,a) := \mathop{\mathbb{E}}_\pi\left[\sum_{t=0}^{\infty}\gamma^t r(s_t,a_t) ~\Big\vert~ s_0=s, a_0 = a\right].$ 
	%
	%
%

The goal of RL is to find a policy that maximizes 
the expected discounted return:
\begin{align}
\label{prob:original problem}
\max_{\pi} J(\pi)= (1-\gamma)\mathbb{E}_{\pi}\left[\sum_{t=0}^{\infty}\gamma^tr(s_t,a_t)\right] = \mathop{\mathbb{E}}_{(s,a)\sim d^{\pi}}[r(s,a)].
\end{align}


Alternatively, $J(\pi) =(1-\gamma)V^{\pi}(\mu_0) :=(1-\gamma) \mathbb{E}_{s\sim\mu_0}[V^{\pi}(s)]$. 
Let $\pi^*$ denote the optimal policy of this unregularized problem (\ref{prob:original problem}).



\paragraph{Offline RL.} 
In offline RL, the agent cannot interact with the environment directly and only has access to a pre-collected dataset $\mathcal{D}=\{(s_i,a_i,r_i,s'_i)\}_{i=1}^n$. We further assume each $(s_i,a_i,r_i,s'_i)$ is i.i.d.~sampled from $(s_i,a_i)\sim d^{D},r_i=r(s_i,a_i),s'_i\sim P(\cdot|s_i,a_i)$ as a standard simplification in theory \citep{nachum2019dualdice,nachum2019algaedice,xie2021bellman,pmlr-v139-xie21d}. Besides, we denote the conditional probability $d^D(a|s)$ by $\pi_D(a|s)$ and call $\pi_D$ the behavior policy. However, we do not assume $d^D=d^{\pi_D}$ in most of our results for generality (except for Section~\ref{sec:results constrained}). We also use $d^D(s)$ to represent the marginal distribution of state, i.e., $d^D(s)=\sum_{a\in\mathcal{A}}d^D(s,a)$. In addition, we assume access to a batch of i.i.d. samples $\mathcal{D}_0=\{s_{0,j}\}^{n_0}_{j=1}$ from the initial distribution $\mu_0$.  

\section{Algorithm: \mainalg}
\label{sec:algorithm}
Our algorithm builds on a regularized version of the well-celebrated LP formulation of MDPs \citep{puterman2014markov}. In particular, consider the following problem: 

\begin{samepage}
\begin{problem*}[Regularized LP]\label{prob:regularized_lp}
\vspace{-10pt}
\begin{align}
	&\max_{d \ge 0}\mathbb{E}_{(s,a)\sim d}[r(s,a)]-\alpha\mathbb{E}_{(s,a)\sim d^D}\left[f\left(\frac{d(s,a)}{d^D(s,a)}\right)\right]\label{eq:constrained}\\
	&\text{s.t. }d(s)=(1-\gamma)\mu_0(s)+\gamma\sum_{s',a'}P(s|s',a')d(s',a'), \forall s\in\mathcal{S}\label{eq:bellman flow 1}
\end{align}
where $d\in \mathbb{R}^{|\cS\times\cA|}$, $d(s) = \sum_{a} d(s, a)$, and  $f:\mathbb{R}\to\mathbb{R}$ is a strongly convex and continuously  differentiable function serving as a regularizer. 
\end{problem*}
\end{samepage}

Without the regularization term, this problem is exactly equivalent to the unregularized problem (\ref{prob:original problem}), as (\ref{eq:bellman flow 1}) exactly characterizes the space of possible discounted occupancies $d^\pi$ that can be induced in this MDP and is often known as the Bellman flow equations. Any non-negative $d$ that satisfies such constraints corresponds to $d^\pi$ for some stationary policy $\pi$. Therefore, once we have obtained the optimum $\da$ of the above problem, we can extract the regularized optimal policy $\pia$ via
\begin{equation}
	\label{eq:tilde d tilde w}
	{\pia}(a|s):=
	\begin{cases}
		\frac{{\da}(s,a)}{\sum_a {\da}(s,a)}, & \text{for } \sum_a \da(s,a)>0,\\
		\frac{1}{|\mathcal{A}|}, & \text{else.}
	\end{cases} ~~ \forall s\in\mathcal{S},a\in\mathcal{A}.
\end{equation}

Turning to the regularizer, $D_f(d\Vert d^D):=\mathbb{E}_{(s,a)\sim d^D}\left[f\left(\frac{d(s,a)}{d^D(s,a)}\right)\right]$ is the $f$-divergence between $d^{\pi}$ and $d^D$. This practice, often known as behavioral regularization, encourages the learned policy $\pi$ to induce an occupancy $d = d^\pi$ that stays within the data distribution $d^D$, and we will motivate it further using a counterexample against the unregularized algorithm \& analysis at the end of this section. 

To convert the regularized problem \eqref{eq:constrained}\eqref{eq:bellman flow 1} into a learning algorithm compatible with function approximation, 
we first introduce the Lagrangian multiplier ${v}\in\mathbb{R}^{|\mathcal{S}|}$ to (\ref{eq:constrained})(\ref{eq:bellman flow 1}), and obtain the following maximin problem:
\begin{align}
\label{prob:maximin}
\max_{d\geq0}\min_{{v}}~&\mathbb{E}_{(s,a)\sim d}[r(s,a)]-\alpha\mathbb{E}_{(s,a)\sim d^D}\left[f\left(\frac{d(s,a)}{d^D(s,a)}\right)\right]\notag\\
&+\sum_{s\in\mathcal{S}}{v}(s)\left((1-\gamma)\mu_0(s)+\gamma\sum_{s',a'}P(s|s',a')d(s',a')-d(s)\right).
\end{align}
Then, by variable substitution $w(s,a)=\frac{d(s,a)}{d^D(s,a)}$ and replacing summations with the corresponding expectations, we obtain the following problem
\begin{equation}
\label{prob:maximin2}
\max_{w\geq0}\min_{{v}}{\La}({v},w):=(1-\gamma)\mathbb{E}_{s\sim \mu_0}[{v}(s)]-\alpha\mathbb{E}_{(s,a)\sim d^D}[f(w(s,a))]+\mathbb{E}_{(s,a)\sim d^D}[w(s,a)e_{{v}}(s,a)],
\end{equation}
where $e_{{v}}(s,a)=r(s,a)+\gamma\sum_{s'}P(s'|s,a){v}(s')-{v}(s)$. The optimum of (\ref{prob:maximin2}), denoted by $({\va},{\wa})$, will be of vital importance later, as our main result relies on the realizability of these two functions $\va$ and $\wa$.  
When $\alpha= 0$, $v_{0}^*$ is the familiar optimal state-value function $V^{\pi^*}$, and $d_0^* := w_0^* \cdot d^D$ is the discounted occupancy of an optimal policy. Note that optimal policies in MDPs are generally not unique and thus $w^*_0,d^*_0$ are not unique either. We denote the optimal set of $w^*_0$ and $d^*_0$ by $\WC^*_0$ and $D^*_0$, respectively.

Finally, our algorithm simply uses function classes  ${\VC}\subseteq\mathbb{R}^{|\mathcal{S}|}$ and ${\WC}\subseteq\mathbb{R}^{|\mathcal{S}|\times|\mathcal{A}|}_{+}$ to approximate ${v}$ and $w$, respectively, and optimizes the empirical version of $\La(v, w)$ over ${\WC} \times \VC$. Concretely, we solve for
\begin{align}
\label{prob:empirical}
\textbf{\mainalg:} \qquad (\hat{{w}},\hat{v})=\arg\max_{w\in {\WC}}\arg\min_{{v}\in {\VC}}\hat{L}_{\alpha}({v},w) ,
\end{align}
where $\hat{L}_{\alpha}({v},w):=$
\begin{align}\label{hat L}
(1-\gamma)\frac{1}{n_0}\sum_{j=1}^{n_0}[{v}(s_{0,j})]+\frac{1}{n}\sum_{i=1}^n[-\alpha f(w(s_i,a_i))]
+\frac{1}{n}\sum_{i=1}^n[w(s_i,a_i)e_{{v}}(s_i,a_i,r_i,s'_i)],
\end{align}
and $e_{{v}}(s,a,r,s')=r+\gamma {v}(s')-{v}(s)$. The final policy we obtain is
\begin{align}
	\label{eq:hat pi}
	\hat\pi(a|s)=
	\begin{cases}
		\frac{\hat{w}(s,a)\pi_D(a|s)}{\sum_{a'}\hat{w}(s,a')\pi_D(a'|s)}, & \text{for } \sum_{a'}\hat{w}(s,a')\pi_D(a'|s)>0,\\
		\frac{1}{|\mathcal{A}|}, & \text{else,}
	\end{cases}
\end{align}
We call this algorithm \textbf{P}rimal-dual \textbf{R}egularized \textbf{O}ffline \textbf{R}einforcement \textbf{L}earning (\mainalg). For now we assume the behavior policy $\pi_D$ is known; Section~\ref{sec:results BC} extends the main results to the unknown $\pi_D$ setting via behavior cloning.

\paragraph{Why behavioral regularization?} While  behavioral regularization (the $f$ term) is frequently used in MIS (especially in DICE algorithms \citep{nachum2019algaedice,lee2021optidice}), its theoretical role has been unclear and finite-sample guarantees can often be obtained without it \citep{jiang2020minimax}. For us, however, the use of regularization is crucial in proving our main result (Corollary~\ref{cor:sample unregularized}). Below we construct a counterexample against the unregularized algorithm under the natural ``unregularized'' assumptions. 
\begin{figure}[t]
\centering
\begin{tikzpicture}
	\begin{pgfonlayer}{nodelayer}
		\node [style=Default] (0) at (-8.5, 21.5) {A};
		\node [style=Default] (1) at (-9.25, 20.25) {B};
		\node [style=Default] (3) at (-7.75, 20.25) {C};
		\node [style=none] (6) at (-9.25, 18.5) {+1};
		\node [style=none] (7) at (-8.25, 18.75) {+0};
		\node [style=none] (8) at (-7.25, 18.75) {+1};
		\node [style=none] (9) at (-8.25, 18.25) {+1};
		\node [style=none] (10) at (-7.25, 18.25) {+0};
		\node [style=none] (11) at (-8.75, 20.75) {};
		\node [style=none] (12) at (-8.75, 19) {};
		\node [style=none] (13) at (-6.75, 19) {};
		\node [style=none] (14) at (-6.75, 20.75) {};
		\node [style=none] (15) at (-9.25, 19.25) {};
		\node [style=none] (16) at (-8.25, 19.25) {};
		\node [style=none] (17) at (-7.25, 19.25) {};
		\node [style=none] (19) at (-9, 21) {L};
		\node [style=none] (20) at (-8, 21) {R};
		\node [style=none] (21) at (-8.25, 19.75) {L};
		\node [style=none] (22) at (-7.25, 19.75) {R};
	\end{pgfonlayer}
	\begin{pgfonlayer}{edgelayer}
		\draw [style=Unidirectional] (0) to (1);
		\draw [style=Unidirectional] (0) to (3);
		\draw [style=dashed line] (12.center) to (13.center);
		\draw [style=dashed line] (13.center) to (14.center);
		\draw [style=dashed line, in=360, out=180] (14.center) to (11.center);
		\draw [style=dashed line] (11.center) to (12.center);
		\draw [style=Unidirectional] (1) to (15.center);
		\draw [style=Unidirectional] (3) to (16.center);
		\draw [style=Unidirectional] (3) to (17.center);
	\end{pgfonlayer}
\end{tikzpicture}
\caption{Construction against the unregularized algorithm under $w_0^* \in \WC$ and $v_0^* \in \VC$. The construction is given as a 2-stage finite-horizon MDP, and adaptation to the discounted setting is trivial. State A is the initial state with no intermediate rewards. 
The offline data does not cover state C. The nature can choose between 2 MDPs that differ in the rewards for state C, and only one of the two actions has a $+1$ reward.  \label{fig:counter}}
\end{figure}
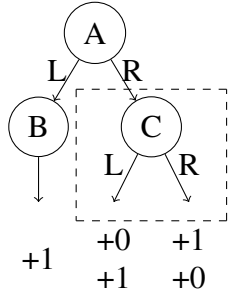

\begin{example}
Figure~\ref{fig:counter} shows a counterexample where the unregularized algorithm fails even with infinite data and the natural assumptions, that (1) there exists a $w_0^*\in\WC^*_0$ such that $w_0^*\in \WC$, (2) $v_0^* \in \VC$, and (3) data covers the optimal policy induced by $w_0^*$. In state A, both actions are equally optimal. However, since data does not cover the actions of state C, the learner should not take \texttt{R} in state A as it can end up choosing a highly suboptimal action in state C with constant probability if nature randomizes over the 2 possible MDP instances. 

We now show that the unregularized algorithm (\eqref{prob:empirical} with $\alpha=0$) can choose \texttt{R} in state A, even with infinite data and ``nice'' $d^D$, $\VC$, $\WC$. In particular, the two possible MDPs share the same optimal value function $v_0^*(A) = v_0^*(B) = v_0^*(C) = 1$, which is the only function in $\VC$ so we always have $v_0^* \in \VC$. $d^D$ covers state-action pairs (A, \texttt{L}), (A, \texttt{R}), B. $\WC$ also contains 2 functions: $w_1$ is such that $w_1 \cdot d^D$ is uniform over (A, \texttt{L}), B, which is the occupancy of the optimal policy $\pi^*(A) = L$. $w_2$ is such that $w_2 \cdot d^D$ is uniform over (A, \texttt{R}), B, which induces a policy that chooses \texttt{R} in state A. However, the unregularized algorithm cannot distinguish between $w_1$ and $w_2$ even with infinite data (i.e., with objective $L_0(v, w)$). This is because $w_1$ and $w_2$ only differs in the action choice in state A, but $v_0^*(B) = v_0^*(C) = 1$ so the unregularized objective is the same for $w_1$ and $w_2$. 
\end{example}

\section{Main results}
\label{sec:results}
In this section we present the main sample-complexity guarantees of our algorithm 
under only realizability assumptions for $\VC$ and $\WC$ and single-policy concentrability of data. We will start with the analyses that assume perfect optimization and that the behavior policy $\pi_D$ is known (Section~\ref{sec:results regularized}), allowing us to present the result in a clean manner. 
We then extend our analyses in several directions: Section~\ref{sec:results agnostic} handles  approximation and optimization errors; Section~\ref{sec:results constrained} removes the concentrability assumption altogether and allows us to compete with the best covered policy;  Section~\ref{sec:results BC} uses behavior cloning to handle an unknown behavior policy. 

\subsection{Sample-efficiency with only realizability and weak concentrability}
\label{sec:results regularized}
We introduce the needed assumptions before stating the sample-efficiency guarantees to our algorithm. The first assumption is about data coverage, that it covers the occupancy induced by a (possibly regularized) optimal policy.
 
\begin{assumption}[$\pia$-concentrability]
	\label{ass:concentrability}	
	\begin{equation}
		\frac{{\da}(s,a)}{d^D(s,a)}\leq\BW, \forall s\in\mathcal{S},a\in\mathcal{A}.
	\end{equation}
\end{assumption}
Two remarks are in order:
\begin{enumerate}[leftmargin=*,itemsep=0pt]
\item Assumption~\ref{ass:concentrability}	 is parameterized by $\alpha$, and we will bind it to specific values when we state the guarantees. 
\item This assumption is necessary if we want to compete with the optimal policy of the MDP, $\pi^*$, and is already much weaker than all-policy concentrability \citep{munos2008finite, farahmand2010error, chen2019information}. That said, ideally we should not even need such an assumption, as long as we are willing to compete with the best policy covered by data instead of the truly optimal policy \citep{liu2020provably, xie2021bellman}. We will actually show how to achieve this in Section~\ref{sec:results constrained}. 
\end{enumerate}

We then introduce the realizability assumptions on our function approximators $\VC$ and $\WC$, which are very straightforward. For now we assume exact realizability, and Section~\ref{sec:results agnostic} handles misspecification errors.

\begin{assumption}[Realizability of ${\VC}$]
	\label{ass:V realize}
	Suppose ${\va}\in {\VC}$. 
\end{assumption}  
\begin{assumption}[Realizability of ${\WC}$]
	\label{ass:W realize}
	Suppose ${\wa}\in {\WC}$.
\end{assumption}

The above 3 assumptions are the major assumptions we need. (The rest are standard technical assumptions on boundedness.) Comparing them to existing results, we emphasize that all existing analyses require ``$\forall''$ quantifiers  in the assumptions either about the data (e.g., all-policy concentrability) or about the function classes (e.g., Bellman-completeness). See Table~\ref{tab:main results} for a comparison to various approaches considered in the literature.


Having stated the major assumptions, we now turn to the routine ones on function boundedness. 
\begin{assumption}[Boundedness of ${\WC}$]
\label{ass:W bound}
Suppose $0\leq w(s,a)\leq {\BW}$ for any $s\in\mathcal{S}, a\in\mathcal{A}, w\in {\WC}$.
\end{assumption}
Here we reuse $\BW$ from Assumption~\ref{ass:concentrability}. Since $\da/d^D = \wa \in \WC$ by Assumption~\ref{ass:W realize}, in general the magnitude of $\WC$ should be larger than that of $\da/d^D$, and we use the same upper bound to eliminate unnecessary notations and improve readability.

The next assumption characterizes the regularizer $f$. These are not really assumptions as we can make concrete choices of $f$ that satisfy them (e.g., a simple quadratic function; see Remark~\ref{rem:sample regularized}), but for now we leave them as assumptions to keep the analysis general. 
\begin{assumption}[Properties of $f$]
\label{ass:f prop}	
Suppose $f$ satisfies the following properties:
\begin{itemize}
\item \textbf{Strong Convexity}: $f$ is $M_f$-strongly-convex.
\item \textbf{Boundedness}:
\begin{align}
	&|f'(x)|\leq {\BFP},\forall\quad 0\leq x\leq {\BW},\\
	&|f(x)|\leq {\BF}, \forall\quad 0\leq x\leq {\BW}.
\end{align}
\item \textbf{Non-negativity}: $f(x)\geq0$ for any $x\in\mathbb{R}$.
\end{itemize}
\end{assumption}
\begin{remark}
The non-negativity is trivial since $f$ is strongly convex and we can always add a constant term to ensure non-negativity holds. Besides, we can get rid of non-negativity with the results in Section~\ref{sec:results constrained}.
\end{remark}

Assumption~\ref{ass:f prop} allows us to bound $\|\va\|_\infty \le \frac{\alpha {\BFP}+1}{1-\gamma}$ (see Lemma~\ref{lem:bound tilde nu} in  Section~\ref{sec:analysis}); in the same spirit as Assumption~\ref{ass:W bound}, we assume:
\begin{assumption}[Boundedness of ${\VC}$]
\label{ass:V bound}
Suppose $\Vert v\Vert_{\infty}\leq {\BV}:=\frac{\alpha {\BFP}+1}{1-\gamma}$ for any $v\in {\VC}$.
\end{assumption}

With the above assumptions, we have Theorem~\ref{thm:sample regularized} to show that \mainalg can learn the optimal density ratio and policy for the regularized problem \eqref{eq:constrained}\eqref{eq:bellman flow 1} with polynomial samples, whose proof is deferred to Section~\ref{sec:analysis}. To simplify writing, we introduce the following notation for the statistical error term that arises purely from concentration inequalities:
\begin{definition}
\begin{align}
&\SE_{n,n_0,\alpha}(B_w,B_f,B_v,B_e)=(1-\gamma)B_v\cdot\left(\frac{2\log\frac{4|{\VC}|}{\delta}}{n_0}\right)^{\frac{1}{2}}+\left(\alpha B_f+{B_w}B_e\right)\cdot\left(\frac{2\log\frac{4|{\VC}||{\WC}|}{\delta}}{n}\right)^{\frac{1}{2}}.
\end{align}
\end{definition}
$\SE$ characterizes the statistical error $\hat{L}_{\alpha}(v,w)-L_{\alpha}(v,w)$ based on concentration inequalities, and the two terms in its definition correspond to 
using $\mathcal{D}_0$ to approximate $(1-\gamma)\mathbb{E}_{s\sim \mu_0}[{v}(s)]$ and $\mathcal{D}$ for $-\alpha\mathbb{E}_{(s,a)\sim d^D}[f(w(s,a))]+\mathbb{E}_{(s,a)\sim d^D}[w(s,a)e_{{v}}(s,a)]$, respectively. 
Using this shorthand, we state our first guarantee, that the learned $\hat w$ and the extracted policy $\hat\pi$ will be close to the solution of the regularized problem \eqref{eq:constrained}\eqref{eq:bellman flow 1}, $\wa$ and $\pia$, respectively. 

\begin{theorem}[Sample complexity of learning $\pi_\alpha^*$]
	\label{thm:sample regularized}
	Fix $\alpha>0$. Suppose Assumptions~\ref{ass:concentrability},\ref{ass:V realize},\ref{ass:W realize},\ref{ass:W bound},\ref{ass:f prop},\ref{ass:V bound} hold for the said $\alpha$. Then with at least probability $1-\delta$, the output of \mainalg satisfies:
	\begin{align}
		&J(\pia)-J(\hat{\pi})\leq\frac{1}{1-\gamma}\mathbb{E}_{s\sim {\da}}[\Vert{\pia}(\cdot|s)-\hat{\pi}(\cdot|s)\Vert_1]\notag\\
		&\leq\frac{2}{1-\gamma}\Vert \hat{w}-{\wa}\Vert_{2,d^D}\leq\frac{4}{1-\gamma}\sqrt{\frac{\SE_{n,n_0,\alpha}(\BW,\BF,\BV,\BE)}{\alpha M_f}},\label{eq:thm1 1}
	\end{align}
	where 
	${\BE}:=(1+\gamma){\BV}+1$.
\end{theorem}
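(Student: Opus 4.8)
The plan is to exploit the strong concavity of the population objective $L_\alpha(\cdot, w)$ maximized over $w$ (after the inner minimization over $v$), which comes from the $-\alpha \mathbb{E}_{d^D}[f(w)]$ term with $f$ being $M_f$-strongly convex. The three inequalities in \eqref{eq:thm1 1} should be handled in sequence, from the last (rightmost) to the first. I would proceed as follows.

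\textbf{Step 1: the rightmost inequality (the crux).} The goal is to show $\|\hat w - \wa\|_{2,d^D}^2 \lesssim \SE_{n,n_0,\alpha}/(\alpha M_f)$. First I would argue that the population function $g(w) := \min_v L_\alpha(v,w)$ is $\alpha M_f$-strongly concave in $w$ with respect to $\|\cdot\|_{2,d^D}$: the term $-\alpha\mathbb{E}_{d^D}[f(w(s,a))]$ contributes exactly this curvature, while the remaining terms $(1-\gamma)\mathbb{E}_{\mu_0}[v] + \mathbb{E}_{d^D}[w\, e_v]$ are linear in $w$ for fixed $v$, and a pointwise minimum of concave (here, linear-plus-strongly-concave) functions preserves strong concavity. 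Since $\wa$ maximizes $g$, strong concavity gives $g(\wa) - g(\hat w) \ge \frac{\alpha M_f}{2}\|\hat w - \wa\|_{2,d^D}^2$. It then remains to upper bound $g(\wa) - g(\hat w)$ by $O(\SE)$. This is where realizability (Assumptions~\ref{ass:V realize},\ref{ass:W realize}) and the uniform concentration bound $|\hat L_\alpha(v,w) - L_\alpha(v,w)| \le \SE$ over $\VC\times\WC$ (a Hoeffding/union-bound argument using the boundedness Assumptions~\ref{ass:W bound},\ref{ass:f prop},\ref{ass:V bound}, with $B_e = (1+\gamma)B_v+1$ bounding $e_v$) come in. The standard saddle-point/optimality chain is: using that $(\hat w,\hat v)$ is the empirical saddle point and $\wa\in\WC$, $\va\in\VC$,
\begin{align}
g(\wa) &= \min_v L_\alpha(v,\wa) \le L_\alpha(\hat v, \wa) \le \hat L_\alpha(\hat v,\wa) + \SE \le \hat L_\alpha(\hat v,\hat w) + \SE \notag\\
&\le \hat L_\alpha(v,\hat w) + \SE \quad\text{for all } v\in\VC, \text{ hence } \le L_\alpha(v,\hat w) + 2\SE,
\end{align}
and minimizing the last line over $v\in\VC$ (and using $\va$ or whichever $v$ achieves the inner min of $L_\alpha(\cdot,\hat w)$ lies in $\VC$ — here one needs that the unconstrained inner minimizer of $L_\alpha(\cdot,\hat w)$ is realizable, or alternatively that $g(\hat w)$ is attained within $\VC$; if not, a short extra argument bounding $\min_{v\in\VC}L_\alpha(v,\hat w) - g(\hat w)$ is needed — I expect the paper sidesteps this via an additional realizability/closure property or by noting the relevant minimizer is $v$-realizable) yields $g(\wa) \le g(\hat w) + 2\SE$. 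Combining with the strong-concavity lower bound gives $\|\hat w - \wa\|_{2,d^D}^2 \le 4\SE/(\alpha M_f)$, i.e. the rightmost inequality in \eqref{eq:thm1 1}.

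\textbf{Step 2: the middle inequality.} Here I would bound $\mathbb{E}_{s\sim\da}[\|\pia(\cdot|s) - \hat\pi(\cdot|s)\|_1]$ by $2\|\hat w - \wa\|_{2,d^D}$. Write $\da(s,a) = \wa(s,a)d^D(s,a)$ and $\hat d(s,a) := \hat w(s,a)d^D(s,a)$; both $\pia$ and $\hat\pi$ are the conditional action distributions of these occupancy-like objects (modulo the uniform tie-breaking on zero-mass states, which contributes nothing under $\da$). A direct computation shows $\mathbb{E}_{s\sim\da}[\|\pia(\cdot|s) - \hat\pi(\cdot|s)\|_1] = \sum_{s,a} \da(s)\,|\pia(a|s) - \hat\pi(a|s)|$, and using $\pia(a|s) = \da(s,a)/\da(s)$ together with the elementary inequality relating the $\ell_1$ distance of normalized vectors to the $\ell_1$ distance of the unnormalized ones, this is $\le 2\sum_{s,a}|\da(s,a) - \hat d(s,a)| = 2\|\wa - \hat w\|_{1,d^D} \le 2\|\wa - \hat w\|_{2,d^D}$ by Cauchy–Schwarz (since $d^D$ is a probability distribution). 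The factor-of-two lemma for normalized distributions is the one routine fact to invoke carefully here.

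\textbf{Step 3: the leftmost inequality.} Finally, $J(\pia) - J(\hat\pi) \le \frac{1}{1-\gamma}\mathbb{E}_{s\sim\da}[\|\pia(\cdot|s) - \hat\pi(\cdot|s)\|_1]$ is a standard performance-difference bound: $J(\pi) = \mathbb{E}_{(s,a)\sim d^\pi}[r]$, and comparing $J(\pia)$ to $J(\hat\pi)$ one uses the performance-difference lemma to express the gap via $\mathbb{E}_{s\sim d^{\pia}}\langle Q^{\hat\pi}(s,\cdot), \pia(\cdot|s) - \hat\pi(\cdot|s)\rangle$ (or the symmetric version), then bounds $\|Q^{\hat\pi}\|_\infty \le \frac{1}{1-\gamma}$ (rewards in $[0,1]$) and applies Hölder to pull out $\|\pia(\cdot|s) - \hat\pi(\cdot|s)\|_1$, with the $d^{\pia} = \da$ weighting appearing naturally.

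\textbf{Main obstacle.} The delicate point is Step 1 — specifically, making the strong-concavity argument rigorous for the min-over-$v$ function $g$ and, relatedly, controlling the gap between the empirical saddle-point objective and $g(\hat w)$ when the inner minimization over $v$ is restricted to $\VC$ rather than all of $\mathbb{R}^{|\mathcal{S}|}$. If the inner minimizer for a generic $w$ is not guaranteed realizable, one must either invoke an extra structural fact (e.g. that $\va$-type functions suffice, or a closedness property implicit in the setup) or absorb the discrepancy into the error term; I would expect the paper to have arranged the assumptions (possibly via the boundedness of $\VC$ matching the a priori bound on the true inner minimizer, Assumption~\ref{ass:V bound}) so that this gap is either zero or harmlessly small, and I would follow that route. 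The concentration bound establishing $|\hat L_\alpha - L_\alpha| \le \SE$ uniformly is otherwise routine given the boundedness assumptions.
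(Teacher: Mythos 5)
Your Steps 2 and 3 coincide with the paper's argument (its lemma bounding $\mathbb{E}_{s\sim\da}[\Vert\pia(\cdot|s)-\hat{\pi}(\cdot|s)\Vert_1]$ by $2\Vert\hat{w}-\wa\Vert_{2,d^D}$, and the performance-difference lemma), and your uniform concentration bound is exactly the paper's Lemma~\ref{lem:hat L conc}. The genuine gap is in Step 1, and it is precisely the obstacle you flag — but your guess at how the paper resolves it is wrong, and your optimality chain also contains an unjustified step. First, the inequality $\hat{L}_{\alpha}(\hat{v},\wa)\leq\hat{L}_{\alpha}(\hat{v},\hat{w})$ does not follow from the algorithm's definition: $\hat{w}$ maximizes $w\mapsto\min_{v\in\VC}\hat{L}_{\alpha}(v,w)$, not $w\mapsto\hat{L}_{\alpha}(\hat{v},w)$ for the particular $\hat{v}=\arg\min_{v\in\VC}\hat{L}_{\alpha}(v,\hat{w})$; one must instead insert $\hat{v}(\wa):=\arg\min_{v\in\VC}\hat{L}_{\alpha}(v,\wa)$ and use $\hat{L}_{\alpha}(\hat{v}(\wa),\wa)\leq\hat{L}_{\alpha}(\hat{v},\hat{w})$. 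Second, and more fundamentally, routing the argument through $g(w):=\min_{v}\La(v,w)$ forces you to pass from $\min_{v\in\VC}\La(v,\hat{w})$ down to $g(\hat{w})$, i.e., to assume that the unconstrained inner minimizer of $\La(\cdot,\hat{w})$ for the data-dependent $\hat{w}$ lies in (or is well approximated by) $\VC$. That is exactly a completeness-type assumption, which Theorem~\ref{thm:sample regularized} does not make and which the paper is at pains to avoid; this gap cannot be closed from realizability of $\va$ alone.

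The paper's resolution is not an extra assumption but a different comparison: it never evaluates $g(\hat{w})$. It bounds $\La(\va,\wa)-\La(\va,\hat{w})$ with the first argument frozen at $\va$, via the chain $\La(\va,\wa)\leq\La(\hat{v}(\wa),\wa)\leq\hat{L}_{\alpha}(\hat{v}(\wa),\wa)+\lstat\leq\hat{L}_{\alpha}(\hat{v},\hat{w})+\lstat\leq\hat{L}_{\alpha}(\va,\hat{w})+\lstat\leq\La(\va,\hat{w})+2\lstat$. The only unconstrained minimization invoked is $\La(\va,\wa)\leq\La(v,\wa)$ for all $v\in\mathbb{R}^{|\mathcal{S}|}$, which holds because $(\va,\wa)$ is a saddle point of $\La$ over the full spaces — here the quantifier points the harmless way, so no completeness is needed. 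Strong concavity is then applied to the single slice $w\mapsto\La(\va,w)$, which is $\alpha M_f$-strongly concave in $\Vert\cdot\Vert_{2,d^D}$ and, again by the saddle-point property, maximized over $\{w\geq0\}$ at $\wa$; this yields $\frac{\alpha M_f}{2}\Vert\hat{w}-\wa\Vert_{2,d^D}^2\leq\La(\va,\wa)-\La(\va,\hat{w})\leq2\lstat$ and hence the rightmost inequality of \eqref{eq:thm1 1}. This fixed-$\va$ decomposition is the missing idea; the remainder of your outline is sound.
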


\begin{remark}[Sample complexity for quadratic regularization]
\label{rem:sample regularized}
Theorem~\ref{thm:sample regularized} shows that \mainalg can obtain a near-optimal policy for regularized problem \eqref{eq:constrained}\eqref{eq:bellman flow 1} with sample complexity $O(n_0+n_1)=\tilde{O}\left(\frac{(\alpha {\BF}+{\BW}{\BE})^2}{(1-\gamma)^4(\alpha M_f)^2\epsilon^4}\right)$. However, there might be implicit dependence on $1-\gamma,\alpha M_f,{\BW}$ in the constants ${\BE}$. To reveal these terms, we consider a simple choice of $f(x)=\frac{M_f}{2}x^2$. Then we have ${\BE}=O(\frac{\alpha M_f({\BW})^2+{\BW}}{1-\gamma}),{\BF}=O(\alpha M_f({\BW})^2)$, leading to a sample complexity $\tilde{O}\Big(\frac{({\BW})^2}{(1-\gamma)^6(\alpha M_f)^2\epsilon^4}+\frac{({\BW})^4}{(1-\gamma)^6\epsilon^4}\Big)$.
\end{remark}

Moreover, \mainalg can even learn a near-optimal policy for the unregularized problem (\ref{prob:original problem}) efficiently by controlling the magnitude of $\alpha$ in \mainalg.
Corollary~\ref{cor:sample unregularized} characterizes the sample complexity of \mainalg for the unregularized problem \eqref{prob:original problem} without any approximation/optimization error:
\begin{corollary}[Sample complexity of competing with $\pi_0^*$]
	\label{cor:sample unregularized}
	Fix any $\epsilon>0$. Suppose there exists $d^*_0\in D^*_0$ that satisfies Assumption~\ref{ass:concentrability} with $\alpha=0$. Besides, assume that Assumptions~\ref{ass:concentrability},\ref{ass:V realize},\ref{ass:W realize},\ref{ass:W bound},\ref{ass:f prop},\ref{ass:V bound} hold for $\alpha=\alpha_{\epsilon}:=\frac{\epsilon}{2{\BFZ}}$. Then if 
	\begin{align}
	&n\geq\frac{C_1\left(\epsilon {\BFE}+2{\BWE}{\BEE}{\BFZ}\right)^2}{\epsilon^6M_f^2(1-\gamma)^4}\cdot\log\frac{4|\VC||\WC|}{\delta}, \\
	&n_0\geq\frac{C_1\left(2{\BVE}{\BFZ}\right)^2}{\epsilon^6M_f^2(1-\gamma)^2}\cdot\log\frac{4|\VC|}{\delta},
	\end{align}
	the output of \mainalg with input $\alpha=\alpha_{\epsilon}$ satisfies
	\begin{equation}
		J(\piz)-J(\hat{\pi})\leq\epsilon,
	\end{equation}
	with at least probability $1-\delta$, where $C_1$ is some universal positive constants and $\piz$ is the optimal policy inducing $d^*_0$.
\end{corollary}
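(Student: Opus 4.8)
The plan is to use Theorem~\ref{thm:sample regularized} as a black box with the particular choice $\alpha = \alpha_\epsilon := \frac{\epsilon}{2\BFZ}$, and then to pay a small bias for moving from the $\alpha_\epsilon$-regularized optimal policy $\pi^*_{\alpha_\epsilon}$ back to the truly optimal policy $\pi^*_0$. Concretely, I would decompose
\begin{align}
J(\piz) - J(\hat\pi) = \underbrace{\big(J(\piz) - J(\pie)\big)}_{\text{(I): regularization bias}} + \underbrace{\big(J(\pie) - J(\hat\pi)\big)}_{\text{(II): estimation error}}.
\end{align}
Term (II) is exactly what Theorem~\ref{thm:sample regularized} controls: under the hypothesized Assumptions~\ref{ass:concentrability}--\ref{ass:V bound} at $\alpha = \alpha_\epsilon$, it is at most $\frac{4}{1-\gamma}\sqrt{\SE_{n,n_0,\alpha_\epsilon}(\BWE,\BFE,\BVE,\BEE)/(\alpha_\epsilon M_f)}$. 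Plugging in $\alpha_\epsilon = \epsilon/(2\BFZ)$ and the stated lower bounds on $n$ and $n_0$ (which are designed precisely so that each of the two terms in $\SE$ is of order $\epsilon^2 (1-\gamma)^2 \alpha_\epsilon M_f$ up to the universal constant $C_1$), this bound becomes at most $\epsilon/2$. This is a routine substitution once $C_1$ is chosen large enough.

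The main work is bounding Term (I), the regularization bias. Here I would argue that $\piz$ (equivalently the feasible occupancy $\dz \in D^*_0$ satisfying Assumption~\ref{ass:concentrability} with $\alpha=0$) is a feasible point for the $\alpha_\epsilon$-regularized LP \eqref{eq:constrained}\eqref{eq:bellman flow 1}, so its regularized objective value is a lower bound on that of the regularized optimum $\da$ at $\alpha=\alpha_\epsilon$. Writing $F_\alpha(d) := \mathbb{E}_{(s,a)\sim d}[r(s,a)] - \alpha D_f(d\Vert d^D)$, feasibility of both $\dz$ and $\de$ and optimality of $\de$ for $F_{\alpha_\epsilon}$ give
\begin{align}
J(\piz) = \mathbb{E}_{\dz}[r] = F_{\alpha_\epsilon}(\dz) + \alpha_\epsilon D_f(\dz \Vert d^D) \le F_{\alpha_\epsilon}(\de) + \alpha_\epsilon D_f(\dz \Vert d^D) \le \mathbb{E}_{\de}[r] + \alpha_\epsilon D_f(\dz\Vert d^D),
\end{align}
where the last step uses $D_f(\de \Vert d^D) \ge 0$ (non-negativity of $f$, Assumption~\ref{ass:f prop}). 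Since $J(\pie) = \mathbb{E}_{\de}[r]$, this yields $J(\piz) - J(\pie) \le \alpha_\epsilon D_f(\dz \Vert d^D)$. Finally I would bound $D_f(\dz\Vert d^D) = \mathbb{E}_{(s,a)\sim d^D}[f(\wz(s,a))]$; since $\dz$ satisfies Assumption~\ref{ass:concentrability} with $\alpha=0$, we have $\wz(s,a) = \dz(s,a)/d^D(s,a) \le \BW$ pointwise, so the boundedness part of Assumption~\ref{ass:f prop} (applied with the bound $\BFZ$ at $\alpha=0$) gives $D_f(\dz\Vert d^D) \le \BFZ$. Hence Term (I) $\le \alpha_\epsilon \BFZ = \epsilon/2$.

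Adding the two halves gives $J(\piz) - J(\hat\pi) \le \epsilon/2 + \epsilon/2 = \epsilon$ with probability at least $1-\delta$, as claimed. The only mild subtlety — really the only place to be careful — is the bookkeeping of which boundedness constants ($\BFZ$ versus $\BFE$, etc.) appear at $\alpha=0$ versus $\alpha=\alpha_\epsilon$, and making sure the hypotheses of Theorem~\ref{thm:sample regularized} are invoked at the correct value $\alpha_\epsilon$; the rest is substitution of the sample-size bounds into $\SE$. I do not anticipate any genuinely hard step: the corollary is a clean ``regularization path'' argument layered on top of the theorem, and the $D_f \ge 0$ inequality together with the LP feasibility/optimality comparison does all the conceptual work.
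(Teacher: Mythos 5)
Your proposal is correct and follows essentially the same route as the paper's own proof: decompose $J(\piz)-J(\hat\pi)$ into the regularization bias $J(\piz)-J(\pie)$, bounded by $\alpha_\epsilon \BFZ=\epsilon/2$ via optimality of $\de$ for the regularized objective together with non-negativity and boundedness of $f$, plus the estimation error $J(\pie)-J(\hat\pi)\le\epsilon/2$ from Theorem~\ref{thm:sample regularized} under the stated sample sizes. No gaps.
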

\begin{proof}[Proof sketch]
	The key idea is to let $\alpha$ be sufficiently small so that $J({\pia})$ and $J({\piz})$ is close. Then we can simply apply Theorem~\ref{thm:sample regularized} and bound $J(\hat{\pi})-J({\pia})$. See Appendix~\ref{proof:cor sample unregularized} for details.
\end{proof}
\begin{remark}[Quadratic regularization]\label{rem:sample_unreg}
	Similarly as Remark~\ref{rem:sample regularized}, the sample complexity of competing with $\pie$ under quadratic $f$  is $\tilde{O}\left(\frac{({\BWZ})^4({\BWE})^2}{\epsilon^6(1-\gamma)^{6}}\right)$.
\end{remark}
\mainalg is originally designed for the regularized problem. Therefore, when applying it to the unregularized problem the sample complexity degrades from $\tilde{O}\left(\frac{1}{\epsilon^4}\right)$ to $\tilde{O}\left(\frac{1}{\epsilon^6}\right)$. However, the sample complexity remains polynomial in all relevant quantities.
Compared to Theorem~\ref{thm:sample regularized}, Corollary~\ref{cor:sample unregularized} requires concentrability for policy $\piz$ in addition to $\pie$, so technically we require ``two-policy'' instead of single-policy concentrability for now. 
While this is still much weaker than all-policy concentrability \citep{chen2019information}, we show in Section~\ref{sec:results constrained} how to compete with $\piz$ with only single-policy concentrability. 

\begin{remark}
	When $\epsilon$ shrinks, the realizability assumptions for Corollary~\ref{cor:sample unregularized} also need to hold for regularized solutions with smaller $\alpha$. That said, in the following discussion (Proposition~\ref{prop:LP stability}), we will show that when $\epsilon$ is subsequently small, the realizability assumptions will turn to be with respect to the unregularized solutions.
\end{remark}

\paragraph{Comparison with existing algorithms.}
Theorem~\ref{thm:sample regularized} and Corollary~\ref{cor:sample unregularized} display an exciting result that \mainalg obtains a near optimal policy for regularized problem \eqref{eq:constrained}\eqref{eq:bellman flow 1} and unregularized problem \eqref{prob:original problem} using polynomial samples with only realizability and weak data-coverage assumptions. The literature has demonstrated hardness of learning offline RL problems and existing algorithms either rely on the completeness assumptions \citep{xie2020q,xie2021bellman,du2021bilinear} or extremely strong data assumption \citep{pmlr-v139-xie21d}. Our results show for the first time that offline RL problems can be solved using a polynomial number of samples without these assumptions.


\paragraph{High accuracy regime ($\epsilon\to0$).}
Corollary~\ref{cor:sample unregularized} requires weak concentration and realizability with respect to the optimizers of the regularized problem \eqref{eq:constrained}\eqref{eq:bellman flow 1}. A natural idea is to consider whether the concentration and realizability instead can be with respect to the optimizer of the unregularized problem $(\vz,\wz)$. Inspired by the stability of linear programming \citep{mangasarian1979nonlinear}, we identify the high accuracy regime ($\epsilon\to0$) where concentrability and realizability with respect to $\wz$ can guarantee \mainalg to output an $\epsilon$-optimal policy as shown in the following proposition:
\begin{proposition}
	\label{prop:LP stability}
	There exists $\bar{\alpha}>0$ and $w^*\in \WCZ$ such that when $\alpha\in[0,\bar{\alpha}]$ we have
	\begin{equation}
		\wa=w^*,\Vert\va-\vz\Vert_{2,d^D}\leq C\alpha,
	\end{equation}
	where $C=\frac{\BFPZ+\frac{2}{\bar{\alpha}}}{1-\gamma}$.
\end{proposition}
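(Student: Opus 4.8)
The plan is to read this as a statement about \emph{stability of linear programs under a strongly convex perturbation} \citep{mangasarian1979nonlinear}, applied to the occupancy LP behind \eqref{eq:constrained}--\eqref{eq:bellman flow 1}. Write the regularized primal as $\max_{d\in\mathcal{C}}\langle r,d\rangle-\alpha D_f(d\|d^D)$, where $\mathcal{C}=\{d\ge 0:\ d\text{ satisfies }\eqref{eq:bellman flow 1}\}$ is the compact Bellman-flow polytope, $\langle r,d\rangle=\mathbb{E}_{(s,a)\sim d}[r(s,a)]$, and the maximizer is $\da$. At $\alpha=0$ this is the unregularized LP, whose maximizer set $D_0^*$ is a nonempty face of $\mathcal{C}$ on which $\langle r,\cdot\rangle\equiv J(\piz)$. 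Since $D_f(\cdot\|d^D)$ is strictly convex ($f$ is strongly convex) and $D_0^*$ is compact and convex, there is a unique $d^*:=\arg\min_{d\in D_0^*}D_f(d\|d^D)$; set $w^*:=d^*/d^D\in\WCZ$, and recall $\vz=V^{\pi^*}$ is the (unique) optimal value function. The two things to prove are $\da=d^*$ (i.e.\ $\wa=w^*$) for all small $\alpha$, and the quantitative bound on $\va-\vz$.

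For exact recovery of $w^*$ I would combine two ingredients. First, a \textbf{weak sharp minimum} for the LP: since $\langle r,\cdot\rangle$ attains its max over the polytope $\mathcal{C}$ exactly on the face $D_0^*$, a Hoffman/Robinson-type error bound gives a constant $c_0>0$, depending only on the MDP, with $J(\piz)-\langle r,d\rangle\ge c_0\,\mathrm{dist}(d,D_0^*)$ for all $d\in\mathcal{C}$. Second, a \textbf{bounded regularizer gradient}: on the sublevel region $\{d\in\mathcal{C}:\ \langle r,d\rangle\le J(\piz),\ D_f(d\|d^D)\le D_f(d^*\|d^D)\}$ the ratios $d(s,a)/d^D(s,a)$ stay bounded, so $\|\nabla_d D_f(d\|d^D)\|=\|f'(d/d^D)\|\le L$ for a constant $L$ controlled by $\BFPZ$ and $B_w$. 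Now fix $\alpha>0$: optimality of $\da$ gives $\langle r,\da\rangle-\alpha D_f(\da\|d^D)\ge\langle r,d^*\rangle-\alpha D_f(d^*\|d^D)$, hence $\langle r,\da\rangle\le J(\piz)$ and $D_f(\da\|d^D)\le D_f(d^*\|d^D)$, so $\da$ lies in this region. If $\da\notin D_0^*$, project it to $\hat d\in D_0^*$ and move along $d_t=(1-t)\da+t\hat d\in\mathcal{C}$: the first-order gain in $\langle r,\cdot\rangle$ is $t\big(J(\piz)-\langle r,\da\rangle\big)\ge t\,c_0\,\mathrm{dist}(\da,D_0^*)$, while the loss in $\alpha D_f$ along the segment is at most $\alpha\,t\,L\,\mathrm{dist}(\da,D_0^*)$ by the mean-value inequality (for $t$ small). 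For $\alpha<\bar\alpha:=c_0/L$ the net change is strictly positive, contradicting maximality; hence $\da\in D_0^*$, so $\langle r,\da\rangle=J(\piz)$, so $D_f(\da\|d^D)\le D_f(d^*\|d^D)$, and strict convexity forces $\da=d^*$, $\wa=w^*$.

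For the bound on $\Delta:=\va-\vz$ I would use the saddle/KKT conditions of \eqref{prob:maximin2}. Pointwise optimality in $w$ gives, with $e_v(s,a)=r(s,a)+\gamma(Pv)(s,a)-v(s)$, that $e_{\va}(s,a)=\alpha f'(w^*(s,a))$ whenever $w^*(s,a)>0$ (i.e.\ on $\supp(d^*)$) and $e_{\va}(s,a)\le\alpha f'(0)$ otherwise; the $\alpha=0$ analogues are the Bellman-optimality identities $e_{\vz}(s,a)=0$ on $\supp(d^*)$ and $\max_a e_{\vz}(s,a)=0$ at every state. On the state-support $S^*$ of $d^*$, averaging the on-support equations under $\pia=\pi_{d^*}$ (which plays only actions with $w^*(s,a)>0$) and using that $P^{\pia}$ maps $S^*$ into itself gives $\Delta|_{S^*}=-\alpha(I-\gamma P^{\pia})^{-1}\mathbb{E}_{\pia}[f'(w^*)]\big|_{S^*}$, hence $\|\Delta\|_{\infty,S^*}\le\alpha\BFPZ/(1-\gamma)$. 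For a data-covered state $s\notin S^*$ (which always has $d^D(s)>0$, while $S^*\subseteq\supp(d^D)$ by concentrability), dual optimality drives $\max_a e_{\va}(s,a)$ to the threshold $\alpha f'(0)$, which with $\max_a e_{\vz}(s,a)=0$ and nonexpansiveness of $\max_a$ yields $|\Delta(s)|\le\alpha|f'(0)|+\gamma\|\Delta\|_\infty$; combining the two regimes bounds $\|\Delta\|_\infty$ by a multiple of $\alpha$, and since $\|\va-\vz\|_{2,d^D}\le\|\Delta\|_\infty$ the slack $1/\bar\alpha$ in $C=(\BFPZ+2/\bar\alpha)/(1-\gamma)$ comfortably absorbs the off-support contribution.

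The main obstacle is the first step: proving \emph{exact} recovery $\da=d^*$ rather than merely $\da\to d^*$. A crude comparison of the regularized objectives only gives $\mathrm{dist}(\da,D_0^*)=O(\alpha)$; collapsing this to $0$ for every $\alpha\le\bar\alpha$ is precisely where one must couple the LP's sharpness constant $c_0$ against a uniform gradient bound $L$ for the regularizer over the sublevel region, and — so that the constant in the statement comes out as claimed — verify that $L$ (hence $\bar\alpha=c_0/L$) is governed by the intrinsic quantities $\BFPZ$, $B_w$ and the Hoffman constant of the flow polytope, not by $\min_{s,a}d^D(s,a)$. A secondary technical point is justifying that the dual optimum pins the Bellman residuals of $\va$ to the threshold $\alpha f'(0)$ at states outside $S^*$, which is what underlies the $\va$-estimate there.
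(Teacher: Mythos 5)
Your proposal is correct in outline but follows a genuinely different route from the paper. The paper proves both claims at once via the Mangasarian--Meyer KKT construction: it rewrites the problem with an explicit multiplier $\eta\ge0$ for the constraint $w\ge0$, shows the resulting multiplier $\lambda^*_\alpha$ equals $\va$ (Lemma~\ref{lem:Lagrangian equivalence}), forms an auxiliary program minimizing $\mathbb{E}_{(s,a)\sim d^D}[f(w(s,a))]$ over a constraint set equal to $\WCZ$, and then verifies directly that $\bigl(\wa=w^*,\ \lambda^*_\alpha=(1-\alpha\xi^*_{aux})\lambda^*_0+\alpha\lambda^*_{aux},\ \eta^*_\alpha=(1-\alpha\xi^*_{aux})\eta^*_0+\alpha\eta^*_{aux}\bigr)$ satisfies the regularized KKT system for all $\alpha\le\bar\alpha:=1/\xi^*_{aux}$; the bound on $\va-\vz$ then falls out of the explicit formula $\va-\vz=\alpha(\lambda^*_{aux}-\xi^*_{aux}\lambda^*_0)$ together with Lemma~\ref{lem:bound tilde nu} applied at $\alpha=\bar\alpha$, which is exactly where the $2/\bar\alpha$ in $C$ comes from. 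You instead prove exact recovery geometrically (weak sharp minimum of the LP beating a uniform gradient bound on the regularizer over the relevant sublevel region, then strict convexity on the optimal face) and bound $\va-\vz$ by differencing the stationarity conditions $e_{\va}(s,a_s)=\alpha f'(w^*(s,a_s))$ and $e_{\vz}(s,a_s)=0$ along on-support actions and contracting through the factor $\gamma$. Both halves are sound, and your value bound actually gives the cleaner constant $\alpha\BFPZ/(1-\gamma)$. What the paper's construction buys is that $\bar\alpha$ and the multipliers are produced explicitly, with no Hoffman constant and no sublevel-set gradient bound to control; what yours buys is a more transparent explanation of why the solution freezes, and a tighter estimate on $\va-\vz$.

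Two points need patching. First, in the segment argument the projection $\hat d$ of $\da$ onto $D^*_0$ need not be absolutely continuous with respect to $d^D$ (only \emph{some} optimal occupancy is assumed covered), in which case $D_f(\hat d\Vert d^D)=+\infty$ and the gradient blows up along the segment; you should run the whole argument inside the restricted polytope of flows supported on $\supp(d^D)$ --- equivalently in the $w$-parametrization the paper adopts from the start --- projecting onto the intersection of $D^*_0$ with that subspace and invoking the error bound there. Second, your off-support-state case in the value bound is both unnecessary and unjustified: the paper assumes $\mu_0(s)>0$ for every $s$ precisely so that $\da(s)\ge(1-\gamma)\mu_0(s)>0$, hence every state admits an action $a_s$ with $w^*(s,a_s)>0$ and $d^D(s,a_s)>0$, and your on-support recursion alone closes the bound; the claim that dual optimality pins $\max_a e_{\va}(s,a)$ to the threshold $\alpha f'(0)$ at uncovered states does not follow from complementary slackness (which only yields inequalities there) and should be dropped.
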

\begin{proof}
	$w^*$ is indeed the solution of $\arg\max_{w\in {\WCZ}}-\alpha\mathbb{E}_{(s,a)\sim d^D}[f(w(s,a))]$ and it can be shown that $w^*$ satisfies the KKT condition of the regularized problem \eqref{eq:constrained}\eqref{eq:bellman flow 1} for sufficiently small $\alpha$. See Appendix~\ref{proof:prop LP stability} for details.
\end{proof}

Here $\bar{\alpha}$ is a value only depends on the underlying MDP and not on $\epsilon$. Proposition~\ref{prop:LP stability} essentially indicates that when $\epsilon\to0$, $\we$ is exactly the unregularized optimum $w^*$, and $\ve$ is $O(\epsilon)$ away form $\vz$. Combining with Corollary~\ref{cor:sample unregularized}, we know that $\epsilon$-optimal policy can be learned by \mainalg if concentrability holds for $\piz$ and $\WC$ contains $w^*$.

\subsection{Robustness to approximation and optimization errors}
\label{sec:results agnostic}
In this section we consider the setting where ${\VC}\times {\WC}$ may not contain $({\va},{\wa})$ and measure the approximation errors as follows:
\begin{align}
	&\erva=\min_{{v}\in {\VC}}\Vert {v}-{\va}\Vert_{1,\mu_0}+\Vert {v}-{\va}\Vert_{1,d^D}+\Vert {v}-{\va}\Vert_{1,d^{D'}},\\
	&\erwa=\min_{w\in {\WC}}\Vert w-{\wa}\Vert_{1,d^D},
\end{align}
where $d^{D'}(s)=\sum_{s',a'}d^D(s',a')P(s|s',a'),\forall s\in\mathcal{S}$. Notice that our definitions of approximation errors are all in $\ell_1$ norm and  weaker than $\ell_\infty$ norm error. 

%

Besides, to make our algorithm work in practice, we also assume $(\hat{{v}},\hat{w})$ is an approximate solution of $\hat{L}_{\alpha}({v},w)$: 
\begin{align}
	&\hat{L}_{\alpha}(\hat{{v}},\hat{w})-\min_{{v}\in {\VC}}\hat{L}_{\alpha}({v},\hat{w})\leq\epsilon_{o,{v}},\label{AU-requirement-1}\\
	&\max_{w\in {\WC}}\min_{{v}\in {\VC}}\hat{L}_{\alpha}({v},w)-\min_{{v}\in {\VC}}\hat{L}_{\alpha}({v},\hat{w})\leq\epsilon_{o,w}.\label{AU-requirement-2}
\end{align}
Equation \eqref{AU-requirement-1} says that $\hat L_\alpha (\hat v, \hat w) \approx \min_v\hat L_\alpha (v, \hat w)$. Equation \eqref{AU-requirement-2} says that $\min_v \hat L_\alpha(v, \hat w)\approx \max_{w\in {\WC}}\min_{{v}\in {\VC}}\hat{L}_{\alpha}({v},w)$. Combining these gives $\hat L_\alpha( \hat v, \hat w) \approx \max_{w\in\WC}\min_{v\in\VC}\hat L_\alpha (v,w)$, so $(\hat v, \hat w)$ is approximately a max-min point.

In this case we call the algorithm \aalg. Theorem~\ref{thm:sample regularized approximate} shows that \aalg is also capable of learning a near-optimal policy with polynomial sample size:

\begin{theorem}[Error-robust version of Theorem~\ref{thm:sample regularized}]
	\label{thm:sample regularized approximate}
	Assume $\alpha>0$. Suppose Assumption~\ref{ass:concentrability},\ref{ass:W bound},\ref{ass:f prop},\ref{ass:V bound} hold. Then with at least probability $1-\delta$, the output of \aalg satisfies:
	\begin{align}
		&J(\pia)-J(\hat{\pi})\leq\frac{1}{1-\gamma}\mathbb{E}_{s\sim {\da}}[\Vert{\pia}(\cdot|s)-\hat{\pi}(\cdot|s)\Vert_1]\leq \frac{2}{1-\gamma}\Vert \hat{w}-{\wa}\Vert_{2,d^D}\notag\\
		&\leq\frac{4}{1-\gamma}\sqrt{\frac{\SE_{n,n_0,\alpha}(\BW,\BF,\BV,\BE)}{\alpha M_f}}+\frac{2}{1-\gamma}\sqrt{\frac{2(\epsilon_{opt}+\eag)}{\alpha M_f}},
	\end{align}
	where ${\BE}$ is defined as Theorem~\ref{thm:sample regularized}, $\eop=\epsilon_{o,{v}}+\epsilon_{o,w}$ and $\eag=\left({\BW}+1\right)\erva+({\BE}+\alpha {\BFP})\erwa$.
\end{theorem}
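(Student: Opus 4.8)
The plan is to reduce the policy-suboptimality to a bound on $\|\hat w-\wa\|_{2,d^D}$ exactly as in the proof of Theorem~\ref{thm:sample regularized}, and then redo only the one step that is sensitive to misspecification and inexact optimization. The two leftmost inequalities carry over unchanged: $J(\pia)-J(\hat\pi)\le\frac{1}{1-\gamma}\mathbb{E}_{s\sim\da}[\|\pia(\cdot|s)-\hat\pi(\cdot|s)\|_1]$ follows from $r\in[0,1]$ together with the occupancy-difference bound $\|d^{\pi}-d^{\pi'}\|_1\le\frac{1}{1-\gamma}\mathbb{E}_{s\sim d^{\pi}}[\|\pi(\cdot|s)-\pi'(\cdot|s)\|_1]$; and $\mathbb{E}_{s\sim\da}[\|\pia(\cdot|s)-\hat\pi(\cdot|s)\|_1]\le 2\|\hat w-\wa\|_{2,d^D}$ follows from $\pia\propto\da$, $\hat\pi\propto\hat w\,d^D$, and Cauchy--Schwarz applied to $\|\hat w-\wa\|_{1,d^D}$. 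Neither step uses $\erva$, $\erwa$, or $\eop$, so it remains only to control $\|\hat w-\wa\|_{2,d^D}$.

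The engine is the strong convexity of $f$. Since $(\va,\wa)$ is a saddle point of $\La$ over $\{w\ge 0\}\times\mathbb{R}^{|\cS|}$ (a fact already available from the analysis of Theorem~\ref{thm:sample regularized}), $\wa$ maximizes $\La(\va,\cdot)$ over $\{w\ge0\}$, and $w\mapsto\La(\va,w)$ is $\alpha M_f$-strongly concave in $\|\cdot\|_{2,d^D}$ because of the $-\alpha\mathbb{E}_{d^D}[f(w)]$ term; since $\hat w\ge0$ (Assumption~\ref{ass:W bound}), combining first-order optimality at $\wa$ with strong concavity yields
\[
\frac{\alpha M_f}{2}\,\|\hat w-\wa\|_{2,d^D}^2 \;\le\; \La(\va,\wa)-\La(\va,\hat w).
\]
Thus the whole theorem reduces to showing $\La(\va,\wa)-\La(\va,\hat w)\le 2\,\SE_{n,n_0,\alpha}(\BW,\BF,\BV,\BE)+\eop+\eag$; once this holds, $\sqrt{a+b}\le\sqrt a+\sqrt b$ and multiplication by $\frac{2}{1-\gamma}$ produce exactly the stated bound.

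To get that inequality I will upper-bound $\La(\va,\wa)$ and lower-bound $\La(\va,\hat w)$, both bounds passing through the empirical saddle value $\max_{w\in\WC}\min_{v\in\VC}\hat L_\alpha(v,w)$, using three ingredients. (i) A perturbation estimate in $v$: if $\bar v\in\VC$ attains the $\erva$-minimum, then $|\La(\bar v,w)-\La(\va,w)|\le(\BW+1)\erva$ for every $w$ with $\|w\|_\infty\le\BW$, obtained by bounding the three $v$-linear pieces of $\La$ by $\|\cdot\|_{1,\mu_0},\|\cdot\|_{1,d^D},\|\cdot\|_{1,d^{D'}}$ with coefficients $1-\gamma,\BW,\gamma\BW$, each at most $\BW+1$. (ii) A perturbation estimate in $w$: if $\bar w\in\WC$ attains the $\erwa$-minimum, then $|\La(v,\bar w)-\La(v,\wa)|\le(\BE+\alpha\BFP)\erwa$ for every $v\in\VC$, using $|f'|\le\BFP$ on $[0,\BW]$ (note $\wa\le\BW$ by Assumption~\ref{ass:concentrability} and $\bar w\le\BW$ by Assumption~\ref{ass:W bound}) and $\|e_v\|_\infty\le\BE$ for $v\in\VC$. (iii) The uniform-concentration bound from Theorem~\ref{thm:sample regularized}: with probability $\ge1-\delta$, $|\hat L_\alpha(v,w)-\La(v,w)|\le\SE_{n,n_0,\alpha}(\BW,\BF,\BV,\BE)$ for all $v\in\VC,w\in\WC$. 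Chaining, and using $\va=\arg\min_v\La(v,\wa)$ and $\min_{v}\le\min_{v\in\VC}$ in the right spots: $\La(\va,\wa)=\min_v\La(v,\wa)\le\min_{v\in\VC}\La(v,\bar w)+(\BE+\alpha\BFP)\erwa\le\min_{v\in\VC}\hat L_\alpha(v,\bar w)+\SE+(\BE+\alpha\BFP)\erwa\le\max_{w\in\WC}\min_{v\in\VC}\hat L_\alpha(v,w)+\SE+(\BE+\alpha\BFP)\erwa$; and, mirroring this, $\La(\va,\hat w)\ge\La(\bar v,\hat w)-(\BW+1)\erva\ge\min_{v\in\VC}\hat L_\alpha(v,\hat w)-\SE-(\BW+1)\erva\ge\max_{w\in\WC}\min_{v\in\VC}\hat L_\alpha(v,w)-\eop-\SE-(\BW+1)\erva$, where the last inequality is \eqref{AU-requirement-1}--\eqref{AU-requirement-2}. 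Subtracting cancels the empirical saddle value and leaves $2\SE+\eop+(\BW+1)\erva+(\BE+\alpha\BFP)\erwa=2\SE+\eop+\eag$.

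The main obstacle is bookkeeping rather than a new idea: one must route the upper and lower bounds through exactly the intermediate objects $\bar v$, $\bar w$, and the empirical saddle value so that these terms cancel and the perturbation coefficients emerge as $\BW+1$ and $\BE+\alpha\BFP$ rather than something larger, which forces careful attention to the direction of each $\min/\max$ inequality and to the boundedness facts $\wa,\bar w,\hat w\in[0,\BW]$ and $\|v\|_\infty\le\BV$ for $v\in\VC$. One should also check that the single union bound already packed into $\SE$ covers every pair at which concentration is invoked (the pairs $(\bar v,\bar w)$, $(\bar v,\hat w)$, and the $v\in\VC$ attaining the inner minima), all of which lie in $\VC\times\WC$.
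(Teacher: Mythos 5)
Your proposal is correct and follows essentially the same route as the paper's proof: the paper bounds $\La(\va,\wa)-\La(\va,\hat w)$ by a seven-term telescoping decomposition through $\vav$, $\waww$, and the empirical saddle value, which is exactly your two-sided sandwich, with the same perturbation constants $(\BW+1)$ and $(\BE+\alpha\BFP)$ from its continuity lemma, the same uniform concentration bound, and the same strong-concavity step. The only (harmless) difference is that your chain bypasses the intermediate point $\hat L_\alpha(\hat v,\hat w)$ and so only consumes $\epsilon_{o,w}$ rather than the full $\eop$, which still yields the stated bound.
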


\begin{proof}[Proof sketch]
The proof follows similar steps in the proof of Theorem~\ref{thm:sample regularized}. See Appendix~\ref{proof:thm sample regularized approximate} for details.
\end{proof}

\begin{remark}[Optimization]
When ${\WC}$ and ${\VC}$ are convex sets,\footnote{In this case they are infinite classes, and we can simply replace  the concentration bound in Lemma~\ref{lem:hat L conc} with a standard covering argument} a line of algorithms \citep{nemirovski2004prox,nesterov2007dual,lin2020near} are shown to attain $\tilde{\epsilon}$-saddle point with the gradient complexity of $\tilde{O}(\frac{1}{\tilde{\epsilon}})$. Notice that an approximate saddle point will satisfy our requirements (\ref{AU-requirement-1})(\ref{AU-requirement-2}) automatically, therefore we can choose these algorithms to solve $(\hat{v},\hat{w})$. In more general cases, ${\WC}$ and ${\VC}$ might be parameterized by $\theta$ and $\phi$. As long as the corresponding maximin problem~(\ref{prob:empirical}) is still concave-convex (e.g., ${\WC}$ and ${\VC}$ are linear function classes), these algorithms can still work efficiently.
\end{remark}

Similar to Corollary~\ref{cor:sample unregularized}, we can extend Theorem~\ref{thm:sample regularized approximate} to compete with $\pi_0^*$. Suppose we select $\alpha=\alpha_{un}>0$ in \aalg and let $\eun=\alpha_{un}\BFZ+\frac{2}{1-\gamma}\sqrt{\frac{2(\eopu+\eagu)}{\alpha_{un} M_f}}$. Then we have the following corollary:

\begin{corollary}[Error-robust version of Corollary~\ref{cor:sample unregularized}] 
	\label{cor:sample unregularized approximate}
	Fix $\alpha_{un} > 0$. Suppose there exists $d^*_0\in D^*_0$ such that Assumption~\ref{ass:concentrability} holds. Besides, assume that Assumptions~\ref{ass:concentrability},\ref{ass:W bound},\ref{ass:f prop},\ref{ass:V bound} hold for $\alpha=\alpha_{un}$. Then 
	the output of \aalg with input $\alpha=\alpha_{un}$ satisfies
	\begin{align}
		&J(\piz)-J(\hat{\pi})\leq\frac{4}{1-\gamma}\sqrt{\frac{\SE_{n,n_0,\alpha_{un}}(\BWU,\BFU,\BVU,\BEU)}{\alpha_{un}M_f}}+\eun,
	\end{align}
	with at least probability $1-\delta$.
\end{corollary}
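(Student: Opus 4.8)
The plan is to split the regret against the unregularized optimal policy into a ``price of regularization'' term and the regret against the regularized optimal policy:
\[
J(\piz)-J(\hat\pi)=\big(J(\piz)-J(\pia)\big)+\big(J(\pia)-J(\hat\pi)\big),\qquad \alpha=\alpha_{un}.
\]
The second summand is exactly the quantity controlled by Theorem~\ref{thm:sample regularized approximate}: the hypotheses of the corollary include Assumptions~\ref{ass:concentrability},\ref{ass:W bound},\ref{ass:f prop},\ref{ass:V bound} at $\alpha=\alpha_{un}$, so that theorem applies verbatim and yields, with probability at least $1-\delta$,
\[
J(\pia)-J(\hat\pi)\le\frac{4}{1-\gamma}\sqrt{\frac{\SE_{n,n_0,\alpha_{un}}(\BWU,\BFU,\BVU,\BEU)}{\alpha_{un}M_f}}+\frac{2}{1-\gamma}\sqrt{\frac{2(\eopu+\eagu)}{\alpha_{un}M_f}}.
\]
Hence the only genuinely new estimate needed is a deterministic bound on $J(\piz)-J(\pia)$; adding the two and matching notation to the definition of $\eun$ finishes the proof.

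For the price of regularization I would use the optimality of $\da$ in the regularized LP~\eqref{eq:constrained}\eqref{eq:bellman flow 1}. Pick $\dz\in D^*_0$ satisfying Assumption~\ref{ass:concentrability} with $\alpha=0$ (such a $\dz$ exists by hypothesis). Being the discounted occupancy of an unregularized optimal policy, $\dz$ satisfies the Bellman-flow constraints~\eqref{eq:bellman flow 1} and is therefore feasible for the regularized program; since $\da$ is its maximizer,
\[
\mathbb{E}_{(s,a)\sim\da}[r(s,a)]-\alpha_{un}D_f(\da\Vert d^D)\ \ge\ \mathbb{E}_{(s,a)\sim\dz}[r(s,a)]-\alpha_{un}D_f(\dz\Vert d^D).
\]
Because $\da=d^{\pia}$ and $\dz=d^{\piz}$, the reward terms equal $J(\pia)$ and $J(\piz)$, so rearranging gives $J(\piz)-J(\pia)\le\alpha_{un}\big(D_f(\dz\Vert d^D)-D_f(\da\Vert d^D)\big)\le\alpha_{un}D_f(\dz\Vert d^D)$, where the last step uses $f\ge 0$ (Assumption~\ref{ass:f prop}), hence $D_f(\da\Vert d^D)\ge 0$. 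Finally, Assumption~\ref{ass:concentrability} at $\alpha=0$ gives $0\le\dz(s,a)/d^D(s,a)\le\BWZ$ pointwise, and the boundedness clause of Assumption~\ref{ass:f prop} instantiated at $\alpha=0$ gives $f(\dz(s,a)/d^D(s,a))\le\BFZ$, so $D_f(\dz\Vert d^D)\le\BFZ$ and $J(\piz)-J(\pia)\le\alpha_{un}\BFZ$.

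Combining, $J(\piz)-J(\hat\pi)\le\frac{4}{1-\gamma}\sqrt{\SE_{n,n_0,\alpha_{un}}(\BWU,\BFU,\BVU,\BEU)/(\alpha_{un}M_f)}+\alpha_{un}\BFZ+\frac{2}{1-\gamma}\sqrt{2(\eopu+\eagu)/(\alpha_{un}M_f)}$, and the last two terms are exactly $\eun$ as defined just before the corollary, which is the claimed inequality. I do not expect a real obstacle here: the statement is a direct corollary of Theorem~\ref{thm:sample regularized approximate}. The one step deserving a moment's care is the price-of-regularization argument --- one must check that $\dz$ is admissible in the regularized program (it is, being an exact discounted occupancy) and that $D_f(\dz\Vert d^D)$ is finite, which is precisely guaranteed by the concentrability hypothesis on $\dz$; everything else is bookkeeping tracking how the constants $\BWU,\BFU,\BVU,\BEU$ and the error terms $\eopu,\eagu$ feed into Theorem~\ref{thm:sample regularized approximate}.
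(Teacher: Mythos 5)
Your proposal is correct and follows essentially the same route as the paper: the paper omits this proof but points to the proof of Corollary~\ref{cor:sample unregularized} (Appendix~\ref{proof:cor sample unregularized}), which uses exactly your two-step decomposition --- bounding $J(\piz)-J(\pia)\le\alpha\,\mathbb{E}_{d^D}[f(\wz)]\le\alpha\BFZ$ via feasibility of $\dz$ in the regularized LP together with non-negativity and boundedness of $f$, and then invoking the (error-robust) sample-complexity theorem for $J(\pia)-J(\hat\pi)$. Your bookkeeping of the constants and of $\eun$ matches the paper's definitions.
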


\begin{proof}[Proof sketch]
	The proof largely follows that of Corollary~\ref{cor:sample unregularized} and thus is omitted here.
\end{proof}

\paragraph{The selection of $\alpha_{un}$}
The best $\alpha_{un}$ we can expect (i.e., with the lowest error floor) is
\begin{equation}
\alpha_{un}:=\arg\min_{\alpha>0}\left(\alpha\BFZ+\frac{2}{1-\gamma}\sqrt{\frac{2(\eop+\eag)}{\alpha M_f}}\right).
\end{equation}
However, this requires knowledge of $\eag$, which is often unknown in practice. One alternative method is to suppose $\eag$ upper bounded by $\eagub$ for some $\alpha\in\ I_{\alpha}$, then $\alpha_{un}$ can be chosen as
\begin{equation}
\alpha_{un}:=\arg\min_{\alpha\in I_{\alpha}}\left(\alpha\BFZ+\frac{2}{1-\gamma}\sqrt{\frac{2(\eop+\eagub)}{\alpha M_f}}\right).
\end{equation}
Notice that $\BFZ$ is known and $\eop$ can be controlled by adjusting the parameters of the optimization algorithm, therefore the above $\alpha_{un}$ can be calculated easily. 

\paragraph{Higher error floor}
In the ideal case of no approximation/optimization errors, Corollary~\ref{cor:sample unregularized} (which competes with $\pi_0^*$) has a worse sample complexity than Theorem~\ref{thm:sample regularized} (which only competes with $\pia$). However, with the presence of approximation and optimization errors, the sample complexities become the same in Theorem~\ref{thm:sample regularized approximate} and Corollary~\ref{cor:sample unregularized approximate}, but the latter has a higher error floor. To see this, we can suppose $\eag$ are uniformly upper bounded by $\eagub$, then $\alpha_{un}=O((\eopub+\eagub)^{\frac{1}{3}})$ by the AM-GM inequality and $\eun=O((\eopub+\eagub)^{\frac{1}{3}})$, which is larger than $O((\eopub+\eagub)^{\frac{1}{2}})$ as in Theorem~\ref{thm:sample regularized approximate}.

\subsection{Handling an arbitrary data distribution}
\label{sec:results constrained}
In the previous sections, our goal is to compete with policy ${\pia}$ and we require the data to provide sufficient coverage over such a policy. Despite being weaker than all-policy concentrability, this assumption can be still violated in practice, since we have no control over the distribution of the offline data. In fact,  recent works such as \citet{xie2021bellman} are able to compete with the best policy covered by data (under strong function-approximation assumptions such as Bellman-completeness), thus provide guarantees to \textit{arbitrary} data distributions: when the data does not cover any good policies, the guarantee is vacuous; however, as long as a good policy is covered, the guarantee will be competitive to such a policy. 

In this section we show that we can achieve similar guarantees for \mainalg with a twisted analysis. 
First let us define the notion of covered policies.
\begin{definition}
Let ${\PIW}$ denote the $B_w$-covered policy class of $d^D$ for $B_w>1$, defined as:
\begin{align}
{\PIW}\coloneqq\{\pi:\frac{d^{\pi}(s,a)}{d^D(s,a)}\leq B_w,\forall s\in\mathcal{S},a\in\mathcal{A}\}.
\end{align}
\end{definition}
Here, $B_w$ is a hyperparameter chosen by the practitioner, and our goal in this section is to compete with policies in $\PIW$. 
The key idea is to extend the regularized LP \eqref{eq:constrained} by introducing an additional upper-bound constraint on $d$, that $d(s,a) \le B_w d^D$, so that we only search for a good policy within $\PIW$. The policy we will compete with  $\piaw$ and the corresponding value and density-ratio functions $\vaw$, $\waw$, will all be defined based on this constrained LP. 
In the rest of this section, we show that if we make similar realizability assumptions as in Section~\ref{sec:results} but w.r.t.~$\vaw$ and $\waw$ (instead of $\va$ and $\wa$), then we can compete with $\piaw$ without needing to make any coverage assumption on the data distribution $d^D$. 


\begin{samepage}
\begin{problem*}[Constrained $\&$ regularized LP]\label{prob:constrained_lp}
\vspace{-10pt}
\begin{align}
	&\max_{0\le d \le B_wd^D}\mathbb{E}_{(s,a)\sim d}[r(s,a)]-\alpha\mathbb{E}_{(s,a)\sim d^D}\left[f\left(\frac{d(s,a)}{d^D(s,a)}\right)\right]\label{eq:constrained 2}\\
	&\text{s.t. }d(s)=(1-\gamma)\mu_0(s)+\gamma\sum_{s',a'}P(s|s',a')d(s',a')\label{eq:bellman flow 2}
\end{align}
\vspace{-15pt}
\end{problem*}
\end{samepage}
Following a similar argument as the derivation of \mainalg, we can show that Problem~\eqref{eq:constrained 2}
is equivalent to the maximin problem:
\begin{equation}
\label{prob:maximin2 constrained}
\max_{0\leq w\leq B_w}\min_{{v}}{\La}({v},w):=(1-\gamma)\mathbb{E}_{s\sim \mu_0}[{v}(s)]-\alpha\mathbb{E}_{(s,a)\sim d^D}[f(w(s,a))]+\mathbb{E}_{(s,a)\sim d^D}[w(s,a)e_{{v}}(s,a)],
\end{equation}
Denote the optimum of (\ref{prob:maximin2 constrained}) by $({\vaw},{\waw})$, then the optimal policy and its associated discounted state occupancy can be recovered as follows:
\begin{equation}
\label{eq:tilde d tilde w constrained}
{\piaw}(s|a):=
\begin{cases}
\frac{{\waw}(s,a)\pi_D(a|s)}{\sum_a {\waw}(s,a)\pi_D(a|s)}, & \text{for } \sum_a {\waw}(s,a)\pi_D(a|s)>0,\\
\frac{1}{|\mathcal{A}|}, & \text{else.}
\end{cases}, \forall s\in\mathcal{S},a\in\mathcal{A},
\end{equation}
\begin{equation}
{\daw}(s,a)={\waw}(s,a)d^D(s,a).
\end{equation}	

We now state the realizability and boundedness assumptions, which are similar to Section~\ref{sec:results regularized}.
\begin{assumption}[Realizability of ${\VC}$ II]
	\label{ass:V realize 2}
	Suppose ${\vaw}\in {\VC}$. 
\end{assumption}  
\begin{assumption}[Realizability of ${\WC}$ II]
	\label{ass:W realize 2}
	Suppose ${\waw}\in {\WC}$.
\end{assumption} 
\begin{assumption}[Boundedness of ${\WC}$ II]
\label{ass:W bound 2}
Suppose $0\leq w(s,a)\leq B_w$ for any $s\in\mathcal{S}, a\in\mathcal{A}, w\in {\WC}$.
\end{assumption}
\begin{assumption}[Boundedness of $f$ II]
\label{ass:f bound 2}
Suppose that
\begin{align}
&|f'(x)|\leq B_{f'},\forall\quad 0\leq x\leq B_w,\\
&|f(x)|\leq B_f, \forall\quad 0\leq x\leq B_w.
\end{align}
\end{assumption}
Next we consider the boundedness of $\VC$. Similar to Assumption~\ref{ass:V bound}, we will decide the appropriate bound on functions in $\VC$ based on that of $\vaw$, which needs to be captured by $\VC$. It turns out that the additional 
constraint $w\leq B_w$ makes it difficult to derive an upper bound on $\vaw$. However, we are able to do so under a common and mild assumption, that the data distribution $d^D$ is a valid occupancy \citep{liu2018breaking,tang2019doubly,levine2020offline}:
\begin{assumption}
	\label{ass:dataset}
	Suppose $d^D=d^{\pi_D}$, i.e., the discounted occupancy of behavior policy $\pi_D$.
\end{assumption}

With Assumption~\ref{ass:dataset}, we have $\Vert{\vaw}\Vert_{\infty}\leq B_{v}$ from Lemma~\ref{lem:bound tilde nu 2} and therefore the following assumption is reasonable:
\begin{assumption}[Boundedness of ${\VC}$ II]
\label{ass:V bound 2}
Suppose $\Vert{v}\Vert_{\infty}\leq B_{v}:=\frac{\alpha B_{f'}+1}{1-\gamma}$ for any ${v}\in {\VC}$.
\end{assumption}

With the above assumptions, we have the following theorem to show that \mainalg is able to learn $\piaw$:

\begin{theorem}
	\label{thm:sample regularized constrained}
	Assume $\alpha>0$. Suppose \ref{ass:V realize 2},\ref{ass:W realize 2},\ref{ass:W bound 2},\ref{ass:f bound 2},\ref{ass:dataset},\ref{ass:V bound 2} and strong convexity in \ref{ass:f prop} hold. Then with at least probability $1-\delta$, the output of \mainalg satisfies:
	\begin{align}
		&J(\piaw)-J(\hat{\pi})\leq\frac{1}{1-\gamma}\mathbb{E}_{s\sim {\daw}}[\Vert{\piaw}(\cdot|s)-\hat{\pi}(\cdot|s)\Vert_1]\notag\\
		&\leq \frac{2}{1-\gamma}\Vert \hat{w}-{\waw}\Vert_{2,d^D}\leq \frac{4}{1-\gamma}\sqrt{\frac{\SE_{n,n_0,\alpha}(B_w,B_f,B_v,B_e)}{\alpha M_f}}, \label{eq:thm4 1}
	\end{align}
	where $B_{e}:=(1+\gamma)B_{v}+1$.
\end{theorem}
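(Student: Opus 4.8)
The plan is to follow the three-link chain displayed in \eqref{eq:thm4 1}, mirroring the proof of Theorem~\ref{thm:sample regularized}; the only genuinely new ingredient is that the density-ratio variable is now optimized over the box $\{0\le w\le B_w\}$ instead of the nonnegative orthant (and note that, unlike in Section~\ref{sec:results regularized}, no concentrability assumption is needed, since $\waw\le B_w$ holds by construction of the constrained LP). For the first link I invoke the performance-difference identity $J(\piaw)-J(\hat\pi)=\mathbb{E}_{(s,a)\sim d^{\piaw}}[A^{\hat\pi}(s,a)]$ (using $J(\pi)=(1-\gamma)V^\pi(\mu_0)$), note $\mathbb{E}_{a\sim\hat\pi(\cdot|s)}[A^{\hat\pi}(s,a)]=0$ so that $\mathbb{E}_{a\sim\piaw(\cdot|s)}[A^{\hat\pi}(s,a)]=\langle\piaw(\cdot|s)-\hat\pi(\cdot|s),A^{\hat\pi}(s,\cdot)\rangle$, and bound $\|A^{\hat\pi}\|_\infty\le 1/(1-\gamma)$ from $r\in[0,1]$; since $d^{\piaw}$ has state marginal $\daw$ this gives the first inequality. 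For the second link I substitute $\piaw(a|s)=\waw(s,a)\pi_D(a|s)/\bar w(s)$ and $\hat\pi(a|s)=\hat w(s,a)\pi_D(a|s)/\bar{\hat w}(s)$ where $\bar w(s):=\sum_a\waw(s,a)\pi_D(a|s)=\daw(s)/d^D(s)$, add and subtract $\hat w(s,a)\pi_D(a|s)/\bar w(s)$, and use the triangle inequality with $|\bar{\hat w}(s)-\bar w(s)|\le\sum_a\pi_D(a|s)|\hat w(s,a)-\waw(s,a)|$ to get $\|\piaw(\cdot|s)-\hat\pi(\cdot|s)\|_1\le 2\sum_a\pi_D(a|s)|\waw(s,a)-\hat w(s,a)|/\bar w(s)$ (the degenerate branches $\daw(s)=0$ or $\bar{\hat w}(s)=0$ are checked to obey the same bound); multiplying by $\daw(s)=d^D(s)\bar w(s)$ and summing cancels $\bar w(s)$ and leaves $\mathbb{E}_{s\sim\daw}\|\piaw(\cdot|s)-\hat\pi(\cdot|s)\|_1\le 2\|\waw-\hat w\|_{1,d^D}\le 2\|\waw-\hat w\|_{2,d^D}$ by Jensen. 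Both links are identical to the unconstrained case.

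The crux is the third link, bounding $\|\hat w-\waw\|_{2,d^D}$ by the statistical error, abbreviated $\SE:=\SE_{n,n_0,\alpha}(B_w,B_f,B_v,B_e)$. Since $f$ is $M_f$-strongly convex and $d^D\ge 0$, the term $-\alpha\mathbb{E}_{(s,a)\sim d^D}[f(w(s,a))]$ has Hessian (in $w$) $\preceq-\alpha M_f\,\diag(d^D(s,a))$, so $w\mapsto\La(v,w)$ is $\alpha M_f$-strongly concave in the $\|\cdot\|_{2,d^D}$ norm over the convex box. As $\waw$ maximizes $\La(\vaw,\cdot)$ over the box and $\hat w\in\WC$ lies in the box (Assumption~\ref{ass:W bound 2}), first-order optimality gives $\tfrac{\alpha M_f}{2}\|\hat w-\waw\|_{2,d^D}^2\le\La(\vaw,\waw)-\La(\vaw,\hat w)$, and it remains to show the right side is at most $2\,\SE$. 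This follows from the chain
\begin{align*}
\La(\vaw,\hat w)&\ge\min_{v\in\VC}\La(v,\hat w)\ge\min_{v\in\VC}\hat L_\alpha(v,\hat w)-\SE=\hat L_\alpha(\hat v,\hat w)-\SE\\
&\ge\min_{v\in\VC}\hat L_\alpha(v,\waw)-\SE\ge\La(\vaw,\waw)-2\,\SE,
\end{align*}
where the first step uses $\vaw\in\VC$ (Assumption~\ref{ass:V realize 2}); the second and last use the uniform concentration bound $\sup_{v\in\VC,w\in\WC}|\hat L_\alpha(v,w)-\La(v,w)|\le\SE$ with probability $1-\delta$ (Hoeffding and a union bound over $\VC\times\WC$, with the per-sample ranges controlled by Assumptions~\ref{ass:W bound 2},\ref{ass:f bound 2},\ref{ass:V bound 2}, i.e.\ Lemma~\ref{lem:hat L conc}); the third uses $\hat v=\arg\min_{v\in\VC}\hat L_\alpha(v,\hat w)$; the fourth uses $\hat w=\arg\max_{w\in\WC}\min_{v\in\VC}\hat L_\alpha(v,w)$ together with $\waw\in\WC$ (Assumption~\ref{ass:W realize 2}); and the last also uses $\min_{v\in\VC}\La(v,\waw)\ge\min_v\La(v,\waw)=\La(\vaw,\waw)$, which is valid because $\waw$ is LP-feasible, so $\La(\cdot,\waw)$ is constant in $v$ and equals its value at the saddle point. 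Combining yields $\|\hat w-\waw\|_{2,d^D}\le 2\sqrt{\SE/(\alpha M_f)}$ and hence \eqref{eq:thm4 1}.

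The work specific to the constrained setting — and where I expect the main obstacle — is twofold. First, one must ensure that the maximin problem \eqref{prob:maximin2 constrained} admits a saddle point obeying the identities used above: the box is compact and convex and $\La$ is concave in $w$ and affine in $v$, so Sion's minimax theorem applies after restricting $v$ to a bounded set, while existence of the dual optimum $\vaw$ follows from strong duality of the constrained LP \eqref{eq:constrained 2}\eqref{eq:bellman flow 2}. Second, and more delicate, the additional constraint $d\le B_wd^D$ introduces an extra dual multiplier, so $\|\vaw\|_\infty$ is no longer bounded for free as it was for the unconstrained LP; this is precisely why Assumption~\ref{ass:dataset} ($d^D=d^{\pi_D}$) is imposed, as it lets Lemma~\ref{lem:bound tilde nu 2} certify $\|\vaw\|_\infty\le B_v=\tfrac{\alpha B_{f'}+1}{1-\gamma}$, which both validates Assumption~\ref{ass:V bound 2} and keeps the concentration ranges (and $B_e=(1+\gamma)B_v+1$) finite. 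Everything else — the performance-difference step, the policy-extraction algebra, and the strong-concavity/uniform-convergence argument — transfers essentially verbatim from Theorem~\ref{thm:sample regularized}.
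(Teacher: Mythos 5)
Your proposal is correct and takes essentially the same route as the paper: reuse the three-link argument of Theorem~\ref{thm:sample regularized} (performance difference, policy-extraction algebra, strong concavity plus the saddle-point/concentration chain), with the only genuinely new ingredient being the bound $\Vert\vaw\Vert_\infty\leq B_v$, which you correctly attribute to Lemma~\ref{lem:bound tilde nu 2} and Assumption~\ref{ass:dataset}. The paper's proof sketch does exactly this, deferring that lemma to the appendix and declaring the rest identical to Section~\ref{sec:analysis}.
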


\begin{proof}[Proof sketch]
The proof largely follows Theorem~\ref{thm:sample regularized} except the derivation of the bound on ${\vaw}$, which is characterized in the following lemma:
\begin{lemma}
	\label{lem:bound tilde nu 2}
	Suppose Assumption~\ref{ass:f bound 2} holds, then we have:
	\begin{equation}
	\Vert{\vaw}\Vert_{\infty}\leq B_{v}.
	\end{equation}
\end{lemma}
The proof of Lemma~\ref{lem:bound tilde nu 2} is deferred to Appendix~\ref{proof:lem bound tilde nu 2}. The rest of the proof of Theorem~\ref{thm:sample regularized constrained} is the same as in Section~\ref{sec:analysis} and thus omitted here.
\end{proof}

As before, we obtain the following corollary for competing with the best policy in $\PIW$:
\begin{corollary}
	\label{cor:sample unregularized constrained}
	For any $\epsilon>0$, assume that Assumption~\ref{ass:V realize 2},\ref{ass:W realize 2},\ref{ass:W bound 2},\ref{ass:f bound 2},\ref{ass:dataset},\ref{ass:V bound 2} and strong convexity in \ref{ass:f prop} hold for $\alpha=\alpha'_{\epsilon}:=\frac{\epsilon}{4{B_f}}$. Then if 
	\begin{align}
	&n\geq\frac{C_1\left(\epsilon {B_f}+4{B_w}{B_e}{B_f}\right)^2}{\epsilon^6M_f^2(1-\gamma)^4}\cdot\log\frac{4|\VC||\WC|}{\delta}, \\
	&n_0\geq\frac{C_1\left(4{B_v}{B_f}\right)^2}{\epsilon^6M_f^2(1-\gamma)^2}\cdot\log\frac{4|\VC|}{\delta},
\end{align}
	the output of \mainalg with input $\alpha=\alpha'_{\epsilon}$ satisfies
	\begin{equation}
		J(\pizw)-J(\hat{\pi})\leq\epsilon,
	\end{equation}
	with at least probability $1-\delta$, where $C_1$ is the same constant in Corollary~\ref{cor:sample unregularized}.
\end{corollary}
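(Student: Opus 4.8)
The plan is to mimic the proof of Corollary~\ref{cor:sample unregularized}, replacing the pair $(\va,\wa)$ by $(\vaw,\waw)$ and the reference policy $\piz$ by $\pizw$, with Theorem~\ref{thm:sample regularized constrained} playing the role that Theorem~\ref{thm:sample regularized} played there. First I would fix $\epsilon>0$ and set $\alpha=\alpha'_\epsilon=\frac{\epsilon}{4B_f}$. Theorem~\ref{thm:sample regularized constrained} (whose hypotheses are exactly the assumptions invoked in the statement, for this value of $\alpha$) gives
\[
J(\piaw)-J(\hat\pi)\le\frac{4}{1-\gamma}\sqrt{\frac{\SE_{n,n_0,\alpha}(B_w,B_f,B_v,B_e)}{\alpha M_f}}.
\]
So it remains to (i) bound the "regularization gap'' $J(\pizw)-J(\piaw)$ by $\epsilon/2$, and (ii) show the displayed sample-size conditions make the right-hand side above at most $\epsilon/2$.

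For step (i), note that $\piaw$ is the optimal policy of the constrained-and-regularized LP \eqref{eq:constrained 2}\eqref{eq:bellman flow 2}, while $\pizw$ is the optimum of the same constrained LP with $\alpha=0$; both feasible regions are the identical set $\{0\le d\le B_w d^D : \eqref{eq:bellman flow 2}\}$, which contains $\daw$. Writing $g(d)=\mathbb{E}_{(s,a)\sim d}[r]$ for the unregularized objective and $R(d)=\mathbb{E}_{(s,a)\sim d^D}[f(d/d^D)]$ for the penalty, optimality of $\daw$ for $g-\alpha R$ gives $g(\daw)-\alpha R(\daw)\ge g(\dzw)-\alpha R(\dzw)$, hence $g(\dzw)-g(\daw)\le\alpha\bigl(R(\dzw)-R(\daw)\bigr)\le\alpha R(\dzw)$ using $R\ge0$ (or $f\ge0$ pointwise on $[0,B_w]$, which holds under Assumption~\ref{ass:f prop}). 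Since $\dzw\le B_w d^D$, every argument of $f$ in $R(\dzw)$ lies in $[0,B_w]$, so $R(\dzw)\le B_f$ by Assumption~\ref{ass:f bound 2}. As $J(\pi)=g(d^\pi)$, this yields $J(\pizw)-J(\piaw)\le\alpha B_f=\epsilon/4\le\epsilon/2$. (The slack of $\epsilon/4$ versus $\epsilon/2$ is harmless and matches the choice $\alpha'_\epsilon=\epsilon/(4B_f)$ rather than $\epsilon/(2B_f)$; I would keep whichever constant makes step (ii) clean.)

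For step (ii), I unpack $\SE_{n,n_0,\alpha}(B_w,B_f,B_v,B_e)$ into its two summands — the $\mu_0$-term scaling like $(1-\gamma)B_v(\log(4|\VC|/\delta)/n_0)^{1/2}$ and the $d^D$-term scaling like $(\alpha B_f+B_wB_e)(\log(4|\VC||\WC|/\delta)/n)^{1/2}$ — and require each to be at most (a constant fraction of) $\frac{(1-\gamma)^2\alpha M_f\epsilon^2}{64}$, so that $\frac{4}{1-\gamma}\sqrt{\SE/(\alpha M_f)}\le\epsilon/2$. Squaring and substituting $\alpha=\epsilon/(4B_f)$ turns the per-term requirements into $n\gtrsim\frac{(\alpha B_f+B_wB_e)^2}{(1-\gamma)^4\alpha^2M_f^2\epsilon^4}\log\frac{4|\VC||\WC|}{\delta}=\frac{(\epsilon B_f/4+B_wB_e)^2\cdot 16 B_f^2}{(1-\gamma)^4\epsilon^2 M_f^2\epsilon^4}\log(\cdots)$, i.e. $n\gtrsim\frac{(\epsilon B_f+4B_wB_e)^2 B_f^2}{(1-\gamma)^4 M_f^2\epsilon^6}\log(\cdots)$, which is exactly the stated bound up to the universal constant $C_1$; and similarly $n_0\gtrsim\frac{B_v^2}{(1-\gamma)^2\alpha^2 M_f^2\epsilon^4}\log\frac{4|\VC|}{\delta}=\frac{(4B_vB_f)^2}{(1-\gamma)^2 M_f^2\epsilon^6}\log(\cdots)$, matching the claim. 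Combining (i) and (ii) with the triangle inequality $J(\pizw)-J(\hat\pi)=\bigl(J(\pizw)-J(\piaw)\bigr)+\bigl(J(\piaw)-J(\hat\pi)\bigr)\le\epsilon/2+\epsilon/2=\epsilon$ on the $1-\delta$ event finishes the proof.

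I do not expect a genuine obstacle here, since everything is a transcription of the Corollary~\ref{cor:sample unregularized} argument with $B_{f,0}$ replaced by $B_f$ (no separate "$\alpha=0$" regularizer constant is needed because the penalty value at $\dzw$ is controlled by the uniform bound $B_f$ on $[0,B_w]$) — this is precisely the role of the extra constraint $d\le B_wd^D$, which keeps the comparator's density ratio bounded. The one point to handle with a little care is making sure the boundedness constants $(B_w,B_f,B_v,B_e)$ fed into $\SE$ in Theorem~\ref{thm:sample regularized constrained} are the same ones appearing in the sample-size expressions — here they are, since Assumptions~\ref{ass:W bound 2},\ref{ass:f bound 2},\ref{ass:V bound 2} fix them uniformly in $\alpha$ over $\alpha\in(0,\alpha'_\epsilon]$ — so no $\epsilon$-dependent blow-up sneaks into the constants, and the final bound is genuinely $\tilde O(\epsilon^{-6})$ as claimed. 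Full details are deferred to the appendix.
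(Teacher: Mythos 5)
Your proposal is correct and takes essentially the same route as the paper's proof: split $J(\pizw)-J(\hat\pi)$ into the regularization gap $J(\pizw)-J(\piaw)$ (bounded by comparing the constrained-regularized objective at $\dzw$ and $\daw$) plus the learning error handled by Theorem~\ref{thm:sample regularized constrained} with the stated sample sizes. One small correction: the corollary assumes only the strong-convexity part of Assumption~\ref{ass:f prop}, not non-negativity of $f$, so rather than dropping $-\alpha R(\daw)$ via $f\ge 0$ you should bound $R(\dzw)-R(\daw)\le 2B_f$ using the two-sided bound $|f|\le B_f$ on $[0,B_w]$ from Assumption~\ref{ass:f bound 2} — this is exactly why $\alpha'_\epsilon=\epsilon/(4B_f)$ is calibrated to give a gap of $\epsilon/2$, and it accounts for the factor of $2$ you noticed but attributed to harmless slack.
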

\begin{proof}
	First notice that 
	\begin{equation}
		\mathbb{E}_{(s,a)\sim {{\dew}}}[r(s,a)]-\alpha'_{\epsilon}\mathbb{E}_{(s,a)\sim d^D}[f({\wew}(s,a))]\geq\mathbb{E}_{(s,a)\sim \dzw}[r(s,a)]-\alpha'_{\epsilon}\mathbb{E}_{(s,a)\sim d^D}[f({\wzw}(s,a))],
	\end{equation}
	which implies that
	\begin{align}
		J({\pizw})-J({\piew})
		&\leq\alpha'_{\epsilon}\left(\mathbb{E}_{(s,a)\sim d^D}[f({\wzw}(s,a))]-\mathbb{E}_{(s,a)\sim d^D}[f({\wew}(s,a))]\right)\\
		&\leq 2\alpha'_{\epsilon}B_f=\frac{\epsilon}{2}.
	\end{align}
On the other hand, by Theorem~\ref{thm:sample regularized constrained} we have with probability at least $1-\delta$,
\begin{equation}
\mathbb{E}_{s\sim {\dew}}[\Vert{\piew}(\cdot|s)-\hat{\pi}(\cdot|s)\Vert_1]\leq\frac{(1-\gamma)\epsilon}{2}.
\end{equation}
Using the performance difference lemma as in Appendix~\ref{proof:cor sample unregularized}, this implies
\begin{equation}
J({\piew})-J(\hat{\pi})\leq\frac{\epsilon}{2}.
\end{equation}
Therefore, we have $J({\pizw})-J(\hat{\pi})\leq\epsilon$ with at least probability $1-\delta$.
\end{proof}
\begin{remark}
Corollary~\ref{cor:sample unregularized constrained} does not need the assumption of non-negativity of $f$. The reason is that we are already considering a bounded space ($0\leq w\leq B_w$) and thus f must be lower bounded in this space.
\end{remark}

\paragraph{Resolving two-policy concentrability of Corollary~\ref{cor:sample unregularized}}
\label{rem:sample unregularized constrained}	
As we have commented below Corollary~\ref{cor:sample unregularized}, to compete with $\piz$ we need ``two-policy'' concentrability, i.e., Assumption~\ref{ass:concentrability} for both $\alpha=0$ and $\alpha= \alpha_\epsilon$. Here we resolve this issue in Corollary~\ref{cor:single policy} below, by invoking Corollary~\ref{cor:sample unregularized constrained} with $B_w$ set to $B_{w,0}$. This way, we obtain the coverage over the regularized optimal policy $\piaw$ (i.e., the counterpart of $\pia$ in Corollary~\ref{cor:sample unregularized}) \textit{for free}, thus only need the concentrability w.r.t.~$\piz$. 
\begin{corollary}
	\label{cor:single policy}
	Suppose there exists $d^*_0\in D^*_0$ that satisfies Assumption~\ref{ass:concentrability} with $\alpha=0$. For any $\epsilon>0$, assume that Assumption~\ref{ass:V realize 2},\ref{ass:W realize 2},\ref{ass:W bound 2},\ref{ass:f bound 2},\ref{ass:dataset},\ref{ass:V bound 2} and strong convexity in \ref{ass:f prop} hold for $B_w=\BWZ$ and $\alpha=\alpha'_{\epsilon}:=\frac{\epsilon}{4{\BFZ}}$. Then if 
	\begin{align}
		&n\geq\frac{C_1\left(\epsilon {\BFZ}+4{\BWZ}{\BEZ}{\BFZ}\right)^2}{\epsilon^6M_f^2(1-\gamma)^4}\cdot\log\frac{4|\VC||\WC|}{\delta}, \\
		&n_0\geq\frac{C_1\left(4{B_{v,0}}{\BFZ}\right)^2}{\epsilon^6M_f^2(1-\gamma)^2}\cdot\log\frac{4|\VC|}{\delta},
	\end{align}
	the output of \mainalg with input $\alpha=\alpha'_{\epsilon}$ satisfies
	\begin{equation}
		J(\piz)-J(\hat{\pi})\leq\epsilon,
	\end{equation}
	with at least probability $1-\delta$, where $C_1$ is the same constant in Corollary~\ref{cor:sample unregularized}.
\end{corollary}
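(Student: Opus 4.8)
The plan is to deduce Corollary~\ref{cor:single policy} as a direct consequence of Corollary~\ref{cor:sample unregularized constrained}, instantiated with the coverage parameter set to $B_w=\BWZ$. The key observation, already previewed in the preceding paragraph, is that this particular choice makes the constrained-and-regularized LP \eqref{eq:constrained 2}\eqref{eq:bellman flow 2} automatically cover the unregularized optimal policy $\piz$: since $\piz$ lies in the covered class $\PIZ$, the best policy inside $\PIZ$ is at least as good as $\piz$, and Corollary~\ref{cor:sample unregularized constrained} already tells us how to compete with the best policy inside $\PIZ$ without any coverage assumption beyond $\PIZ$ itself.

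First I would establish feasibility: by hypothesis there is $\dz\in D^*_0$ obeying Assumption~\ref{ass:concentrability} with $\alpha=0$, i.e.\ $\dz(s,a)\le\BWZ\,d^D(s,a)$ for all $(s,a)$ (which in particular forces $\dz(s,a)=0$ whenever $d^D(s,a)=0$, so the constraint is vacuously met there). Because $\dz$ is a genuine discounted occupancy it satisfies the Bellman-flow equations \eqref{eq:bellman flow 2}. Hence $\dz$ is feasible for the unregularized ($\alpha=0$) version of \eqref{eq:constrained 2}\eqref{eq:bellman flow 2} with $B_w=\BWZ$, whose optimizer (with coverage parameter $B_w=\BWZ$) I denote $\dzw$, inducing the policy $\pizw$. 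Optimality of $\dzw$ for that LP then yields
\begin{equation}
J(\pizw)=\mathbb{E}_{(s,a)\sim\dzw}[r(s,a)]\ \ge\ \mathbb{E}_{(s,a)\sim\dz}[r(s,a)]=J(\piz).
\end{equation}

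Next I would invoke Corollary~\ref{cor:sample unregularized constrained} with $B_w=\BWZ$ and $\alpha=\alpha'_{\epsilon}=\frac{\epsilon}{4\BFZ}$. Its hypotheses are exactly Assumptions~\ref{ass:V realize 2},\ref{ass:W realize 2},\ref{ass:W bound 2},\ref{ass:f bound 2},\ref{ass:dataset},\ref{ass:V bound 2} together with the strong convexity in Assumption~\ref{ass:f prop}, all assumed here for $B_w=\BWZ$ and this $\alpha$. Moreover, substituting $B_w=\BWZ$ into the induced boundedness constants gives $B_f=\BFZ$, $B_v=B_{v,0}$ and $B_e=\BEZ$, so the sample-size requirements on $n$ and $n_0$ in Corollary~\ref{cor:sample unregularized constrained} become verbatim the ones stated above. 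Therefore, with probability at least $1-\delta$, the output of \mainalg with input $\alpha=\alpha'_{\epsilon}$ satisfies $J(\pizw)-J(\hat{\pi})\le\epsilon$; chaining this with the previous display gives $J(\piz)-J(\hat{\pi})\le J(\pizw)-J(\hat{\pi})\le\epsilon$, which is the claim.

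Since all the statistical and optimization work has been done in Corollary~\ref{cor:sample unregularized constrained}, the only points demanding care here are (i) confirming that $\alpha=0$ concentrability genuinely certifies feasibility of $\dz$ in the constrained LP, including the degenerate $d^D(s,a)=0$ locations, and (ii) checking that the boundedness constants instantiated at $B_w=\BWZ$ are precisely $\BFZ$, $B_{v,0}$, $\BEZ$ so that the sample bounds line up. Neither is a genuine obstacle; the entire content of the corollary is the ``coverage for free'' reduction of the first paragraph, so the main thing to get right is simply the bookkeeping of which constants appear in Corollary~\ref{cor:sample unregularized constrained} once $B_w$ is fixed.
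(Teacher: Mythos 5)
Your proposal is correct and follows the same route as the paper, which simply sets $B_w=\BWZ$ in Corollary~\ref{cor:sample unregularized constrained} and observes that $\piz$ then lies in the covered class so that $\pizw=\piz$ (you state the slightly weaker but sufficient $J(\pizw)\ge J(\piz)$, which is arguably cleaner given non-uniqueness of optimal policies). The additional feasibility and constant-bookkeeping checks you flag are exactly the implicit content of the paper's one-line proof.
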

\begin{proof}
Let $B_w=\BWZ$ in Corollary~\ref{cor:sample unregularized constrained}, then we know $\pizw=\piz$ and Corollary~\ref{cor:single policy} follows directly.
\end{proof}
Corollary~\ref{cor:single policy} shows that our algorithm is able to compete with $\piz$ under concentrability with respect to ${\piz}$ alone. In addition, a version of Proposition~\ref{prop:LP stability} applies to Corollary~\ref{cor:sample unregularized constrained}, which indicates that $\wew={\wz}$ for sufficiently small $\epsilon$. 

\begin{remark}
	Corollary~\ref{cor:single policy} still holds when we set $B_w\geq B_{w,0}$ in case $B_{w,0}$ is unknown. However the realizability assumptions will depend on the choice of $B_w$ and change accordingly.
\end{remark}

\subsection{Policy extraction via behavior cloning}
\label{sec:results BC}
In this section we consider an \textit{unknown} behavior policy $\pi_D$. Notice that the only place we require $\pi_D$ in our algorithm is the policy extraction step, where we compute $\hat\pi$ from $\hat w$ using knowledge of $\pi_D$. Inspired by the imitation learning literature \citep{pomerleau1989alvinn,ross2014reinforcement,agarwal2020flambe}, we will use behavior cloning to compute a policy $\bar{\pi}$ to approximate $\hat{\pi}$, where $\hat\pi$ is not directly available and only implicitly defined via $\hat{w}$ and the data. 

As is standard in the literature \citep{ross2014reinforcement,agarwal2020flambe}, we utilize a policy class $\Pi$ to approximate the target policy. We suppose $\Pi$ is realizable:
\begin{assumption}[Realizability of $\Pi$]
	\label{ass:pi realize}
	Assume $\pia\in\Pi$. 
\end{assumption}
One may be tempted to assume $\hat\pi \in \Pi$, since $\hat\pi$ is the target of imitation, but $\hat{\pi}$ is a function of the data and hence random. A standard way of ``determinizing'' such an assumption is to assume the realizability of $\Pi$ for \textit{all possible} $\hat\pi$ that can be induced by any $w\in \WC$, which leads to a prohibitive  ``completeness''-type assumption. Fortunately, as we have seen in previous sections, $\hat{\pi}$ will be close to ${\pia}$ when learning succeeds, so the realizability of $\pia$---a policy whose definition does not depend on data randomness---suffices for our purposes.

In the rest of this section, we design a novel behavior cloning algorithm which is more robust compared to the classic maximum likelihood estimation process \citep{pomerleau1989alvinn,ross2014reinforcement,agarwal2020flambe}. In MLE behavior cloning, the KL divergence between the target policy and the policy class need to be bounded while in our algorithm we only require the weighted $\ell_1$ distance to be bounded. This property is important in our setting, as \mainalg can only guarantee a small weighted $\ell_2$ distance between $\pia$ and $\hat{\pi}$; $\ell_2$ distance is stronger than $\ell_1$ while weaker than KL divergence. 

Our behavior cloning algorithm is inspired by the algorithms in \cite{sun2019provably,agarwal2019reinforcement}, which require access to $d^{\pi}$ for all $\pi\in\Pi$ and  is not satisfied in our setting. However, the idea of estimating total variation by the variational form turns out to be useful. More concretely, for any two policies $\pi$ and $\pi'$, define:
\begin{equation}
	\label{defn:f pi pi'}
	h^s_{\pi,\pi'}\coloneqq\arg\max_{h:\Vert h\Vert_{\infty}\leq 1}[\mathbb{E}_{a\sim\pi(\cdot|s)} h(a)-\mathbb{E}_{a\sim\pi'(\cdot|s)} h(a)].
\end{equation} 
Let $h_{\pi,\pi'}(s,a)=h^s_{\pi,\pi'}(a),\forall s,a$. Note that the function $h_{\pi, \pi'}$ is purely a function of $\pi$ and $\pi'$ and does not depend on the data or the MDP, and hence can be computed exactly even before we see the data. Such a function witnesses the $\ell_1$ distance between $\pi$ and $\pi'$, as shown in the following lemma; see proof in Appendix~\ref{proof:lem variation}:
\begin{lemma}
	\label{lem:variation}
	For any distribution $d$ on $\mathcal{S}$ and policies $\pi,\pi'$ , we have:
	\begin{equation}
		\mathbb{E}_{s\sim d}[\Vert\pi(\cdot|s)-\pi'(\cdot|s)\Vert_1]=\mathbb{E}_{s\sim d}\left[\mathbb{E}_{a\sim\pi(\cdot|s)}[h_{\pi,\pi'}(s,a)]-\mathbb{E}_{a\sim\pi'(\cdot|s)}[h_{\pi,\pi'}(s,a)]\right].
	\end{equation}
\end{lemma}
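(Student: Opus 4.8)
The plan is to prove Lemma~\ref{lem:variation} directly by unfolding the definition of $h_{\pi,\pi'}$ and recognizing that, pointwise in $s$, the quantity $\mathbb{E}_{a\sim\pi(\cdot|s)}[h^s_{\pi,\pi'}(a)]-\mathbb{E}_{a\sim\pi'(\cdot|s)}[h^s_{\pi,\pi'}(a)]$ is exactly the variational (dual) representation of the total variation distance between the two action distributions $\pi(\cdot|s)$ and $\pi'(\cdot|s)$. Concretely, for a fixed state $s$, write the inner difference as $\sum_{a\in\mathcal{A}} h^s_{\pi,\pi'}(a)\big(\pi(a|s)-\pi'(a|s)\big)$. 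Since we maximize over all $h$ with $\|h\|_\infty\le 1$, the optimal choice is $h^s_{\pi,\pi'}(a)=\mathrm{sign}\big(\pi(a|s)-\pi'(a|s)\big)$ (with an arbitrary value in $[-1,1]$ where the difference is zero), which yields $\sum_a |\pi(a|s)-\pi'(a|s)| = \|\pi(\cdot|s)-\pi'(\cdot|s)\|_1$.

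The key steps, in order, are: (i) fix $s$ and expand the expectations over actions into sums against $\pi(\cdot|s)$ and $\pi'(\cdot|s)$, combining them into $\sum_a h^s_{\pi,\pi'}(a)(\pi(a|s)-\pi'(a|s))$; (ii) invoke the standard fact that $\max_{\|h\|_\infty\le1}\sum_a h(a)c_a = \sum_a |c_a|$ for any finite vector $(c_a)$, with the maximizer given by the sign pattern, to identify this with $\|\pi(\cdot|s)-\pi'(\cdot|s)\|_1$; (iii) observe that by construction $h^s_{\pi,\pi'}$ achieves this maximum, so $\mathbb{E}_{a\sim\pi(\cdot|s)}[h_{\pi,\pi'}(s,a)]-\mathbb{E}_{a\sim\pi'(\cdot|s)}[h_{\pi,\pi'}(s,a)] = \|\pi(\cdot|s)-\pi'(\cdot|s)\|_1$ for every $s$; (iv) take expectation over $s\sim d$ on both sides, using linearity of expectation, to conclude. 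One minor technical point worth a sentence is that the $\arg\max$ in \eqref{defn:f pi pi'} may not be unique on the zero set of $\pi(a|s)-\pi'(a|s)$, but this does not matter since any maximizer gives the same value of the objective; one can fix a canonical choice (e.g.\ the sign function with $\mathrm{sign}(0)=0$) for definiteness.

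I do not expect any real obstacle here; the lemma is essentially the Kantorovich--Rubinstein / $L^1$-duality statement specialized to the discrete probability simplex, restated in a state-conditional form. The only thing to be slightly careful about is bookkeeping between the "per-state" function $h^s_{\pi,\pi'}$ and the "joint" function $h_{\pi,\pi'}(s,a)$, and making sure that when we later use this $h_{\pi,\pi'}$ inside an expectation over $(s,a)$ drawn from some occupancy, the fact that $h_{\pi,\pi'}$ is data-independent (as emphasized in the text) is preserved. Since $\mathcal{A}$ is assumed finite, no measurability or integrability subtleties arise, and the whole argument is a two-line computation once the duality fact is in hand.
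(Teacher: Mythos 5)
Your proposal is correct and follows essentially the same route as the paper's proof: both fix a state $s$, identify $\mathbb{E}_{a\sim\pi(\cdot|s)}[h^s_{\pi,\pi'}(a)]-\mathbb{E}_{a\sim\pi'(\cdot|s)}[h^s_{\pi,\pi'}(a)]$ with $\Vert\pi(\cdot|s)-\pi'(\cdot|s)\Vert_1$ via the variational form of total variation, and then take the expectation over $s\sim d$. Your explicit identification of the maximizer as the sign pattern is a small elaboration the paper leaves implicit, but the argument is the same.
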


Inspired by Lemma~\ref{lem:variation}, we can estimate the total variation distance between $\pi$ and $\pi'$ by evaluating $\mathbb{E}_{a\sim\pi(\cdot|s)}[h_{\pi,\pi'}(s,a)]-\mathbb{E}_{a\sim\pi'(\cdot|s)}[h_{\pi,\pi'}(s,a)]$ empirically. Let $\mathcal{H} := \{h_{\pi,\pi'}: \pi, \pi' \in \Pi\}$ and we have $|\mathcal{H}|\leq|\Pi|^2$. We divide $\mathcal{D}$ into $\mathcal{D}_1$ and $\mathcal{D}_2$ where $\mathcal{D}_1$ is utilized for evaluating $\hat{w}$ and $\mathcal{D}_2$ for obtaining $\bar{\pi}$. Let $n_1$ and $n_2$ denote the number of samples in $\mathcal{D}_1$ and $\mathcal{D}_2$. Then our behavior cloning algorithm is based on the following objective function, whose expectation is $\mathbb{E}_{s\sim \hat{d},a\sim\hat{\pi}}[h^{\pi}(s)-h(s,a)]$ and by Lemma~\ref{lem:variation} is exactly the TV between $\hat{\pi}$ and $\pi$:
\begin{equation}
	\label{BC obj}
	\bar{\pi}=\arg\min_{\pi\in\Pi}\max_{h\in\mathcal{H}}[\sum_{i=1}^{n_2}\hat{w}(s_i,a_i)\left(h^{\pi}(s_i)-h(s_i,a_i)\right)],
\end{equation}
where $(s_i,a_i)\in\mathcal{D}_2,\forall 1\leq i\leq n_2$, $h^{\pi}(s)=\mathbb{E}_{a\sim\pi(\cdot|s)}[h(s,a)]$ and $\bar{\pi}$ is the ultimate output policy. 

It can be observed that (\ref{BC obj}) is the importance-sampling version of
\begin{equation}
\mathbb{E}_{s\sim \hat{d}}[\mathbb{E}_{a\sim\pi(\cdot|s)}[h(s,a)]-\mathbb{E}_{a\sim\hat{\pi}(\cdot|s)}[h(s,a)]]. 
\end{equation}
Since $\hat{d}$ is close to $\da$, by minimizing (\ref{BC obj}) we can find a policy that approximately minimizes $\mathbb{E}_{s\sim \da}[\Vert\pi(\cdot|s)-\hat{\pi}(\cdot|s)\Vert_1]$. We call \mainalg with this behavior cloning algorithm by \algbc.

%
%

Theorem~\ref{thm:sample regularized BC} shows that \algbc can attain almost the same sample complexity as \mainalg in Theorem~\ref{thm:sample regularized} where $\pi_D$ is known.

\begin{theorem}[Sample complexity of learning $\pia$ with unknown  behavior policy]
	\label{thm:sample regularized BC}
	Assume $\alpha>0$. Suppose Assumption~\ref{ass:concentrability},\ref{ass:V realize},\ref{ass:W realize},\ref{ass:W bound},\ref{ass:f prop},\ref{ass:V bound} and \ref{ass:pi realize} hold. Then with at least probability $1-\delta$, the output of \algbc satisfies:
	\begin{align}
		&J(\pia)-J(\bar{\pi})\leq\frac{1}{1-\gamma}\mathbb{E}_{s\sim {\da}}[\Vert{\pia}(\cdot|s)-\bar{\pi}(\cdot|s)\Vert_1]\notag\\
		&\leq\frac{4{\BW}}{1-\gamma}\sqrt{\frac{6\log\frac{4|\Pi|}{\delta}}{n_2}}+\frac{50}{1-\gamma}\sqrt{\frac{\SE_{n_1,n_0,\alpha}(\BW,\BF,\BV,\BE)}{\alpha M_f}},
	\end{align}	
	where ${\BE}$ is defined as in Theorem~\ref{thm:sample regularized}.
\end{theorem}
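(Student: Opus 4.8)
The plan is to decompose the target error $\mathbb{E}_{s\sim\da}[\Vert\pia(\cdot|s)-\bar\pi(\cdot|s)\Vert_1]$ through the intermediate (data-dependent but not directly realizable) policy $\hat\pi$ induced by $\hat w$, writing
$\mathbb{E}_{s\sim\da}[\Vert\pia-\bar\pi\Vert_1]\le \mathbb{E}_{s\sim\da}[\Vert\pia-\hat\pi\Vert_1]+\mathbb{E}_{s\sim\da}[\Vert\hat\pi-\bar\pi\Vert_1].$
The first term is exactly what Theorem~\ref{thm:sample regularized} controls (up to the constant and the substitution of $n_1$ for $n$, since only $\mathcal D_1$ is used to fit $\hat w$): it is bounded by $\tfrac{2}{1-\gamma}\Vert\hat w-\wa\Vert_{2,d^D}$ and hence by $\tfrac{4}{1-\gamma}\sqrt{\SE_{n_1,n_0,\alpha}/(\alpha M_f)}$. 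The remaining work is to bound the behavior-cloning error $\mathbb{E}_{s\sim\da}[\Vert\hat\pi-\bar\pi\Vert_1]$.

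For the behavior-cloning term I would first move from $\da$ to $\hat d=\hat w\cdot d^D$, paying $\Vert\hat d-\da\Vert_1 = \Vert\hat w-\wa\Vert_{1,d^D}\le\Vert\hat w-\wa\Vert_{2,d^D}$, which is again absorbed into the $\SE$ term (this is the source of the enlarged constant $50$ versus $4$). So it suffices to bound $\mathbb{E}_{s\sim\hat d}[\Vert\hat\pi(\cdot|s)-\bar\pi(\cdot|s)\Vert_1]$. By Lemma~\ref{lem:variation}, this equals $\mathbb{E}_{s\sim\hat d}[h^{\hat\pi}_{\hat\pi,\bar\pi}(s)-h^{\bar\pi}_{\hat\pi,\bar\pi}(s)]$, i.e. the population value of the BC objective \eqref{BC obj} evaluated at $(\pi,h)=(\bar\pi,h_{\hat\pi,\bar\pi})$ after rewriting $\mathbb{E}_{s\sim\hat d}[\cdot]=\mathbb{E}_{(s,a)\sim d^D}[\hat w(s,a)(h^{\pi}(s)-h(s,a))]$. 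The standard argument then runs: (i) uniform concentration over the finite class $\Pi\times\mathcal H$ (with $|\mathcal H|\le|\Pi|^2$, so $\log|\mathcal H|\le 2\log|\Pi|$) shows the empirical objective in \eqref{BC obj} is within $O(\BW\sqrt{\log(|\Pi|/\delta)/n_2})$ of its population counterpart uniformly, using that $\hat w\le\BW$ and $|h^\pi(s)-h(s,a)|\le 2$; (ii) $\bar\pi$ minimizes the empirical $\max_h$, and by the realizability Assumption~\ref{ass:pi realize} the comparator $\pia$ is in $\Pi$, and crucially the population objective at $\pi=\hat\pi$ itself is exactly $0$ — but since $\hat\pi\notin\Pi$ in general, I instead compare against $\pia\in\Pi$, whose population BC value is $\mathbb{E}_{s\sim\hat d}[\Vert\pia(\cdot|s)-\hat\pi(\cdot|s)\Vert_1]$, again controlled by the $\SE$ term via the first decomposition step; (iii) chaining these inequalities (empirical optimality of $\bar\pi$, concentration in both directions, Lemma~\ref{lem:variation} to convert the population objective back to TV) yields $\mathbb{E}_{s\sim\hat d}[\Vert\hat\pi-\bar\pi\Vert_1]\lesssim \BW\sqrt{\log(|\Pi|/\delta)/n_2}+\SE$-type terms. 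Finally I would invoke the performance-difference lemma, exactly as in Appendix~\ref{proof:cor sample unregularized}, to turn the $\tfrac{1}{1-\gamma}\mathbb{E}_{s\sim\da}[\Vert\pia-\bar\pi\Vert_1]$ bound into the claimed bound on $J(\pia)-J(\bar\pi)$, and collect all the $\SE_{n_1,n_0,\alpha}$ contributions into a single term with constant $50$.

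The main obstacle, and the place where care is needed, is item (ii): $\hat\pi$ is not in $\Pi$, so one cannot use it as the comparator in the empirical minimization; one must instead route everything through $\pia\in\Pi$ and repeatedly trade "distance to $\hat\pi$" for "distance to $\pia$" plus $\Vert\hat\pi-\pia\Vert$, each swap re-introducing the $\SE_{n_1,n_0,\alpha}/(\alpha M_f)$ term from Theorem~\ref{thm:sample regularized}. Keeping track of how many times this term reappears (and with what multiplicative constants, coming from $\hat d$ vs.\ $\da$ changes of measure and from the factor-2 in Lemma~\ref{lem:variation}) is exactly what produces the constant $50$, and getting that bookkeeping right — while making sure the $\sqrt{\cdot}$ is not broken by triangle inequalities applied before versus after taking the square root — is the delicate part. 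A secondary technical point is ensuring the concentration bound in (i) is genuinely uniform over the random choice of the witness $h_{\hat\pi,\bar\pi}$, which is handled precisely because $\mathcal H$ is a fixed finite class defined independently of the data (as emphasized in the text preceding Lemma~\ref{lem:variation}).
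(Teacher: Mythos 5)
Your overall architecture matches the paper's: condition on the high-probability event that $\Vert\hat w-\wa\Vert_{2,d^D}$ is small (Theorem~\ref{thm:sample regularized}, run on the $n_1$ samples of $\mathcal D_1$), change measure from $\da$ to the normalized version of $\hat w\cdot d^D$, apply uniform concentration over the finite class $\Pi\times\mathcal H$ on $\mathcal D_2$, and exploit the empirical optimality of $\bar\pi$ against the realizable comparator $\pia\in\Pi$ rather than against $\hat\pi\notin\Pi$. The constant accounting you describe ($2\lstatt$ plus roughly twenty copies of the $\sqrt{\SE/(\alpha M_f)}$-type error, absorbed into the $50$) is also how the paper arrives at its constants, and the performance-difference step at the end is identical.

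The genuine gap is in how you set up the middle step. You apply the triangle inequality at the level of TV distances, $\Vert\pia-\bar\pi\Vert_1\le\Vert\pia-\hat\pi\Vert_1+\Vert\hat\pi-\bar\pi\Vert_1$, and then propose to control $\mathbb{E}_{s\sim\hat d}[\Vert\hat\pi-\bar\pi\Vert_1]$ via its variational witness $h_{\hat\pi,\bar\pi}$. But $\hat\pi\notin\Pi$, so $h_{\hat\pi,\bar\pi}\notin\mathcal H=\{h_{\pi,\pi'}:\pi,\pi'\in\Pi\}$, and the uniform concentration of your step (i) --- which covers $\Pi\times\mathcal H$ only --- says nothing about this witness; your remark that the randomness of the witness ``is handled because $\mathcal H$ is a fixed finite class'' does not apply to $h_{\hat\pi,\bar\pi}$. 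Moreover, the repair you sketch --- trading ``distance to $\hat\pi$'' for ``distance to $\pia$'' plus $\Vert\hat\pi-\pia\Vert_1$ --- is circular at the TV level, since it returns you to the quantity $\Vert\pia-\bar\pi\Vert_1$ you started from. The paper's resolution is to do the decomposition one level lower: apply Lemma~\ref{lem:variation} to the pair $(\bar\pi,\pia)$, both of which lie in $\Pi$, so that the witness $\bar h=h_{\bar\pi,\pia}$ \emph{is} in $\mathcal H$; this gives $\mathbb{E}_{s\sim\hat d'}\left[\mathbb{E}_{a\sim\bar\pi}[\bar h]-\mathbb{E}_{a\sim\pia}[\bar h]\right]$, and only then is $\hat\pi$ inserted into this \emph{linear functional}. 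The cross term $\mathbb{E}_{s\sim\hat d'}\left[\mathbb{E}_{a\sim\hat\pi}[\bar h]-\mathbb{E}_{a\sim\pia}[\bar h]\right]$ is bounded by $\mathbb{E}_{s\sim\hat d'}[\Vert\hat\pi-\pia\Vert_1]$ using only $\Vert\bar h\Vert_\infty\le1$ (no concentration needed), while $\mathbb{E}_{s\sim\hat d'}\left[\mathbb{E}_{a\sim\bar\pi}[\bar h]-\mathbb{E}_{a\sim\hat\pi}[\bar h]\right]$ is exactly the population BC objective at $(\bar\pi,\bar h)$ with $\bar h\in\mathcal H$, to which your concentration and empirical-optimality chain (together with a symmetric use of the maximizing witness for $\pia$) applies verbatim. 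With that reordering the rest of your plan, including the bookkeeping, goes through.
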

\begin{proof}
	See Appendix~\ref{proof:thm sample regularized BC} for details.
\end{proof}

\begin{remark}
	Notice that the error scales with $O(\frac{1}{\sqrt{n_2}})$ and $O(\frac{1}{n_1^{\frac{1}{4}}})$, which means that the extra samples required by behavior cloning only affects the higher-order terms. Therefore the total sample complexity $n=n_1+n_2$ is dominated by $n_1$, which coincides with the sample complexity of Theorem~\ref{thm:sample regularized}.
\end{remark}


Similarly, behavior cloning can be extended to the unregularized setting where we compete with $\piz$, and the sample complexity will remain almost the same as Corollary~\ref{cor:sample unregularized}:
\begin{corollary}
	\label{cor:sample unregularized BC}
	Fix any $\epsilon>0$. Suppose there exists $d^*_0\in D^*_0$ such that Assumption~\ref{ass:concentrability} holds. Besides, assume that Assumption~\ref{ass:concentrability},\ref{ass:V realize},\ref{ass:W realize},\ref{ass:W bound},\ref{ass:f prop},\ref{ass:V bound} and \ref{ass:pi realize} hold for $\alpha=\alpha_{\epsilon}$. Then if 
	\begin{align}
		&n_0\geq C_2\cdot\frac{\left(2{\BVE}{\BFZ}\right)^2}{\epsilon^6M_f^2(1-\gamma)^2}\cdot\log\frac{4|\VC|}{\delta},\\
		&n_1\geq C_3\cdot\frac{\left(\epsilon \BFE+2\BWE\BEE\BFZ\right)^2}{\epsilon^6M_f^2(1-\gamma)^4}\cdot\log\frac{|{\VC}||{\WC}|}{\delta}, \\
		&n_2\geq C_4\cdot\frac{(\BWE)^2}{(1-\gamma)^2\epsilon^2}\log\frac{|\Pi|}{\delta},
	\end{align}
	where $C_2,C_3,C_4$ are some universal positive constants, the output of \algbc with input $\alpha=\alpha_{\epsilon}$ satisfies
	\begin{equation}
		J(\piz)-J(\bar{\pi})\leq \epsilon,
	\end{equation}
	with at least probability $1-\delta$.
\end{corollary}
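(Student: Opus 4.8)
The plan is to reduce Corollary~\ref{cor:sample unregularized BC} to Theorem~\ref{thm:sample regularized BC} via the same small-regularization device used in the proof of Corollary~\ref{cor:sample unregularized}. First I would fix $\alpha=\alpha_\epsilon=\frac{\epsilon}{2\BFZ}$ and bound the regularization gap $J(\piz)-J(\pie)$. Since $(\de,\we)$ is optimal for the regularized LP~\eqref{eq:constrained}\eqref{eq:bellman flow 1} at level $\alpha_\epsilon$ while $\dz$ is feasible for it, we have $\E_{(s,a)\sim\de}[r(s,a)]-\alpha_\epsilon\E_{(s,a)\sim d^D}[f(\we(s,a))]\ge\E_{(s,a)\sim\dz}[r(s,a)]-\alpha_\epsilon\E_{(s,a)\sim d^D}[f(\wz(s,a))]$. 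Rearranging, using $J(\pie)=\E_{\de}[r]$, $J(\piz)=\E_{\dz}[r]$, the non-negativity of $f$, and $|f|\le\BFZ$ on $[0,\BWZ]$ (which applies since $\wz=\dz/d^D\le\BWZ$ by the assumed concentrability of $\dz$), this yields $J(\piz)-J(\pie)\le\alpha_\epsilon\E_{(s,a)\sim d^D}[f(\wz(s,a))]\le\alpha_\epsilon\BFZ=\tfrac{\epsilon}{2}$, exactly as in Corollary~\ref{cor:sample unregularized}.

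Second, I would apply Theorem~\ref{thm:sample regularized BC} at $\alpha=\alpha_\epsilon$; its hypotheses (Assumptions~\ref{ass:concentrability},\ref{ass:V realize},\ref{ass:W realize},\ref{ass:W bound},\ref{ass:f prop},\ref{ass:V bound},\ref{ass:pi realize} at $\alpha_\epsilon$) are precisely what the corollary assumes. This gives, with probability $1-\delta$,
\[
J(\pie)-J(\bar{\pi})\le\frac{4\BWE}{1-\gamma}\sqrt{\frac{6\log\frac{4|\Pi|}{\delta}}{n_2}}+\frac{50}{1-\gamma}\sqrt{\frac{\SE_{n_1,n_0,\alpha_\epsilon}(\BWE,\BFE,\BVE,\BEE)}{\alpha_\epsilon M_f}}.
\]
It then remains to check that the stated sample-size lower bounds make each of these two summands at most $\tfrac{\epsilon}{4}$; combined with the first step this produces $J(\piz)-J(\bar\pi)\le\tfrac{\epsilon}{2}+\tfrac{\epsilon}{4}+\tfrac{\epsilon}{4}=\epsilon$.

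Third, I would carry out the elementary inversions. For the behavior-cloning term, $\frac{4\BWE}{1-\gamma}\sqrt{6\log(4|\Pi|/\delta)/n_2}\le\tfrac{\epsilon}{4}$ is equivalent to $n_2\gtrsim\frac{(\BWE)^2\log(|\Pi|/\delta)}{(1-\gamma)^2\epsilon^2}$, the stated bound on $n_2$. For the statistical term I would split $\SE_{n_1,n_0,\alpha_\epsilon}$ into its $\mathcal{D}_0$-part $(1-\gamma)\BVE\sqrt{2\log(4|\VC|/\delta)/n_0}$ and its $\mathcal{D}_1$-part $(\alpha_\epsilon\BFE+\BWE\BEE)\sqrt{2\log(4|\VC||\WC|/\delta)/n_1}$, and require each to be at most $\tfrac12\cdot\frac{\alpha_\epsilon M_f(1-\gamma)^2\epsilon^2}{C}$ for a suitable universal $C$. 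Inverting the first gives $n_0\gtrsim\frac{(\BVE)^2\log(|\VC|/\delta)}{\alpha_\epsilon^2M_f^2(1-\gamma)^2\epsilon^4}$, and substituting $\alpha_\epsilon=\epsilon/(2\BFZ)$ (so $1/\alpha_\epsilon^2=4\BFZ^2/\epsilon^2$) turns it into $n_0\gtrsim\frac{(2\BVE\BFZ)^2\log(|\VC|/\delta)}{M_f^2(1-\gamma)^2\epsilon^6}$. Inverting the second gives $n_1\gtrsim\frac{(\alpha_\epsilon\BFE+\BWE\BEE)^2\log(|\VC||\WC|/\delta)}{\alpha_\epsilon^2M_f^2(1-\gamma)^4\epsilon^4}$; using $\alpha_\epsilon\BFE=\epsilon\BFE/(2\BFZ)$ and $1/\alpha_\epsilon^2=4\BFZ^2/\epsilon^2$ rewrites the numerator as $(\epsilon\BFE+2\BWE\BEE\BFZ)^2$ and yields $n_1\gtrsim\frac{(\epsilon\BFE+2\BWE\BEE\BFZ)^2\log(|\VC||\WC|/\delta)}{M_f^2(1-\gamma)^4\epsilon^6}$. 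These match the three displayed requirements up to the universal constants $C_2,C_3,C_4$.

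The proof needs no new analytic idea beyond those already in Theorem~\ref{thm:sample regularized BC} and Corollary~\ref{cor:sample unregularized}, so there is no genuine obstacle; the only thing requiring care is bookkeeping. The constants $\BWE,\BFE,\BVE,\BEE$ all depend implicitly on $\alpha_\epsilon$ (hence on $\epsilon$), so they must be kept symbolic throughout, and only $\alpha_\epsilon$ itself should be expanded when converting the intermediate $\epsilon^{-4}$ bounds into the final $\epsilon^{-6}$ ones — which is also what accounts for the rate degrading from $n\sim\epsilon^{-4}$ in Theorem~\ref{thm:sample regularized BC} to $n\sim\epsilon^{-6}$ here, mirroring the transition from Theorem~\ref{thm:sample regularized} to Corollary~\ref{cor:sample unregularized}.
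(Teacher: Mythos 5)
Your proposal is correct and follows essentially the same route as the paper, which likewise proves this corollary by running the argument of Corollary~\ref{cor:sample unregularized} (the $\alpha_\epsilon$-regularization gap bound plus sample-size inversion) with Theorem~\ref{thm:sample regularized BC} substituted for Theorem~\ref{thm:sample regularized}. Your $\tfrac{\epsilon}{2}+\tfrac{\epsilon}{4}+\tfrac{\epsilon}{4}$ split and the explicit inversions are just bookkeeping variants of the same argument.
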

\begin{proof}
	The proof is  the same as in Appendix~\ref{proof:cor sample unregularized}. The only difference is that we replace the result in Theorem~\ref{thm:sample regularized} with Theorem~\ref{thm:sample regularized BC}.
\end{proof}
\begin{remark}
	The sample complexity to obtain $\epsilon$-optimal policy is still $\tilde{O}\left(\frac{(\BWZ)^4(\BWE)^2}{\epsilon^6(1-\gamma)^{6}}\right)$ since $n_2$ is negligible compared to $n_1$.
\end{remark}
\begin{remark}
Similar to Corollary~\ref{cor:sample unregularized}, the concentrability assumptions in Corollary~\ref{cor:sample unregularized BC} can be reduced to single-policy concentrability with the help of Corollary~\ref{cor:sample unregularized constrained}.
\end{remark}

\subsection{\mainalg with $\alpha=0$} 
\label{sec:results unregularized}

From the previous discussions, we notice that when $\alpha>0$, extending from regularized problems to unregularized problems will cause worse sample complexity in \mainalg (Remark~\ref{rem:sample regularized},\ref{rem:sample unregularized constrained}). Also, the realizability assumptions are typically with respect to the regularized optimizers rather than the more natural $(\vz,\wz)$. In this section we show that by using stronger concentrability assumptions, \mainalg can still have guarantees with $\alpha=0$ under the realizability w.r.t.~$(\vz,\wz)$ and attain a faster rate. More specifically, we need the following strong concentration assumption:

\begin{assumption}[Strong concentrability]
\label{ass:strong conc}
Suppose the dataset distribution $d^D$ and some $d^*_0\in D^*_0$ satisfy
\begin{align}
&\frac{d^{\pi}(s)}{d^D(s)}\leq {\BS},\forall \pi, s\in\mathcal{S},\label{eq:str conc 2-1}\\
&\frac{{\dz}(s)}{d^D(s)}\geq {\BST}>0,\forall s\in\mathcal{S}.\label{eq:str conc 2-2}
\end{align}
\end{assumption}
\begin{remark}
Eq.~\eqref{eq:str conc 2-1} is the standard all-policy concentrability assumption in offline RL \citep{chen2019information,nachum2019algaedice,xie2020q}. In addition, Assumption~\ref{ass:strong conc} requires the density ratio of the optimal policy is lower bounded, which is related to an ergodicity assumption used in some previous works in the simulator setting~\citep{wang2017primal,wang2020randomized}.
\end{remark}
\begin{remark}
Recall the counterexample in Section~\ref{sec:algorithm}. It can be observed that $B_{w,l}=0$ in that case and thus the counterexample does not satisfy Assumption~\ref{ass:strong conc}.
\end{remark}


In the following discussion $w^*_0$ and $\pi^*_0$ are specified as the optimal density ratio and policy with respect to the $d^*_0$ in Assumption~\ref{ass:strong conc}. We need to impose some constraints on the function class ${\WC}$ and ${\VC}$ so that $d^{\hat \pi} $ can be upper bounded by $\hat w \cdot d^D$. 
\begin{assumption}
\label{ass:W good}
Suppose
\begin{align}
&{\WC}\subseteq\bar{{\WC}}:=\notag\\
&\left\{w(s,a)\geq0, \sum_{a}\pi_D(a|s)w(s,a)\geq {\BST}, \forall s\in\mathcal{S}, a\in\mathcal{A}\right\},
\end{align}
\end{assumption}
Given a function class $\WC$, this assumption is trivially satisfied by removing the $w \in \WC$ that are not in $\bar \WC$ when $\pi_D$ is known.

\begin{assumption}
	\label{ass:V good}
	Suppose
	\begin{align}
    0\leq{v}(s)\leq\frac{1}{1-\gamma},\forall s\in\mathcal{S},{v}\in {\VC}.
	\end{align}
\end{assumption}
By Assumption~\ref{ass:strong conc}, ${\wz}\in\bar{{\WC}}$ and $0\leq{\vz}\leq\frac{1}{1-\gamma}$. Therefore Assumption~\ref{ass:W good} and Assumption~\ref{ass:V good} are reasonable. 

With strong concentrability, we can show that \mainalg with $\alpha=0$ can learn an $\epsilon$-optimal policy with sample complexity $n=\tilde{O}\left(\frac{1}{\epsilon^2}\right)$:
\begin{corollary}
\label{cor:sample unregularized 0}
Suppose Assumption~\ref{ass:concentrability},\ref{ass:V realize},\ref{ass:W realize},\ref{ass:W bound}, \ref{ass:W good}, \ref{ass:V good} and \ref{ass:strong conc} hold for $\alpha=0$. Then with at least probability $1-\delta$, the output of \mainalg with input $\alpha=0$ satisfies:
\begin{equation}
	J({\piz})-J(\hat{\pi})\leq\frac{2{\BWZ}{\BS}}{(1-\gamma){\BST}}\sqrt{\frac{2\log\frac{4|{\VC}||{\WC}|}{\delta}}{n}}+\frac{{\BS}}{{\BST}}\sqrt{\frac{2\log\frac{4|{\VC}|}{\delta}}{n_0}},
\end{equation}
\end{corollary}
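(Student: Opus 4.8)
The plan is to reduce everything to two facts that are special to $\alpha=0$: that $\vz=V^{\pi^*}$, so $e_{\vz}(s,a):=r(s,a)+\gamma\sum_{s'}P(s'|s,a)\vz(s')-\vz(s)=Q^*(s,a)-V^*(s)\le 0$ pointwise; and that $\dz=\wz\cdot d^D$ is an exact solution of the Bellman flow equations~\eqref{eq:bellman flow 1}. Write $\hat d:=\hat w\cdot d^D$ (the, generally infeasible, occupancy that $\hat w$ encodes) and abbreviate $B_w:=\BWZ$. First I would record the standard regret identity: using~\eqref{eq:bellman flow 1} for $d^{\hat\pi}$, a direct computation gives $\mathbb{E}_{(s,a)\sim d^{\hat\pi}}[e_{\vz}(s,a)]=J(\hat\pi)-(1-\gamma)\mathbb{E}_{s\sim\mu_0}[\vz(s)]=J(\hat\pi)-J(\piz)$, hence
\[
J(\piz)-J(\hat\pi)=\mathbb{E}_{(s,a)\sim d^{\hat\pi}}[\,V^*(s)-Q^*(s,a)\,]=-\,\mathbb{E}_{(s,a)\sim d^{\hat\pi}}[e_{\vz}(s,a)]\ \ge 0 .
\]

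The crux is the second step, a change of measure from $d^{\hat\pi}$ to $\hat d$; this is exactly where Assumptions~\ref{ass:strong conc} and~\ref{ass:W good} are indispensable --- without them this ratio is unbounded, as the counterexample in Section~\ref{sec:algorithm} shows. Since $\hat\pi$ is extracted from $\hat w$, its conditional action law coincides with that of $\hat d$, so $d^{\hat\pi}(s,a)/\hat d(s,a)=d^{\hat\pi}(s)/\hat d(s)$. Assumption~\ref{ass:W good} forces $\hat w\in\bar{\WC}$, whence $\hat d(s)=d^D(s)\sum_a\pi_D(a|s)\hat w(s,a)\ge \BST\,d^D(s)$, and Assumption~\ref{ass:strong conc}, eq.~\eqref{eq:str conc 2-1}, gives $d^{\hat\pi}(s)\le \BS\,d^D(s)$; so the ratio is at most $\BS/\BST$ everywhere. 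Because $-e_{\vz}\ge 0$, this yields
\[
J(\piz)-J(\hat\pi)=\mathbb{E}_{(s,a)\sim\hat d}\!\left[\frac{d^{\hat\pi}(s)}{\hat d(s)}\big(-e_{\vz}(s,a)\big)\right]\le \frac{\BS}{\BST}\big(-\mathbb{E}_{(s,a)\sim\hat d}[e_{\vz}(s,a)]\big)=\frac{\BS}{\BST}\big(J(\piz)-\Lz(\vz,\hat w)\big),
\]
the last identity because $\Lz(\vz,\hat w)=(1-\gamma)\mathbb{E}_{\mu_0}[\vz]+\mathbb{E}_{\hat d}[e_{\vz}]$ while $J(\piz)=(1-\gamma)\mathbb{E}_{\mu_0}[\vz]$.

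Finally I would bound $J(\piz)-\Lz(\vz,\hat w)$ via the empirical optimality of $\hat w$ plus uniform concentration. Since $\dz$ solves~\eqref{eq:bellman flow 1}, a short computation shows $\Lz(v,\wz)=J(\piz)$ for \emph{every} $v$. Because $\hat w$ attains $\max_{w\in\WC}\min_{v\in\VC}\hat L_0(v,w)$ and $\wz\in\WC$ (Assumption~\ref{ass:W realize}), we get $\min_{v\in\VC}\hat L_0(v,\hat w)\ge\min_{v\in\VC}\hat L_0(v,\wz)\ge J(\piz)-\epsilon_1$ with $\epsilon_1:=\sup_{v\in\VC}|\hat L_0(v,\wz)-\Lz(v,\wz)|$; then, plugging $\vz\in\VC$ (Assumption~\ref{ass:V realize}) into the inner minimum and undoing concentration once more, $\Lz(\vz,\hat w)\ge J(\piz)-\epsilon_1-\epsilon_2$ with $\epsilon_2:=\sup_{w\in\WC}|\hat L_0(\vz,w)-\Lz(\vz,w)|$. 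Controlling $\epsilon_1+\epsilon_2$ is routine: Assumption~\ref{ass:V good} gives $\|v\|_\infty\le\tfrac1{1-\gamma}$, hence $\|e_v\|_\infty\le\tfrac1{1-\gamma}$, and $\|w\|_\infty\le B_w$ on $\WC$ (with $\wz\le B_w$ as well), so $|w\,e_v|\le\tfrac{B_w}{1-\gamma}$; Hoeffding plus a union bound over $\VC$ and $\WC$ then yields $\epsilon_1+\epsilon_2\le\sqrt{\tfrac{2\log(4|\VC|/\delta)}{n_0}}+\tfrac{2B_w}{1-\gamma}\sqrt{\tfrac{2\log(4|\VC||\WC|/\delta)}{n}}$. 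Substituting into the display of the previous paragraph and recalling $B_w=\BWZ$ gives exactly the claimed bound. The only genuinely non-routine ingredient is the change-of-measure step; everything else is the standard primal-dual bookkeeping already used for Theorem~\ref{thm:sample regularized} together with elementary concentration, and no strong-convexity (i.e. $M_f$) term appears since $\alpha=0$.
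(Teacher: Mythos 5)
Your proposal is correct and follows essentially the same route as the paper's proof: the performance-difference identity $J(\piz)-J(\hat\pi)=-\mathbb{E}_{d^{\hat\pi}}[e_{\vz}]$, the change of measure from $d^{\hat\pi}$ to $\hat d$ controlled by $\BS/\BST$ via Assumptions~\ref{ass:strong conc} and~\ref{ass:W good}, and the bound on $\Lz(\vz,\wz)-\Lz(\vz,\hat w)$ from empirical optimality of $\hat w$ plus uniform concentration. The only cosmetic difference is that you re-derive the latter bound inline (exploiting $\Lz(v,\wz)=J(\piz)$ for all $v$) where the paper simply invokes Lemma~\ref{lem:hat L tilde L}.
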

\begin{proof}
The key idea is to utilize Lemma~\ref{lem:hat L tilde L} to bound ${\Lz}({\vz},{\wz})-{\Lz}({\vz},\hat{w})$ and then quantify the performance difference $J({\piz})-J(\hat{\pi})$. See Appendix~\ref{proof:cor sample unregularized 0} for details.
\end{proof}

\paragraph{Comparison with $\alpha>0$ and $\alpha=0$.}
When solving the unregularized problem, \mainalg with $\alpha=0$ has better sample complexity than Corollary~\ref{cor:sample unregularized}. Also the realizability assumptions in Corollary~\ref{cor:sample unregularized 0} are with respect to the optimizers of the unregularized problem itself, which is not the case in Corollary~\ref{cor:sample unregularized} when $\epsilon$ is large. However, \mainalg with $\alpha=0$ only works under a very strong concentrability assumption (Assumption~\ref{ass:strong conc}) and thus is less general than \mainalg with $\alpha>0$.

\section{Analysis for regularized offline RL (Theorem~\ref{thm:sample regularized})}
\label{sec:analysis}
In this section we present the analysis for our main result in Theorem~\ref{thm:sample regularized}.

\subsection{Intuition: invariance of saddle points}
\label{sec:intuition}
First we would like to provide an intuitive explanation why optimizing ${\VC}\times {\WC}$ instead of $\mathbb{R}^{|\mathcal{S}|}\times\mathbb{R}^{|\mathcal{S}||\mathcal{A}|}_+$ can still bring us close to $({\va},{\wa})$. More specifically, we have the following lemma:
\begin{lemma}[Invariance of saddle points]
	\label{lem:minimax}
	Suppose $(x^*,y^*)$ is a saddle point of $f(x,y)$ over $\mathcal{X}\times\mathcal{Y}$, then for any $\mathcal{X}'\subseteq\mathcal{X}$ and $\mathcal{Y}'\subseteq\mathcal{Y}$, if $(x^*,y^*)\in\mathcal{X}'\times\mathcal{Y}'$, we have:
	\begin{align}
	&(x^*,y^*)\in\arg\min_{x\in\mathcal{X}'}\arg\max_{y\in\mathcal{Y'}}f(x,y),\\
	&(x^*,y^*)\in\arg\max_{y\in\mathcal{Y'}}\arg\min_{x\in\mathcal{X}'}f(x,y).
	\end{align}
\end{lemma}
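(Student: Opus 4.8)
The plan is to simply unwind the definition of a saddle point; the only real observation is that a saddle point on a large domain remains one on any subdomain that still contains it.

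First I would recall that, by definition, $(x^*,y^*)$ being a saddle point of $f$ over $\mathcal{X}\times\mathcal{Y}$ means
\[
f(x^*,y)\le f(x^*,y^*)\le f(x,y^*)\qquad\text{for all }x\in\mathcal{X},\ y\in\mathcal{Y}.
\]
Since $\mathcal{X}'\subseteq\mathcal{X}$ and $\mathcal{Y}'\subseteq\mathcal{Y}$, this chain of inequalities holds a fortiori for every $x\in\mathcal{X}'$ and $y\in\mathcal{Y}'$; together with the hypothesis $(x^*,y^*)\in\mathcal{X}'\times\mathcal{Y}'$, this says that $(x^*,y^*)$ is also a saddle point of $f$ over the restricted domain $\mathcal{X}'\times\mathcal{Y}'$. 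This monotonicity remark is essentially the entire content of the lemma.

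Next I would read off the min--max claim. From the left inequality (restricted to $y\in\mathcal{Y}'$), $y^*$ attains $\max_{y\in\mathcal{Y}'}f(x^*,y)$ and this value equals $f(x^*,y^*)$. For the outer minimization, fix an arbitrary $x\in\mathcal{X}'$; using $y^*\in\mathcal{Y}'$ and then the right inequality, $\sup_{y\in\mathcal{Y}'}f(x,y)\ge f(x,y^*)\ge f(x^*,y^*)=\max_{y\in\mathcal{Y}'}f(x^*,y)$, so $x^*$ minimizes $x\mapsto\sup_{y\in\mathcal{Y}'}f(x,y)$ over $\mathcal{X}'$. Interpreting the nested notation in the natural way (that is, $x^*$ minimizes the inner-maximized objective and $y^*$ maximizes $f(x^*,\cdot)$), these two facts give $(x^*,y^*)\in\arg\min_{x\in\mathcal{X}'}\arg\max_{y\in\mathcal{Y}'}f(x,y)$. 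The second claim, $(x^*,y^*)\in\arg\max_{y\in\mathcal{Y}'}\arg\min_{x\in\mathcal{X}'}f(x,y)$, follows by the identical argument with the roles of $x$ and $y$, and of $\min$ and $\max$, interchanged (the inequalities flip accordingly).

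I do not expect any serious obstacle here: the result is a one-line consequence of the definitions, and the only point requiring a touch of care is the possible non-attainment of the inner extremum at points $x\ne x^*$, which is why I phrase the outer bound with $\sup$ rather than $\max$; all the extrema that the statement actually asserts are attained are attained at $(x^*,y^*)$ by the displayed inequalities themselves.
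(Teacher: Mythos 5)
Your proof is correct and follows essentially the same route as the paper's: both use the defining saddle-point inequalities $f(x^*,y)\le f(x^*,y^*)\le f(x,y^*)$, observe that $y^*$ still maximizes $f(x^*,\cdot)$ over $\mathcal{Y}'$, and bound $\sup_{y\in\mathcal{Y}'}f(x,y)\ge f(x,y^*)\ge f(x^*,y^*)$ for each $x\in\mathcal{X}'$, with the second claim by symmetry. Your remark about using $\sup$ to sidestep non-attainment of the inner extremum at $x\ne x^*$ is a small point of extra care not made explicit in the paper, but the argument is otherwise the same.
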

\begin{proof}
	See Appendix~\ref{proof:lemma minimax}.
\end{proof}

Lemma~\ref{lem:minimax} shows that as long as a subset includes the saddle point of the original set, the saddle point will still be a minimax and maximin point with respect to the subset. We apply this to \eqref{prob:maximin2}: the saddle point $(\va,\wa)$ of \eqref{prob:maximin2}, also the solution to the regularized MDP without any restriction on function classes, is also a solution of $\max_{w\in {\WC}}\min_{{v}\in {\VC}}{\La}({v},w)$. 

We now give a brief sketch.  Since $\hat L $ is unbiased for $\La$, using uniform convergence, $\hat{L}_{\alpha}({v},w)\approx {\La}({v},w)$ with high probability. Next, use strong concavity of ${\La}({v},w)$ with respect to $w$, to show that $\hat{w}\approx {\wa}$. This implies that  $\hat{\pi}\approx {\pia}$, which is exactly Theorem~\ref{thm:sample regularized}.

\subsection{Preparation: boundedness of ${\va}$}
\label{sec:nu bound}
Before proving Theorem~\ref{thm:sample regularized}, an important ingredient is to bound ${\va}$ since $\VC$ is assumed to be a bounded set (Assumption \ref{ass:V bound}). The key idea is to utilize KKT conditions and the fact that for each $s\in\mathcal{S}$ there exists $a\in\mathcal{A}$ such that ${\wa}(s,a)>0$. The consequent bound is given in Lemma~\ref{lem:bound tilde nu}.
\begin{lemma}[Boundedness of ${\va}$]
\label{lem:bound tilde nu}
Suppose Assumption~\ref{ass:concentrability} and \ref{ass:f prop} holds, then we have:
\begin{equation}
\Vert{\va}\Vert_{\infty}\leq {\BV}:=\frac{\alpha {\BFP}+1}{1-\gamma}.
\end{equation}
\end{lemma}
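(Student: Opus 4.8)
The plan is to exploit the KKT conditions of the regularized LP \eqref{eq:constrained}\eqref{eq:bellman flow 1} (equivalently, the optimality conditions of the maximin problem \eqref{prob:maximin2}), which tie together the optimal dual variable $\va$, the optimal primal variable $\da = \wa \cdot d^D$, and the regularizer $f$. First I would write the Lagrangian of \eqref{eq:constrained}\eqref{eq:bellman flow 1} with multiplier $\va(s)$ on the Bellman-flow constraint and a multiplier (or complementary-slackness pair) for $d \ge 0$. Stationarity in $d(s,a)$ gives, at the optimum, an equation of the form $r(s,a) + \gamma \sum_{s'} P(s'|s,a)\va(s') - \va(s) = \alpha f'(\wa(s,a)) - \lambda(s,a)$, where $\lambda(s,a)\ge 0$ is the multiplier for the nonnegativity constraint, with complementary slackness $\lambda(s,a)\wa(s,a) = 0$. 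Equivalently $e_{\va}(s,a) = \alpha f'(\wa(s,a)) - \lambda(s,a)$.

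The key structural fact, already flagged in the text, is that for every state $s$ there exists an action $a_s$ with $\wa(s,a_s) > 0$: this holds because $\da$ is a valid (normalized) occupancy measure — its state marginal $\da(s) = \sum_a \da(s,a)$ satisfies the Bellman-flow equation with $\da(s) \ge (1-\gamma)\mu_0(s) > 0$ for all $s$ (using the standing assumption $\mu_0(s)>0$), so $\sum_a \da(s,a) > 0$, hence some $\wa(s,a_s) = \da(s,a_s)/d^D(s,a_s) > 0$. For that action, complementary slackness forces $\lambda(s,a_s) = 0$, so $e_{\va}(s,a_s) = \alpha f'(\wa(s,a_s))$, i.e.
\[
\va(s) = r(s,a_s) + \gamma \sum_{s'} P(s'|s,a_s)\va(s') - \alpha f'(\wa(s,a_s)).
\]
Now take absolute values and sup over $s$: using $r \in [0,1]$, the bound $|f'(x)| \le \BFP$ for $x \in [0,\BW]$ from Assumption~\ref{ass:f prop} (valid since $0 \le \wa(s,a_s) \le \BW$ by Assumption~\ref{ass:concentrability}), and $\|\gamma \sum_{s'} P(s'|s,a_s)\va(s')\|_\infty \le \gamma \|\va\|_\infty$, we get $\|\va\|_\infty \le 1 + \alpha\BFP + \gamma\|\va\|_\infty$. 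Rearranging yields $\|\va\|_\infty \le \frac{\alpha\BFP + 1}{1-\gamma} = \BV$, as claimed. (To make the sup argument rigorous one picks, for each $\eps>0$, a state $s$ within $\eps$ of the sup; or one notes $\va$ is finite-dimensional so the sup is attained.)

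The main obstacle is justifying the KKT/stationarity characterization cleanly: one must argue strong duality holds for \eqref{eq:constrained}\eqref{eq:bellman flow 1} (the objective is concave — strictly, since $f$ is strongly convex and enters with a minus sign — the constraints are linear, and the feasible set is nonempty and, after normalization, bounded, so Slater-type conditions or the LP duality machinery apply), and that $\va$ as defined via \eqref{prob:maximin2} indeed coincides with the Lagrange multiplier for \eqref{eq:bellman flow 1}. A secondary subtlety is handling the case $\sum_a \da(s,a) d^D(s,a)$-positivity versus $d^D(s,a)$ possibly being zero — but this is a non-issue because $\wa \cdot d^D = \da$ and we only need $\da(s) > 0$, which the Bellman-flow equation guarantees from $\mu_0(s)>0$. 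Once the KKT stationarity equation is in hand, the remaining estimate is the short contraction-style computation above.
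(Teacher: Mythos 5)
Your proposal is correct and follows essentially the same route as the paper: both arguments use the first-order optimality condition to get $e_{\va}(s,a_s)=\alpha f'(\wa(s,a_s))$ at an action $a_s$ with $\wa(s,a_s)>0$ (whose existence follows from the Bellman-flow equation and $\mu_0(s)>0$), bound this by $\alpha\BFP$ via Assumption~\ref{ass:concentrability}, and close with the $(1-\gamma)$ contraction estimate. The only cosmetic difference is that the paper phrases the optimality condition as the closed-form maximizer $\wa(s,a)=\max\bigl(0,(f')^{-1}(e_{\va}(s,a)/\alpha)\bigr)$ rather than via an explicit complementary-slackness multiplier, which is equivalent.
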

\begin{proof}
See Appendix~\ref{proof:lemma bound tilde nu}.
\end{proof}

\subsection{Proof sketch of Theorem~\ref{thm:sample regularized}}
As stated in Section~\ref{sec:intuition}, our proof consists of (1) using concentration inequalities to bound $|{\La}({v},w)-\hat{L}_{\alpha}({v},w)|$, (2) using the invariance of saddle points and concentration bounds to characterize the error $\Vert\hat{w}-{\wa}\Vert_{2,d^D}$ and (3) analyzing the difference between $\hat{\pi}$ and ${\pia}$. We will elaborate on each of these steps in this section.
\paragraph{Concentration of $\hat{L}_{\alpha}({v},w)$.}
First, it can be observed that $\hat{L}_{\alpha}({v},w)$ is an unbiased estimator of ${\La}({v},w)$, as shown in the following lemma
\begin{lemma}
\label{lem:unbiased hat L}
\begin{equation}
\mathbb{E}_{\mathcal{D}}[\hat{L}_{\alpha}({v},w)]={\La}({v},w),\quad\forall{v}\in {\VC},w\in {\WC},
\end{equation}
where $\mathbb{E}_{\mathcal{D}}[\cdot]$ is the expectation with respect to the samples in $\mathcal{D}$, i.e., $(s_i,a_i)\sim d^D, s'_i\sim P(\cdot|s_i,a_i)$.
\end{lemma}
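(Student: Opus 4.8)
The plan is to verify unbiasedness directly, taking the expectation over the datasets $\mathcal{D}$ and $\mathcal{D}_0$ term by term and invoking linearity of expectation together with the i.i.d.\ sampling assumptions from Section~\ref{sec:setting}. Since $\hat{L}_{\alpha}(v,w)$ is a sum of three empirical averages, it suffices to match each average with its population counterpart in $\La(v,w)$ as written in \eqref{prob:maximin2}.

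First I would handle the initial-state term: because $s_{0,j}\sim\mu_0$ i.i.d., linearity gives $\mathbb{E}[(1-\gamma)\frac{1}{n_0}\sum_{j=1}^{n_0}v(s_{0,j})] = (1-\gamma)\mathbb{E}_{s\sim\mu_0}[v(s)]$. Likewise, since $(s_i,a_i)\sim d^D$ i.i.d., the regularization term satisfies $\mathbb{E}[\frac{1}{n}\sum_{i=1}^n(-\alpha f(w(s_i,a_i)))] = -\alpha\,\mathbb{E}_{(s,a)\sim d^D}[f(w(s,a))]$. The only term needing a touch of care is the third one, $\frac{1}{n}\sum_{i=1}^n w(s_i,a_i)\,e_v(s_i,a_i,r_i,s'_i)$ with $e_v(s_i,a_i,r_i,s'_i)=r_i+\gamma v(s'_i)-v(s_i)$: here I would condition on $(s_i,a_i)$ and use the tower property. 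Since $r_i=r(s_i,a_i)$ and $s'_i\sim P(\cdot\,|\,s_i,a_i)$, we get $\mathbb{E}[r_i+\gamma v(s'_i)-v(s_i)\mid s_i,a_i]=r(s_i,a_i)+\gamma\sum_{s'}P(s'|s_i,a_i)v(s')-v(s_i)=e_v(s_i,a_i)$, and taking the outer expectation over $(s_i,a_i)\sim d^D$ yields $\mathbb{E}_{(s,a)\sim d^D}[w(s,a)e_v(s,a)]$. Summing the three pieces reproduces $\La(v,w)$ exactly.

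There is no genuine obstacle here: the statement is a routine unbiasedness check that follows purely from linearity of expectation and the i.i.d.\ data-generation model. The only subtlety worth flagging in the write-up is the conditioning step for the bootstrap term, i.e.\ conditioning on $(s_i,a_i)$ before averaging over $s'_i$, so that $\gamma v(s'_i)$ is correctly replaced by its conditional mean $\gamma\sum_{s'}P(s'|s_i,a_i)v(s')$; the identity then holds for every fixed $v\in\VC$ and $w\in\WC$ since the argument does not use any structure of the function classes.
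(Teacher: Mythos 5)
Your proposal is correct and follows essentially the same route as the paper's proof: both apply the tower property by conditioning on $(s_i,a_i)$ so that $r_i+\gamma v(s'_i)-v(s_i)$ collapses to $e_v(s_i,a_i)$, and handle the remaining terms by linearity of expectation. No gaps.
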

\begin{proof}
See Appendix~\ref{proof:lemma unbiased hat L}.
\end{proof}

On the other hand, note that from the boundedness of ${\VC},{\WC}$ and $f$ (Assumption~\ref{ass:V bound}, \ref{ass:W bound}, \ref{ass:f prop}), $\hat{L}_{\alpha}({v},w)$ is also bounded. Combining with Lemma~\ref{lem:unbiased hat L}, we have the following lemma: 
\begin{lemma}
\label{lem:hat L conc}
Suppose Assumption~\ref{ass:W bound},\ref{ass:f prop},\ref{ass:V bound} hold. Then with at least probability $1-\delta$, for all ${v}\in {\VC}$ and $w\in {\WC}$ we have:
\begin{equation}
	|\hat{L}_{\alpha}({v},w)-{\La}({v},w)|\leq\SE_{n,n_0,\alpha}(\BW,\BF,\BV,\BE):={\lstat},
\end{equation}
\end{lemma}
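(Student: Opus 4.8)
The plan is to view $\hat L_\alpha(v,w)$ as an average of bounded independent terms whose expectation equals $\La(v,w)$ (Lemma~\ref{lem:unbiased hat L}), apply Hoeffding's inequality for each fixed pair $(v,w)$, and then take a union bound over the finite classes $\VC$ and $\WC$. First I would split the deviation into the part coming from $\mathcal{D}_0$ and the part coming from $\mathcal{D}$: writing $g_{v,w}(s,a,r,s') := -\alpha f(w(s,a)) + w(s,a)\big(r+\gamma v(s')-v(s)\big)$, we have $\hat L_\alpha(v,w)-\La(v,w) = \Delta_0(v) + \Delta_1(v,w)$, where $\Delta_0(v) = (1-\gamma)\big(\tfrac1{n_0}\sum_{j} v(s_{0,j}) - \E_{s\sim\mu_0}[v(s)]\big)$ depends on $v$ and $\mathcal{D}_0$ only, and $\Delta_1(v,w) = \tfrac1n\sum_i g_{v,w}(s_i,a_i,r_i,s'_i) - \E_{(s,a)\sim d^D}[-\alpha f(w(s,a)) + w(s,a)e_v(s,a)]$. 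By Lemma~\ref{lem:unbiased hat L}, $\E[\Delta_0(v)]=\E[\Delta_1(v,w)]=0$.

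Then I would record the ranges of the summands. By Assumption~\ref{ass:V bound} each $(1-\gamma)v(s_{0,j})$ lies in an interval of width $2(1-\gamma)\BV$. For $\Delta_1$, using $r\in[0,1]$, $\|v\|_\infty\le\BV$, $0\le w(s,a)\le\BW$ and $|f|\le\BF$ on $[0,\BW]$ (Assumptions~\ref{ass:W bound},\ref{ass:f prop},\ref{ass:V bound}), one gets $|e_v(s,a,r,s')| = |r+\gamma v(s')-v(s)| \le 1 + (1+\gamma)\BV = \BE$ and hence $|g_{v,w}| \le \alpha|f(w(s,a))| + |w(s,a)|\,|e_v| \le \alpha\BF + \BW\BE$, so each summand of $\Delta_1$ lies in an interval of width $2(\alpha\BF + \BW\BE)$. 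Hoeffding's inequality then gives, for fixed $v$, $\Pr[|\Delta_0(v)|\ge t_0] \le 2\exp\!\big(-\tfrac{n_0 t_0^2}{2(1-\gamma)^2\BV^2}\big)$, and, for fixed $(v,w)$, $\Pr[|\Delta_1(v,w)|\ge t_1] \le 2\exp\!\big(-\tfrac{n t_1^2}{2(\alpha\BF+\BW\BE)^2}\big)$.

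Finally I would union-bound: over $v\in\VC$ for $\Delta_0$ (only $|\VC|$ events, since $\Delta_0$ does not involve $w$) and over $(v,w)\in\VC\times\WC$ for $\Delta_1$; allotting failure probability $\delta/2$ to each batch yields $t_0 = (1-\gamma)\BV\sqrt{2\log(4|\VC|/\delta)/n_0}$ and $t_1 = (\alpha\BF+\BW\BE)\sqrt{2\log(4|\VC||\WC|/\delta)/n}$, so that with probability at least $1-\delta$, simultaneously for all $v\in\VC$ and $w\in\WC$, $|\hat L_\alpha(v,w)-\La(v,w)| \le |\Delta_0(v)| + |\Delta_1(v,w)| \le t_0 + t_1 = \SE_{n,n_0,\alpha}(\BW,\BF,\BV,\BE)$, which is the claim. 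There is no substantive obstacle here; the only points requiring care are (i) verifying the range bound $|e_v|\le\BE$ and the resulting bound $|g_{v,w}|\le\alpha\BF+\BW\BE$, and (ii) noticing that the $\mathcal{D}_0$ term depends on $v$ alone, which is exactly why the first term of $\SE$ carries $\log(4|\VC|/\delta)$ rather than $\log(4|\VC||\WC|/\delta)$.
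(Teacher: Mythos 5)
Your proposal is correct and follows essentially the same route as the paper's proof: decompose the deviation into the $\mathcal{D}_0$ part (depending on $v$ only) and the $\mathcal{D}$ part, bound $|e_v|\leq \BE$ and the summand by $\alpha\BF+\BW\BE$ via Assumptions~\ref{ass:W bound},~\ref{ass:f prop},~\ref{ass:V bound}, then apply Hoeffding with a union bound over $|\VC|$ and $|\VC||\WC|$ respectively, splitting the failure probability as $\delta/2+\delta/2$. The constants and logarithmic factors you obtain match the definition of $\SE_{n,n_0,\alpha}(\BW,\BF,\BV,\BE)$ exactly.
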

\begin{proof}
See Appendix~\ref{proof:hat L conc}.
\end{proof}

\paragraph{Bounding $\Vert\hat{w}-{\wa}\Vert_{2,d^D}$.}
To bound $\Vert\hat{w}-{\wa}\Vert_{2,d^D}$, we first need to characterize ${\La}({\va},{\wa})-{\La}({\va},\hat{w})$. Inspired by Lemma~\ref{lem:minimax}, we decompose ${\La}({\va},{\wa})-{\La}({\va},\hat{w})$ carefully and utilize the concentration results Lemma~\ref{lem:hat L conc}, which leads us to the following lemma:
\begin{lemma}
\label{lem:hat L tilde L}
Suppose Assumption~\ref{ass:concentrability},\ref{ass:V realize},\ref{ass:W realize},\ref{ass:W bound},\ref{ass:f prop} and \ref{ass:V bound} hold. Then with at least probability $1-\delta$,
\begin{equation}
{\La}({\va},{\wa})-{\La}({\va},\hat{w})\leq 2{\lstat}.
\end{equation}
\end{lemma}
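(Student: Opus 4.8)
The plan is to bound $\La(\va,\wa)-\La(\va,\hat w)$ by carefully decomposing it into pieces that are individually controlled either by the concentration result (Lemma~\ref{lem:hat L conc}) or by the optimality/saddle-point properties that $\wa,\va$ and $\hat w,\hat v$ enjoy. First I would write the target quantity as a telescoping sum. Since $(\va,\wa)$ is a saddle point of $\La$ over $\mathbb{R}^{|\cS|}\times\mathbb{R}^{|\cS||\cA|}_+$ (the solution of the unrestricted regularized MDP), and by Assumptions~\ref{ass:V realize},\ref{ass:W realize} it lies in $\VC\times\WC$, Lemma~\ref{lem:minimax} gives that $(\va,\wa)$ is also a max-min point of $\La$ over $\WC\times\VC$; in particular $\La(\va,\wa)=\max_{w\in\WC}\min_{v\in\VC}\La(v,w)$ and $\La(\va,\wa)\le\La(v,\wa)$ for all $v\in\VC$.

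Next I would insert the empirical objective. The natural chain is
\begin{align*}
\La(\va,\wa)-\La(\va,\hat w)
&=\big(\La(\va,\wa)-\hat L_\alpha(\va,\wa)\big)
+\big(\hat L_\alpha(\va,\wa)-\hat L_\alpha(\hat v,\hat w)\big)\\
&\quad+\big(\hat L_\alpha(\hat v,\hat w)-\hat L_\alpha(\va,\hat w)\big)
+\big(\hat L_\alpha(\va,\hat w)-\La(\va,\hat w)\big).
\end{align*}
The first and last brackets are each at most $\lstat$ by Lemma~\ref{lem:hat L conc} (applied with $v=\va,w=\wa$ and $v=\va,w=\hat w$ respectively, both of which lie in $\VC\times\WC$). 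For the second bracket I use that $\hat w$ is the empirical max-min choice: $\hat L_\alpha(\va,\wa)\le \max_{w\in\WC}\min_{v\in\VC}\hat L_\alpha(v,w)=\min_{v\in\VC}\hat L_\alpha(v,\hat w)\le\hat L_\alpha(\hat v,\hat w)$, wait—one must be careful with the direction, so I would instead bound $\hat L_\alpha(\va,\wa)\le\max_{w}\min_v\hat L_\alpha(v,w)$ and relate $\max_w\min_v\hat L_\alpha$ to $\hat L_\alpha(\hat v,\hat w)$ using that $\hat v=\arg\min_{v\in\VC}\hat L_\alpha(v,\hat w)$ and $\hat w=\arg\max_{w\in\WC}\min_{v\in\VC}\hat L_\alpha(v,w)$. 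For the third bracket, $\hat v=\arg\min_{v\in\VC}\hat L_\alpha(v,\hat w)$ gives $\hat L_\alpha(\hat v,\hat w)-\hat L_\alpha(\va,\hat w)\le 0$ since $\va\in\VC$. Collecting terms yields the bound $2\lstat$.

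The step I expect to be the main obstacle is handling the "middle" terms correctly — i.e., getting the inequality directions right when comparing $\hat L_\alpha(\va,\wa)$, $\max_{w\in\WC}\min_{v\in\VC}\hat L_\alpha(v,w)$, and $\hat L_\alpha(\hat v,\hat w)$, since $\hat v$ only minimizes $\hat L_\alpha(\cdot,\hat w)$ over $\VC$ and need not equal $\arg\min_v$ of the population objective. The clean way is: by definition of $(\hat w,\hat v)$ as the empirical max-min solution, $\hat L_\alpha(\hat v,\hat w)=\min_{v\in\VC}\hat L_\alpha(v,\hat w)=\max_{w\in\WC}\min_{v\in\VC}\hat L_\alpha(v,w)\ge \min_{v\in\VC}\hat L_\alpha(v,\wa)$. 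But $\La(\va,\wa)\le\La(v,\wa)$ for all $v\in\VC$, and combining with two applications of Lemma~\ref{lem:hat L conc} (to pass between $\La$ and $\hat L_\alpha$ at the point $(v^\circ,\wa)$ where $v^\circ=\arg\min_{v\in\VC}\hat L_\alpha(v,\wa)$, and at $(\va,\hat w)$) converts $\La(\va,\wa)-\La(\va,\hat w)$ into $\min_{v\in\VC}\hat L_\alpha(v,\wa)-\hat L_\alpha(\va,\hat w)+2\lstat \le \hat L_\alpha(\hat v,\hat w)-\hat L_\alpha(\va,\hat w)+2\lstat\le 2\lstat$, using $\hat v$'s optimality in the last step. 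Once this is in hand the rest of Theorem~\ref{thm:sample regularized} follows by the $\alpha M_f$-strong concavity of $\La(\va,\cdot)$ in $w$ (in the $\|\cdot\|_{2,d^D}$ geometry) together with $\wa$ being the maximizer, which turns the $\La$-gap into the claimed $\|\hat w-\wa\|_{2,d^D}$ bound.
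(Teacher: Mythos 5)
Your final argument is correct and is essentially the paper's own proof: the paper decomposes $\La(\va,\hat w)-\La(\va,\wa)$ into five terms using the intermediate point $\hat v(\wa)=\arg\min_{v\in\VC}\hat L_\alpha(v,\wa)$ (your $v^\circ$), bounds two of them by $-\lstat$ via Lemma~\ref{lem:hat L conc}, and shows the other three are nonnegative using exactly the optimality facts you invoke — the saddle-point property $\va=\arg\min_{v}\La(v,\wa)$ over the unrestricted space, the empirical max-min optimality of $\hat w$, and the empirical minimality of $\hat v$ over $\VC$. Your initial four-term telescoping (which compares $\hat L_\alpha(\va,\wa)$ to $\hat L_\alpha(\hat v,\hat w)$ directly) would indeed not close, but you correctly identified and repaired this, and the repaired chain of inequalities matches the paper's argument term for term.
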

\begin{proof}
See Appendix~\ref{proof:hat L tilde L}.
\end{proof}

Then due to the strong convexity of $f$ which leads to $L_\alpha$ being strongly concave in $w$, $\Vert\hat{w}-{\wa}\Vert_{2,d^D}$ can be naturally bounded by Lemma~\ref{lem:hat L tilde L},
\begin{lemma}
\label{lem: hat w error}
Suppose Assumption~\ref{ass:concentrability},\ref{ass:V realize},\ref{ass:W realize},\ref{ass:W bound},\ref{ass:f prop},\ref{ass:V bound} hold. Then with at least probability $1-\delta$,
\begin{equation}
\label{eq:lem hat w 1}
\Vert\hat{w}-{\wa}\Vert_{2,d^D}\leq \sqrt{\frac{4{\lstat}}{\alpha M_f}},
\end{equation}
which implies that
\begin{equation}
\label{eq:lem hat w 2}
\Vert \hat{d}-{\da}\Vert_{1}\leq \sqrt{\frac{4{\lstat}}{\alpha M_f}},
\end{equation}
where $\hat{d}(s,a)=\hat{w}(s,a)d^D(s,a),\forall s,a$.
\end{lemma}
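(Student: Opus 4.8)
The plan is to convert the objective-value gap supplied by Lemma~\ref{lem:hat L tilde L} into a distance bound on $\hat w$ by exploiting the strong concavity of $w\mapsto\La(\va,w)$ together with the first-order optimality of $\wa$. The key structural fact I would establish first is that $\La(\va,\cdot)$ is $\alpha M_f$-strongly concave with respect to the weighted norm $\|\cdot\|_{2,d^D}$. Indeed, in
\[
\La(\va,w)=(1-\gamma)\E_{s\sim\mu_0}[\va(s)]-\alpha\E_{(s,a)\sim d^D}[f(w(s,a))]+\E_{(s,a)\sim d^D}[w(s,a)e_{\va}(s,a)],
\]
everything except $-\alpha\sum_{s,a}d^D(s,a)f(w(s,a))$ is affine in $w$, and the Hessian of that term in $w$ is diagonal with entries $-\alpha\,d^D(s,a)f''(w(s,a))\le-\alpha M_f\,d^D(s,a)$ by the $M_f$-strong convexity of $f$ (Assumption~\ref{ass:f prop}); continuous differentiability of $f$ guarantees $\nabla_w\La(\va,w)$ exists.

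Next I would invoke the first-order (variational) condition at $\wa$. Since $(\va,\wa)$ is the saddle point of \eqref{prob:maximin2}, $\wa$ maximizes $w\mapsto\La(\va,w)$ over the feasible cone $\{w\ge 0\}$, and $\hat w\ge 0$ (Assumption~\ref{ass:W bound}) is feasible for this maximization, so $\langle\nabla_w\La(\va,\wa),\,\hat w-\wa\rangle\le 0$. Combining this with the strong-concavity quadratic lower bound gives
\[
\La(\va,\wa)-\La(\va,\hat w)\ \ge\ \langle\nabla_w\La(\va,\wa),\,\wa-\hat w\rangle+\frac{\alpha M_f}{2}\|\hat w-\wa\|_{2,d^D}^2\ \ge\ \frac{\alpha M_f}{2}\|\hat w-\wa\|_{2,d^D}^2 ,
\]
and then Lemma~\ref{lem:hat L tilde L} bounds the left-hand side by $2\lstat$, which rearranges to \eqref{eq:lem hat w 1}.

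Finally, for \eqref{eq:lem hat w 2} I would use that $\da=\wa\cdot d^D$ by the definition $\wa=\da/d^D$, so $\|\hat d-\da\|_1=\sum_{s,a}d^D(s,a)\,|\hat w(s,a)-\wa(s,a)|=\|\hat w-\wa\|_{1,d^D}$, and since $d^D$ is a probability distribution, Cauchy--Schwarz (equivalently Jensen's inequality) yields $\|\hat w-\wa\|_{1,d^D}\le\|\hat w-\wa\|_{2,d^D}$, giving the stated bound. I do not expect a genuinely hard step here: the only points needing care are carrying the $d^D$-weighting correctly through the strong-concavity estimate, and checking that $\hat w$ is feasible for the \emph{unrestricted} maximization $\max_{w\ge 0}\La(\va,w)$ (so that the variational inequality at $\wa$ applies) even though $\hat w$ is produced by optimizing only over the restricted class $\WC$.
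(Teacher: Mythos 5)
Your proposal is correct and follows essentially the same route as the paper: establish $\alpha M_f$-strong concavity of $w\mapsto\La(\va,w)$ in $\Vert\cdot\Vert_{2,d^D}$, combine it with the first-order optimality of $\wa$ over the cone $\{w\geq0\}$ (for which $\hat w$ is feasible by Assumption~\ref{ass:W bound}) to convert the value gap from Lemma~\ref{lem:hat L tilde L} into a squared-distance bound, and then pass from $\Vert\cdot\Vert_{2,d^D}$ to $\Vert\cdot\Vert_{1,d^D}=\Vert\hat d-\da\Vert_1$ by Jensen. The only cosmetic difference is that you verify strong concavity via a Hessian computation, which presumes $f''$ exists, whereas the paper (and Assumption~\ref{ass:f prop}) only requires $f$ to be continuously differentiable and $M_f$-strongly convex, which already yields the same quadratic bound without second derivatives.
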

\begin{proof}
See Appendix~\ref{proof:hat w error}.
\end{proof}
This proves the third part of (\ref{eq:thm1 1}) in Theorem~\ref{thm:sample regularized}. 

\paragraph{Bounding $\mathbb{E}_{s\sim {\da}}[\Vert{\pia}(s,\cdot)-\hat{\pi}(s,\cdot)\Vert_1]$.}
To obtain the second part of (\ref{eq:thm1 1}), we notice that ${\pia}$ (or $\hat{\pi}$) can be derived explicitly from ${\wa}$ (or $\hat{w}$) by (\ref{eq:tilde d tilde w}) (or (\ref{eq:hat pi})). However, the mapping ${\wa}\mapsto{\pia}$ (or $\hat{w}\mapsto\hat{\pi}$) is not linear and discontinuous when ${\da}(s)=0$ (or $\hat{d}(s)=0$), which makes the mapping complicated. To tackle with this problem, we first decompose the error $\Vert\hat{w}-{\wa}\Vert_{2,d^D}$ and assign to each state $s\in\mathcal{S}$, then consider the case where $\hat{d}(s)>0$ and $\hat{d}(s)=0$ separately. Consequently, we can obtain the following lemma:
\begin{lemma}
\label{lem:tilde w tilde pi}
\begin{equation}
\label{eq:tilde w tilde pi}
\mathbb{E}_{s\sim {\da}}[\Vert{\pia}(s,\cdot)-\hat{\pi}(s,\cdot)\Vert_1]\leq2\Vert\hat{w}-{\wa}\Vert_{2,d^D}.
\end{equation}
\end{lemma}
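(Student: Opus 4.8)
The plan is to reduce the statement to an elementary inequality about $\ell_1$-renormalization of nonnegative vectors, applied to each state $s$ separately and then averaged against $\da$. Throughout, write $\hat d(s,a) := \hat w(s,a)\,d^D(s,a)$ as in Lemma~\ref{lem: hat w error}, and recall that $\da(s,a) = \wa(s,a)\,d^D(s,a)$ by the change of variables used to derive \mainalg. Since $d^D(s,a) = d^D(s)\pi_D(a|s)$, we have $\hat d(s,a) = \hat w(s,a)\pi_D(a|s)\,d^D(s)$, and one checks that the policy $\hat\pi(\cdot|s)$ defined in \eqref{eq:hat pi} is exactly the action-distribution $\hat d(s,\cdot)/\hat d(s)$ whenever $\hat d(s) > 0$, and the uniform distribution otherwise; the analogous statement holds for $\pia$ via \eqref{eq:tilde d tilde w}. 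Because states with $\da(s) = 0$ contribute nothing to $\mathbb{E}_{s\sim\da}[\,\cdot\,]$, and these include every $s$ with $d^D(s) = 0$ (by Assumption~\ref{ass:concentrability}, which forces $\da(s,a) = 0$ wherever $d^D(s,a) = 0$), it suffices to bound the per-state terms for $s$ with $\da(s) > 0$.

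The key elementary fact I would prove is: for nonnegative $p,q\in\mathbb{R}^{|\mathcal{A}|}$ with $P := \|p\|_1 > 0$, if $\bar q$ denotes $q/\|q\|_1$ when $\|q\|_1 > 0$ and the uniform distribution otherwise, then $\|p/P - \bar q\|_1 \le (2/P)\,\|p - q\|_1$. When $Q := \|q\|_1 > 0$ this follows from the decomposition $p/P - q/Q = (p-q)/P + q(1/P - 1/Q)$, the triangle inequality, and $|Q - P| \le \|p - q\|_1$; when $q = 0$ it is trivial since the left side is at most $2 = (2/P)\,\|p\|_1 = (2/P)\,\|p-q\|_1$. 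Applying this with $p = \da(s,\cdot)$ and $q = \hat d(s,\cdot)$ (so $P = \da(s)$) yields, for each $s$ with $\da(s) > 0$, the bound $\da(s)\,\|\pia(\cdot|s) - \hat\pi(\cdot|s)\|_1 \le 2\sum_a |\da(s,a) - \hat d(s,a)|$.

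Summing over all states then gives $\mathbb{E}_{s\sim\da}[\|\pia(\cdot|s) - \hat\pi(\cdot|s)\|_1] \le 2\,\|\hat d - \da\|_1 = 2\,\mathbb{E}_{(s,a)\sim d^D}[\,|\hat w(s,a) - \wa(s,a)|\,] \le 2\,\|\hat w - \wa\|_{2,d^D}$, where the last step is Cauchy--Schwarz (Jensen) against the probability distribution $d^D$; this is the claimed inequality. I expect the only real subtlety to be the discontinuity of the map $w\mapsto\pi$ at states where the induced occupancy vanishes: there the renormalized policy is replaced by the uniform one, but the elementary inequality above already covers this case, because on such a state $\sum_a|\da(s,a) - \hat d(s,a)|$ equals $\da(s)$, so no separate estimate is needed. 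The remaining steps are bookkeeping.
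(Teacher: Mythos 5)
Your proposal is correct and follows essentially the same route as the paper's proof: the same per-state decomposition of $\da(s)\bigl(\pia(\cdot|s)-\hat\pi(\cdot|s)\bigr)$ into a term proportional to $\da(s,\cdot)-\hat d(s,\cdot)$ plus a normalization-mismatch term bounded by $|\da(s)-\hat d(s)|\le\sum_a|\da(s,a)-\hat d(s,a)|$, the same case split on whether $\hat d(s)>0$, and the same final chain $\|\hat d-\da\|_1=\|\hat w-\wa\|_{1,d^D}\le\|\hat w-\wa\|_{2,d^D}$. Packaging the per-state estimate as a standalone $\ell_1$-renormalization inequality is a presentational difference only.
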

\begin{proof}
See Appendix~\ref{proof:tilde w tilde pi}.
\end{proof}

Combining Equation \eqref{eq:lem hat w 1}, \eqref{eq:tilde w tilde pi}, and the definition of ${\lstat} $ from Lemma \ref{lem:hat L conc}, gives us the second part of Theorem \ref{thm:sample regularized}.


\paragraph{Bounding $J(\pia)-J(\hat{\pi})$.}
To complete the proof of Theorem~\ref{thm:sample regularized}, we only need to bound $J(\pia)-J(\hat{\pi})$ via the bounds on $\mathbb{E}_{s\sim {\da}}[\Vert{\pia}(s,\cdot)-\hat{\pi}(s,\cdot)\Vert_1]$, which is shown in the following lemma:
\begin{lemma}
	\label{lem:tilde pi performance}
	\begin{equation}
		\label{eq:tilde pi performance}
		J(\pia)-J(\hat{\pi})\leq\frac{1}{1-\gamma}\mathbb{E}_{s\sim {\da}}[\Vert{\pia}(s,\cdot)-\hat{\pi}(s,\cdot)\Vert_1].
	\end{equation}
\end{lemma}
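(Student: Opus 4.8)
The plan is to establish Lemma~\ref{lem:tilde pi performance} via the performance difference lemma, exploiting the fact that rewards lie in $[0,1]$. First I would recall the performance difference identity: for any two policies $\pi$ and $\pi'$,
\begin{align}
J(\pi)-J(\pi') = \mathbb{E}_{s\sim d^{\pi}}\left[\sum_{a}\bigl(\pi(a|s)-\pi'(a|s)\bigr)Q^{\pi'}(s,a)\right].\notag
\end{align}
Applying this with $\pi=\pia$ and $\pi'=\hat\pi$ gives $J(\pia)-J(\hat\pi) = \mathbb{E}_{s\sim \da}\bigl[\sum_a (\pia(a|s)-\hat\pi(a|s))Q^{\hat\pi}(s,a)\bigr]$, where I use $d^{\pia}=\da$. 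The key point is that $\da$ here is the \emph{discounted} occupancy, which differs from the usual performance difference lemma by a factor; since $J(\pi)=(1-\gamma)V^\pi(\mu_0)$ and $d^\pi$ is normalized to sum to one, the identity above holds with the $(1-\gamma)$ already absorbed, so no extra constant appears. (I would double-check the exact normalization convention used in \eqref{prob:original problem}, where $J(\pi)=\mathbb{E}_{(s,a)\sim d^\pi}[r(s,a)]$ and $d^\pi$ integrates to one.)

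Next I would bound the inner sum pointwise. For each fixed $s$,
\begin{align}
\sum_{a}\bigl(\pia(a|s)-\hat\pi(a|s)\bigr)Q^{\hat\pi}(s,a) \le \Vert \pia(\cdot|s)-\hat\pi(\cdot|s)\Vert_1 \cdot \max_{a}\bigl|Q^{\hat\pi}(s,a)\bigr| \le \frac{1}{1-\gamma}\Vert \pia(\cdot|s)-\hat\pi(\cdot|s)\Vert_1,\notag
\end{align}
using H\"older's inequality and the uniform bound $\Vert Q^{\hat\pi}\Vert_\infty \le \frac{1}{1-\gamma}$, which follows from $r\in[0,1]$ and the geometric series $\sum_{t\ge0}\gamma^t = \frac{1}{1-\gamma}$. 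Taking expectation over $s\sim\da$ and combining with the performance difference identity yields exactly
\begin{align}
J(\pia)-J(\hat\pi) \le \frac{1}{1-\gamma}\mathbb{E}_{s\sim\da}\bigl[\Vert \pia(\cdot|s)-\hat\pi(\cdot|s)\Vert_1\bigr],\notag
\end{align}
which is the claim.

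The main subtlety — not so much an obstacle as a bookkeeping point to get right — is the interaction between the discounting and the normalization of the occupancy measure, i.e.\ making sure the $(1-\gamma)$ factor lands in the right place. One clean way to avoid any ambiguity is to prove the identity from scratch by a telescoping argument: write $V^{\pia}(\mu_0) - V^{\hat\pi}(\mu_0)$ as a telescoping sum over the advantage of $\pia$ relative to $\hat\pi$ along trajectories drawn under $\pia$, which after collecting the discount factors produces precisely $\frac{1}{1-\gamma}\mathbb{E}_{s\sim\da}[\cdots]$ once we multiply through by $(1-\gamma)$ to pass from $V$ to $J$. Everything else is the elementary H\"older step and the $\frac{1}{1-\gamma}$ value bound, both of which are routine.
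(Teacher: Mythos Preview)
Your proposal is correct and matches the paper's proof essentially line for line: the paper also invokes the performance difference lemma with $\pi'=\pia$ and $\pi=\hat\pi$, obtains $J(\pia)-J(\hat\pi)=\mathbb{E}_{s\sim\da}[\langle Q^{\hat\pi}(s,\cdot),\pia(\cdot|s)-\hat\pi(\cdot|s)\rangle]$, and then bounds via H\"older and $\Vert Q^{\hat\pi}\Vert_\infty\le \frac{1}{1-\gamma}$. Your careful accounting of the $(1-\gamma)$ normalization is exactly the bookkeeping the paper does in passing from $V$ to $J$.
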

\begin{proof}
	See Appendix~\ref{proof:tilde pi performance}.
\end{proof}
This concludes the proof of Theorem \ref{thm:sample regularized}.




%
%
\ifdefined\isarxivversion
\bibliographystyle{apalike}
\else
\fi
\bibliography{ref.bib, RL.bib}

\begin{thebibliography}{}

\bibitem[Agarwal et~al., 2019]{agarwal2019reinforcement}
Agarwal, A., Jiang, N., Kakade, S.~M., and Sun, W. (2019).
\newblock Reinforcement learning: Theory and algorithms.
\newblock Technical report.

\bibitem[Agarwal et~al., 2020]{agarwal2020flambe}
Agarwal, A., Kakade, S., Krishnamurthy, A., and Sun, W. (2020).
\newblock Flambe: Structural complexity and representation learning of low rank
  mdps.
\newblock {\em arXiv preprint arXiv:2006.10814}.

\bibitem[Amortila et~al., 2020]{amortila2020variant}
Amortila, P., Jiang, N., and Xie, T. (2020).
\newblock A variant of the wang-foster-kakade lower bound for the discounted
  setting.
\newblock {\em arXiv preprint arXiv:2011.01075}.

\bibitem[Antos et~al., 2008a]{antos08learning}
Antos, A., Szepesv\'{a}ri, C., and Munos, R. (2008a).
\newblock Learning near-optimal policies with bellman-residual minimization
  based fitted policy iteration and a single sample path.
\newblock {\em Machine Learning}, 71(1):89--129.

\bibitem[Antos et~al., 2008b]{antos2008learning}
Antos, A., Szepesv{\'a}ri, C., and Munos, R. (2008b).
\newblock Learning near-optimal policies with bellman-residual minimization
  based fitted policy iteration and a single sample path.
\newblock {\em Machine Learning}, 71(1):89--129.

\bibitem[Baird, 1995]{baird1995residual}
Baird, L. (1995).
\newblock Residual algorithms: Reinforcement learning with function
  approximation.
\newblock In {\em Machine Learning Proceedings 1995}, pages 30--37. Elsevier.

\bibitem[Bertsekas, 2017]{bertsekas2017dynamic}
Bertsekas, D.~P. (2017).
\newblock {\em Dynamic programming and optimal control (4th edition)}.
\newblock Athena Scientific.

\bibitem[Chen and Jiang, 2019]{chen2019information}
Chen, J. and Jiang, N. (2019).
\newblock Information-theoretic considerations in batch reinforcement learning.
\newblock In {\em International Conference on Machine Learning}, pages
  1042--1051. PMLR.

\bibitem[Chen et~al., 2021]{chen2021infinite}
Chen, L., Scherrer, B., and Bartlett, P.~L. (2021).
\newblock Infinite-horizon offline reinforcement learning with linear function
  approximation: Curse of dimensionality and algorithm.
\newblock {\em arXiv preprint arXiv:2103.09847}.

\bibitem[Du et~al., 2021]{du2021bilinear}
Du, S.~S., Kakade, S.~M., Lee, J.~D., Lovett, S., Mahajan, G., Sun, W., and
  Wang, R. (2021).
\newblock Bilinear classes: A structural framework for provable generalization
  in rl.

\bibitem[Du et~al., 2020]{du2019good}
Du, S.~S., Kakade, S.~M., Wang, R., and Yang, L.~F. (2020).
\newblock Is a good representation sufficient for sample efficient
  reinforcement learning?
\newblock In {\em International Conference on Learning Representations}.

\bibitem[Ernst et~al., 2005]{ernst05tree}
Ernst, D., Geurts, P., and Wehenkel, L. (2005).
\newblock Tree-based batch mode reinforcement learning.
\newblock {\em Journal of Machine Learning Research}, 6:503–556.

\bibitem[Fan et~al., 2020]{pmlr-v120-yang20a}
Fan, J., Wang, Z., Xie, Y., and Yang, Z. (2020).
\newblock A theoretical analysis of deep q-learning.
\newblock In Bayen, A.~M., Jadbabaie, A., Pappas, G., Parrilo, P.~A., Recht,
  B., Tomlin, C., and Zeilinger, M., editors, {\em Proceedings of the 2nd
  Conference on Learning for Dynamics and Control}, volume 120 of {\em
  Proceedings of Machine Learning Research}, pages 486--489. PMLR.

\bibitem[Farahmand and Szepesv{\'a}ri, 2011]{farahmand2011model}
Farahmand, A.-m. and Szepesv{\'a}ri, C. (2011).
\newblock Model selection in reinforcement learning.
\newblock {\em Machine learning}, 85(3):299--332.

\bibitem[Farahmand et~al., 2010]{farahmand2010error}
Farahmand, A.-m., Szepesv{\'a}ri, C., and Munos, R. (2010).
\newblock Error {P}ropagation for {A}pproximate {P}olicy and {V}alue
  {I}teration.
\newblock In {\em Advances in Neural Information Processing Systems}, pages
  568--576.

\bibitem[Foster et~al., 2021]{foster2021offline}
Foster, D.~J., Krishnamurthy, A., Simchi-Levi, D., and Xu, Y. (2021).
\newblock Offline reinforcement learning: Fundamental barriers for value
  function approximation.
\newblock {\em arXiv preprint arXiv:2111.10919}.

\bibitem[Fujimoto et~al., 2019]{fujimoto19off}
Fujimoto, S., Meger, D., and Precup, D. (2019).
\newblock Off-policy deep reinforcement learning without exploration.
\newblock In {\em Proceedings of the 36th International Conference on Machine
  Learning (ICML)}.

\bibitem[Jiang and Huang, 2020]{jiang2020minimax}
Jiang, N. and Huang, J. (2020).
\newblock Minimax value interval for off-policy evaluation and policy
  optimization.
\newblock {\em arXiv preprint arXiv:2002.02081}.

\bibitem[Jin et~al., 2020]{jin2020pessimism}
Jin, Y., Yang, Z., and Wang, Z. (2020).
\newblock Is pessimism provably efficient for offline rl?
\newblock {\em arXiv preprint arXiv:2012.15085}.

\bibitem[Kakade, 2003]{kakade2003sample}
Kakade, S. (2003).
\newblock {\em On the sample complexity of reinforcement learning}.
\newblock PhD thesis, University of London.

\bibitem[Kakade and Langford, 2002]{Kakade02approximatelyoptimal}
Kakade, S. and Langford, J. (2002).
\newblock Approximately optimal approximate reinforcement learning.
\newblock In {\em IN PROC. 19TH INTERNATIONAL CONFERENCE ON MACHINE LEARNING},
  pages 267--274.

\bibitem[Kostrikov et~al., 2019]{kostrikov2019imitation}
Kostrikov, I., Nachum, O., and Tompson, J. (2019).
\newblock Imitation learning via off-policy distribution matching.
\newblock {\em arXiv preprint arXiv:1912.05032}.

\bibitem[Lee et~al., 2021]{lee2021optidice}
Lee, J., Jeon, W., Lee, B.-J., Pineau, J., and Kim, K.-E. (2021).
\newblock Optidice: Offline policy optimization via stationary distribution
  correction estimation.
\newblock {\em arXiv preprint arXiv:2106.10783}.

\bibitem[Levine et~al., 2020]{levine2020offline}
Levine, S., Kumar, A., Tucker, G., and Fu, J. (2020).
\newblock Offline reinforcement learning: Tutorial, review, and perspectives on
  open problems.
\newblock {\em arXiv preprint arXiv:2005.01643}.

\bibitem[Lin et~al., 2020]{lin2020near}
Lin, T., Jin, C., and Jordan, M.~I. (2020).
\newblock Near-optimal algorithms for minimax optimization.
\newblock In {\em Conference on Learning Theory}, pages 2738--2779. PMLR.

\bibitem[Liu et~al., 2018]{liu2018breaking}
Liu, Q., Li, L., Tang, Z., and Zhou, D. (2018).
\newblock Breaking the curse of horizon: Infinite-horizon off-policy
  estimation.
\newblock {\em arXiv preprint arXiv:1810.12429}.

\bibitem[Liu et~al., 2020]{liu2020provably}
Liu, Y., Swaminathan, A., Agarwal, A., and Brunskill, E. (2020).
\newblock Provably good batch reinforcement learning without great exploration.
\newblock {\em arXiv preprint arXiv:2007.08202}.

\bibitem[Mangasarian and Meyer, 1979]{mangasarian1979nonlinear}
Mangasarian, O.~L. and Meyer, R. (1979).
\newblock Nonlinear perturbation of linear programs.
\newblock {\em SIAM Journal on Control and Optimization}, 17(6):745--752.

\bibitem[Munos., 2003]{munos2003error}
Munos., R. (2003).
\newblock Error bounds for approximate policy iteration.
\newblock In {\em Proceedings of the 20th International Conference on
  International Conference on Machine Learning}, pages 560--567. PMLR.

\bibitem[Munos and Szepesv\'{a}ri, 2008]{munos2008finite}
Munos, R. and Szepesv\'{a}ri, C. (2008).
\newblock Finite-time bounds for fitted value iteration.
\newblock In {\em Journal of Machine Learning Research}, volume~9, pages
  815--857.

\bibitem[Nachum et~al., 2019a]{nachum2019dualdice}
Nachum, O., Chow, Y., Dai, B., and Li, L. (2019a).
\newblock Dualdice: Behavior-agnostic estimation of discounted stationary
  distribution corrections.
\newblock {\em arXiv preprint arXiv:1906.04733}.

\bibitem[Nachum and Dai, 2020]{nachum2020reinforcement}
Nachum, O. and Dai, B. (2020).
\newblock Reinforcement learning via fenchel-rockafellar duality.
\newblock {\em arXiv preprint arXiv:2001.01866}.

\bibitem[Nachum et~al., 2019b]{nachum2019algaedice}
Nachum, O., Dai, B., Kostrikov, I., Chow, Y., Li, L., and Schuurmans, D.
  (2019b).
\newblock Algaedice: Policy gradient from arbitrary experience.
\newblock {\em arXiv preprint arXiv:1912.02074}.

\bibitem[Nemirovski, 2004]{nemirovski2004prox}
Nemirovski, A. (2004).
\newblock Prox-method with rate of convergence o (1/t) for variational
  inequalities with lipschitz continuous monotone operators and smooth
  convex-concave saddle point problems.
\newblock {\em SIAM Journal on Optimization}, 15(1):229--251.

\bibitem[Nesterov, 2007]{nesterov2007dual}
Nesterov, Y. (2007).
\newblock Dual extrapolation and its applications to solving variational
  inequalities and related problems.
\newblock {\em Mathematical Programming}, 109(2):319--344.

\bibitem[Pomerleau, 1989]{pomerleau1989alvinn}
Pomerleau, D.~A. (1989).
\newblock Alvinn: An autonomous land vehicle in a neural network.
\newblock Technical report, CARNEGIE-MELLON UNIV PITTSBURGH PA ARTIFICIAL
  INTELLIGENCE AND PSYCHOLOGY~….

\bibitem[Puterman, 1994]{puterman1994markov}
Puterman, M.~L. (1994).
\newblock Markov decision processes: Discrete stochastic dynamic programming.

\bibitem[Puterman, 2014]{puterman2014markov}
Puterman, M.~L. (2014).
\newblock {\em Markov decision processes: discrete stochastic dynamic
  programming}.
\newblock John Wiley \& Sons.

\bibitem[Rajaraman et~al., 2020]{rajaraman2020towards}
Rajaraman, N., Yang, L.~F., Jiao, J., and Ramachandran, K. (2020).
\newblock Toward the fundamental limits of imitation learning.
\newblock In {\em arXiv preprint arXiv:2009.05990}.

\bibitem[Rashidinejad et~al., 2021]{rashidinejad2021bridging}
Rashidinejad, P., Zhu, B., Ma, C., Jiao, J., and Russell, S. (2021).
\newblock Bridging offline reinforcement learning and imitation learning: A
  tale of pessimism.
\newblock {\em Advances in Neural Information Processing Systems}, 34.

\bibitem[Ross and Bagnell, 2014]{ross2014reinforcement}
Ross, S. and Bagnell, J.~A. (2014).
\newblock Reinforcement and imitation learning via interactive no-regret
  learning.
\newblock {\em arXiv preprint arXiv:1406.5979}.

\bibitem[Sion, 1958]{sion1958general}
Sion, M. (1958).
\newblock On general minimax theorems.
\newblock {\em Pacific Journal of mathematics}, 8(1):171--176.

\bibitem[Sun et~al., 2019]{sun2019provably}
Sun, W., Vemula, A., Boots, B., and Bagnell, D. (2019).
\newblock Provably efficient imitation learning from observation alone.
\newblock In {\em International conference on machine learning}, pages
  6036--6045. PMLR.

\bibitem[Tang et~al., 2019]{tang2019doubly}
Tang, Z., Feng, Y., Li, L., Zhou, D., and Liu, Q. (2019).
\newblock Doubly robust bias reduction in infinite horizon off-policy
  estimation.
\newblock In {\em International Conference on Learning Representations}.

\bibitem[Uehara et~al., 2020]{uehara2019minimax}
Uehara, M., Huang, J., and Jiang, N. (2020).
\newblock {Minimax Weight and Q-Function Learning for Off-Policy Evaluation}.
\newblock In {\em Proceedings of the 37th International Conference on Machine
  Learning}, pages 1023--1032.

\bibitem[Uehara and Sun, 2021]{uehara2021pessimistic}
Uehara, M. and Sun, W. (2021).
\newblock Pessimistic model-based offline rl: Pac bounds and posterior sampling
  under partial coverage.
\newblock In {\em arXiv preprint arXiv:2107.06226}.

\bibitem[Wang, 2017]{wang2017primal}
Wang, M. (2017).
\newblock Primal-dual pi learning: Sample complexity and sublinear run time for
  ergodic markov decision problems.
\newblock {\em arXiv preprint arXiv:1710.06100}.

\bibitem[Wang, 2020]{wang2020randomized}
Wang, M. (2020).
\newblock Randomized linear programming solves the markov decision problem in
  nearly linear (sometimes sublinear) time.
\newblock {\em Mathematics of Operations Research}, 45(2):517--546.

\bibitem[Wang et~al., 2020]{wang2020statistical}
Wang, R., Foster, D.~P., and Kakade, S.~M. (2020).
\newblock What are the statistical limits of offline rl with linear function
  approximation?
\newblock {\em arXiv preprint arXiv:2010.11895}.

\bibitem[Wang et~al., 2021a]{wang2021instabilities}
Wang, R., Wu, Y., Salakhutdinov, R., and Kakade, S.~M. (2021a).
\newblock Instabilities of offline rl with pre-trained neural representation.
\newblock In {\em arXiv preprint arXiv:2103.04947}.

\bibitem[Wang et~al., 2021b]{wwk21}
Wang, Y., Wang, R., and Kakade, S.~M. (2021b).
\newblock An exponential lower bound for linearly-realizable mdps with constant
  suboptimality gap.
\newblock arXiv preprint arXiv:2103.12690.

\bibitem[Weisz et~al., 2020]{weisz2020exponential}
Weisz, G., Amortila, P., and Szepesv{\'a}ri, C. (2020).
\newblock Exponential lower bounds for planning in mdps with
  linearly-realizable optimal action-value functions.
\newblock {\em arXiv preprint arXiv:2010.01374}.

\bibitem[Xie et~al., 2021]{xie2021bellman}
Xie, T., Cheng, C.-A., Jiang, N., Mineiro, P., and Agarwal, A. (2021).
\newblock Bellman-consistent pessimism for offline reinforcement learning.
\newblock {\em arXiv preprint arXiv:2106.06926}.

\bibitem[Xie and Jiang, 2020]{xie2020q}
Xie, T. and Jiang, N. (2020).
\newblock Q* approximation schemes for batch reinforcement learning: A
  theoretical comparison.
\newblock In {\em Conference on Uncertainty in Artificial Intelligence}, pages
  550--559. PMLR.

\bibitem[Xie and Jiang, 2021a]{xie2020batch}
Xie, T. and Jiang, N. (2021a).
\newblock Batch value-function approximation with only realizability.
\newblock In {\em International Conference on Machine Learning}, pages
  11404--11413. PMLR.

\bibitem[Xie and Jiang, 2021b]{pmlr-v139-xie21d}
Xie, T. and Jiang, N. (2021b).
\newblock Batch value-function approximation with only realizability.
\newblock In Meila, M. and Zhang, T., editors, {\em Proceedings of the 38th
  International Conference on Machine Learning}, volume 139 of {\em Proceedings
  of Machine Learning Research}, pages 11404--11413. PMLR.

\bibitem[Zanette, 2021]{zanette2021exponential}
Zanette, A. (2021).
\newblock Exponential lower bounds for batch reinforcement learning: Batch rl
  can be exponentially harder than online rl.
\newblock In {\em International Conference on Machine Learning}, pages
  12287--12297. PMLR.

\bibitem[Zhang et~al., 2020]{zhang2020dendice}
Zhang, R., Dai, B., Li, L., and Schuurmans, D. (2020).
\newblock Gendice: Generalized offline estimation of stationary values.
\newblock In {\em ICLR}.

\end{thebibliography}
\appendix
\section{Discussion}

\subsection{Comparison with OptiDICE \citep{lee2021optidice}}

Our algorithm is inspired by OptiDICE \citep{lee2021optidice}, but with several crucial modifications necessary to obtain the desired sample-complexity guarantees. 
OptiDICE starts with the problem of 
$\min_{{v}}\max_{w\geq0}{\La}({v},w)$, and 
then uses the closed-form maximizer ${\wa}({v}):=\arg\max_{w\geq0}{\La}({v},w)$ for arbitrary ${v}$ \citep[Proposition 1]{lee2021optidice}:
\begin{equation}
	{\wa}({v})=\max\left(0,(f')^{-1}\left(\frac{e_{{v}}(s,a)}{\alpha}\right)\right),
\end{equation}
and then solves
$\min_{{v}}{\La}({v},{\wa}({v}))$. 
Unfortunately, the $e_v(s,a)$ term in the expression requires  knowledge of the transition function $P$, causing the infamous double-sampling difficulty \citep{baird1995residual, farahmand2011model}, a major obstacle in offline RL with only realizability assumptions \citep{chen2019information}. OptiDICE deals with this by optimizing an upper bound of  $\max_{w\geq0}\La({v},w)$ which does not lend itself to theoretical analysis.  Alternatively, one can fit $e_v$ using a separate function class.  
However, since ${v}$ is arbitrary in the optimization, the function class needs to approximate $e_v$ for all ${v}$, requiring a completeness-type assumption in theory~\citep{xie2020q}. 
In contrast, \mainalg 
optimizes over ${\VC}\times {\WC}$ and thus $\arg\max_{w\in {\WC}}{\La}({v},w)$ is naturally contained in ${\WC}$, and our analyses show that this circumvents the completeness-type assumptions and only requires realizability. 

Another important difference is the policy extraction step. OptiDICE uses a heuristic behavior cloning algorithm without any guarantees. We develop a new behavior cloning algorithm that only requires realizability of the policy and does not increase the sample complexity.

\subsection{Discussion about Assumption~\ref{ass:strong conc}}

\label{sec:discussion cor2 concentration}
The following ergodicity assumption has been introduced in some online reinforcement learning works \citep{wang2017primal,wang2020randomized}:
\begin{assumption}
\label{ass:ergodicity}
Assume 
\begin{equation}
B_{\texttt{erg},1}\mu_0(s)\leq d^{\pi}(s)\leq B_{\texttt{erg},2}\mu_0(s),\forall s,\pi.
\end{equation}
\end{assumption}
\begin{remark}
The original definition of ergodicity in \citet{wang2017primal,wang2020randomized} is targeted at the stationary distribution induced by policy $\pi$ rather than the discounted visitation distribution. However, this is not an essential difference and it can be shown that Corollary~\ref{cor:sample unregularized 0} still holds under the definition in \citet{wang2017primal,wang2020randomized}. Here we define ergodicity with respect to the discounted visitation distribution  for the purpose of comparing Assumption~\ref{ass:strong conc} and \ref{ass:ergodicity}.
\end{remark}

In fact, our Assumption~\ref{ass:strong conc} is weaker than Assumption~\ref{ass:ergodicity} as shown in the following lemma:
\begin{lemma}
\label{lem:ergodic}
Suppose $d^{\pi}(s)\leq B_{\texttt{erg},2}\mu_0(s),\forall s,\pi$ and Assumption~\ref{ass:dataset} holds, then we have:
\begin{align}
&\frac{d^{\pi}(s)}{d^D(s)}\leq\frac{B_{\texttt{erg},2}}{1-\gamma},\forall \pi,s\\
&\frac{\dz(s)}{d^D(s)}\geq\frac{1-\gamma}{B_{\texttt{erg},2}},\forall s.
\end{align}	
\end{lemma}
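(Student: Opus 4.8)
\textbf{Proof plan for Lemma~\ref{lem:ergodic}.}
The plan is to derive both inequalities directly from the hypothesis $d^\pi(s) \le B_{\texttt{erg},2}\,\mu_0(s)$ for all $s,\pi$, combined with Assumption~\ref{ass:dataset} ($d^D = d^{\pi_D}$), using only the elementary fact that every discounted occupancy dominates $(1-\gamma)\mu_0$ pointwise. First I would record this basic fact: for any policy $\pi$ and any state $s$,
\begin{equation}
d^\pi(s) = (1-\gamma)\sum_{t=0}^\infty \gamma^t \mathrm{Pr}_\pi(s_t = s) \ge (1-\gamma)\mathrm{Pr}_\pi(s_0 = s) = (1-\gamma)\mu_0(s),
\end{equation}
since the $t=0$ term alone contributes $(1-\gamma)\mu_0(s)$ and all remaining terms are nonnegative. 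Applying this with $\pi = \pi_D$ and invoking Assumption~\ref{ass:dataset} gives $d^D(s) = d^{\pi_D}(s) \ge (1-\gamma)\mu_0(s)$ for every $s$.

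For the first claimed inequality, I would fix an arbitrary policy $\pi$ and state $s$. By the hypothesis of the lemma, $d^\pi(s) \le B_{\texttt{erg},2}\,\mu_0(s)$, and by the lower bound just established, $d^D(s) \ge (1-\gamma)\mu_0(s)$. Dividing (noting $\mu_0(s) > 0$, which is assumed in Section~\ref{sec:setting}, so $d^D(s) > 0$ and the ratio is well-defined) yields
\begin{equation}
\frac{d^\pi(s)}{d^D(s)} \le \frac{B_{\texttt{erg},2}\,\mu_0(s)}{(1-\gamma)\mu_0(s)} = \frac{B_{\texttt{erg},2}}{1-\gamma}.
\end{equation}
For the second inequality, I would instead use the lower bound on the numerator: $d_0^*(s) = d^{\pi_0^*}(s) \ge (1-\gamma)\mu_0(s)$ by the basic fact, while the denominator satisfies $d^D(s) \le B_{\texttt{erg},2}\,\mu_0(s)$ by the hypothesis applied to $\pi = \pi_D$. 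Dividing gives $\frac{d_0^*(s)}{d^D(s)} \ge \frac{(1-\gamma)\mu_0(s)}{B_{\texttt{erg},2}\,\mu_0(s)} = \frac{1-\gamma}{B_{\texttt{erg},2}}$.

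There is essentially no hard step here; the proof is a two-line consequence of the occupancy-domination fact and the ergodicity hypothesis. The only point requiring a small amount of care is ensuring the ratios are well-defined — that $d^D(s) > 0$ — which follows from $\mu_0(s) > 0$ (the trivial technical assumption in the preliminaries) together with $d^D(s) \ge (1-\gamma)\mu_0(s) > 0$. I would make sure to state this explicitly so that the quantities in the displayed bounds are meaningful, and then the lemma follows immediately, which in turn shows that the strong-concentrability hypotheses of Assumption~\ref{ass:strong conc} (with $B_{w,u} = \tfrac{B_{\texttt{erg},2}}{1-\gamma}$ and $B_{w,l} = \tfrac{1-\gamma}{B_{\texttt{erg},2}}$) are implied by the ergodicity Assumption~\ref{ass:ergodicity}.
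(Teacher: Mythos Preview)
Your proof is correct and follows essentially the same approach as the paper: both arguments use the elementary lower bound $d^\pi(s)\ge(1-\gamma)\mu_0(s)$ (applied to $\pi_D$ for the denominator in the first inequality and to $\pi_0^*$ for the numerator in the second), together with the ergodicity upper bound $d^\pi(s)\le B_{\texttt{erg},2}\mu_0(s)$, and divide. Your write-up is slightly more explicit about the origin of the $(1-\gamma)\mu_0$ lower bound and the well-definedness of the ratios, but otherwise it is the same two-line argument.
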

The proof is deferred to Appendix~\ref{proof:lem ergodic}. Lemma~\ref{lem:ergodic} shows that the upper bound in Assumption~\ref{ass:ergodicity} implies Assumption~\ref{ass:strong conc}. Therefore, our strong concentration assumption is a weaker version of the ergodicity assumption.

\subsection{Combination of different practical factors}
In Section~\ref{sec:results}, we generalized \mainalg to several more realistic settings (approximation and optimization error, poor coverage, unknown behavior policy). In fact, \mainalg with $\alpha>0$ can even be generalized to include all of the three settings by combining Theorem~\ref{thm:sample regularized},\ref{thm:sample regularized approximate},\ref{thm:sample regularized constrained},\ref{thm:sample regularized BC} and Corollaries ~\ref{cor:sample unregularized},\ref{cor:sample unregularized approximate},\ref{cor:sample unregularized constrained},\ref{cor:sample unregularized BC}. For brevity, we do not list all the combinations separately and only illustrate how to handle each individually.



For \mainalg with $\alpha=0$, it is easy to extend Corollary~\ref{cor:sample unregularized 0} to approximation and optimization error but relaxation of the concentration assumption and unknown behavior policy is difficult. This is because the analysis of Corollary~\ref{cor:sample unregularized 0} relies on the fact that $\vz$ is the optimal value function of the unregularized problem~\eqref{prob:original problem}. Consequently, the same analysis is not applicable to $(\wzw,\vzw)$. Furthermore, Assumption~\ref{ass:W good} requires knowing $\pi_D$ and thus hard to enforce with unknown behavior policy.

\section{Proofs of Lemmas for Theorem~\ref{thm:sample regularized}}
\subsection{Proof of Lemma~\ref{lem:minimax}}
\label{proof:lemma minimax}
We first prove that $(x^*,y^*)\in\arg\min_{x\in\mathcal{X}'}\arg\max_{y\in\mathcal{Y'}}f(x,y)$.  Since $(x^*,y^*)$ is a saddle point \citep{sion1958general}, we have
\begin{equation}
\label{eq:nash}
x^*=\arg\min_{x\in\mathcal{X}}f(x,y^*),y^*=\arg\max_{y\in\mathcal{Y}}f(x^*,y).
\end{equation}
Since $\mathcal{Y}'\subseteq\mathcal{Y}$ and $y^*\in\mathcal{Y}'$, we have:
\begin{equation}
f(x^*,y^*)=\max_{y\in\mathcal{Y}'}f(x^*,y).
\end{equation}
On the other hand, because $\mathcal{X}'\subseteq\mathcal{X}$ and $y^*\in\mathcal{Y}'$,
\begin{equation}
f(x^*,y^*)\leq f(x,y^*)\leq \max_{y\in\mathcal{Y}'}f(x,y), \forall x\in\mathcal{X}'.
\end{equation}
Notice that $x^*\in\mathcal{X}'$, so we have:
\begin{equation}
\max_{y\in\mathcal{Y}'}f(x^*,y)=\min_{x\in\mathcal{X}}\max_{y\in\mathcal{Y}'}f(x,y),
\end{equation}
or equivalently,
\begin{equation}
\label{eq:lem1 eq1}
(x^*,y^*)\in\arg\min_{x\in\mathcal{X}'}\arg\max_{y\in\mathcal{Y'}}f(x,y).
\end{equation}

On the other hand, by a similar proof we have 
\begin{equation}
f(x^*,y^*)\geq f(x^*,y)\geq\min_{x\in\mathcal{X}'}f(x,y),\quad\forall y\in\mathcal{Y'},
\end{equation}
which implies that
\begin{equation}
(x^*,y^*)\in\arg\max_{y\in\mathcal{Y'}}\arg\min_{x\in\mathcal{X}'}f(x,y).
\end{equation}

\subsection{Proof of Lemma~\ref{lem:bound tilde nu}}
\label{proof:lemma bound tilde nu}
From the strong duality of the regularized problem (\ref{eq:constrained})(\ref{eq:bellman flow 1}), when $d^D(s,a)\neq0$, we have  ${\wa}=\arg\max_{w\geq 0}{\La}({\va},w)$, or 
\begin{equation}
\label{eq:lem2 eq1}
{\wa}(s,a)=\max\left(0,(f')^{-1}\left(\frac{e_{{\va}}(s,a)}{\alpha}\right)\right).
\end{equation}

Note that ${\da}(s,a)={\wa}(s,a)d^D(s,a)$ satisfies Bellman flow constraint (\ref{eq:bellman flow 1}), therefore
\begin{equation}
{\da}(s)\geq(1-\gamma)\mu_0(s)>0,\quad\forall s\in\mathcal{S},
\end{equation}
which implies that for any $s\in\mathcal{S}$, $\exists a_s\in\mathcal{A}$ such that
\begin{equation}
{\da}(s,a_s)>0,
\end{equation}
or equivalently
\begin{equation}
{\wa}(s,a_s)>0, d^D(s,a_s)>0.
\end{equation}
Thus from (\ref{eq:lem2 eq1}) we know that
\begin{equation}
e_{{\va}}(s,a_s)=\alpha f'({\wa}(s,a_s)).
\end{equation}
From Assumption~\ref{ass:concentrability}, ${\wa}(s,a_s)\leq {\BW}$ and thus due to Assumption~\ref{ass:f prop}, 
\begin{equation}
\label{eq:lem2 eq2}
|e_{{\va}}(s,a_s)|\leq \alpha {\BFP}, \forall s\in\mathcal{S}. 
\end{equation}

On the other hand, suppose $|{\va}(s_m)|=\Vert {\va}\Vert_{\infty}$, then from the definition of $e_{{v}}$ we have:
\begin{equation}
e_{{\va}}(s_m,a_{s_m})=r(s_m,a_{s_m})+\gamma\mathbb{E}_{s'\sim P(\cdot|s_m,a_{s_m})}{\va}(s')-{\va}(s_m),
\end{equation}
which implies that:
\begin{align}
|e_{{\va}}(s_m,a_{s_m})-r(s_m,a_{s_m})|&=|{\va}(s_m)-\gamma\mathbb{E}_{s'\sim P(\cdot|s_m,a_{s_m})}{\va}(s')|\\
&\geq|{\va}(s_m)|-\gamma|\mathbb{E}_{s'\sim P(\cdot|s_m,a_{s_m})}{\va}(s')|\\
&\geq|{\va}(s_m)|-\gamma\mathbb{E}_{s'\sim P(\cdot|s_m,a_{s_m})}|{\va}(s')|\\
&\geq(1-\gamma)|{\va}(s_m)|\label{eq:lem2 eq3}.
\end{align}

Combining (\ref{eq:lem2 eq2}) and (\ref{eq:lem2 eq3}), we have
\begin{equation}
\Vert{\va}\Vert_{\infty}\leq\frac{\alpha {\BFP}+1}{1-\gamma}.
\end{equation}

\subsection{Proof of Lemma~\ref{lem:unbiased hat L}}
\label{proof:lemma unbiased hat L}
First by the tower rule, we have:
\begin{equation}
\mathbb{E}_{\mathcal{D}}\left[\hat{L}_{\alpha}({v},w)\right]=\mathbb{E}_{(s_i,a_i)\sim d^D, s_{0,j}\sim\mu_0}\left[\mathbb{E}_{s'_i\sim P(\cdot|s_i,a_i)}\left[\hat{L}_{\alpha}({v},w)\vert s_i,a_i\right]\right].
\end{equation}
Note that
\begin{align}
&\mathbb{E}_{s'_i\sim P(\cdot|s_i,a_i)}\left[\hat{L}_{\alpha}({v},w)\vert s_i,a_i\right]\\
=&(1-\gamma)\frac{1}{n_0}\sum_{j=1}^{n_0}[{v}(s_{0,j})]+\frac{1}{n}\sum_{i=1}^n[-\alpha f(w(s_i,a_i))]\\+&\frac{1}{n}\sum_{i=1}^n[w(s_i,a_i)\mathbb{E}_{s'_i\sim P(\cdot|s_i,a_i)}\left[e_{{v}}(s_i,a_i,r_i,s'_i)\vert s_i,a_i\right]]\\
=&(1-\gamma)\frac{1}{n_0}\sum_{j=1}^{n_0}[{v}(s_{0,j})]+\frac{1}{n}\sum_{i=1}^n[-\alpha f(w(s_i,a_i))]+\frac{1}{n}\sum_{i=1}^n[w(s_i,a_i)e_{{v}}(s_i,a_i)].
\end{align}
Therefore,
\begin{align}
&\mathbb{E}_{\mathcal{D}}\left[\hat{L}_{\alpha}({v},w)\right]\\
=&(1-\gamma)\mathbb{E}_{s\sim \mu_0}[{v}(s)]-\alpha\mathbb{E}_{(s,a)\sim d^D}[f(w(s,a))]+\mathbb{E}_{(s,a)\sim d^D}[w(s,a)e_{{v}}(s,a)]\\
=&{\La}({v},w).
\end{align}


\subsection{Proof of Lemma~\ref{lem:hat L conc}}
\label{proof:hat L conc}
Let $l^{{v},w}_{i}=-\alpha f(w(s_i,a_i))+w(s_i,a_i)e_{{v}}(s_i,a_i,r_i,s'_i)$. From Assumption~\ref{ass:V bound}, we know
\begin{equation}
|e_{{v}}(s,a,r,s')|=|r(s,a)+\gamma{v}(s')-{v}(s)|\leq(1+\gamma){\BV}+1={\BE}.
\end{equation}
Therefore, by Assumption~\ref{ass:W bound} and \ref{ass:f prop}, we have:
\begin{equation}
|l^{{v},w}_{i}|\leq \alpha {\BF}+{\BW}{\BE}.
\end{equation}

Notice that $l^{{v},w}_i$ is independent from each other, thus we can apply Hoeffding's inequality and for any $t>0$,
\begin{equation}
\text{Pr}[|\frac{1}{n}\sum_{i=1}^nl^{{v},w}_i-\mathbb{E}[l^{{v},w}_i]|\leq t]\geq 1-2\exp\left(\frac{-nt^2}{2(\alpha {\BF}+{\BW}{\BE})^2}\right).
\end{equation} 

Let $t=(\alpha {\BF}+{\BW}{\BE})\sqrt{\frac{2\log\frac{4|{\VC}||{\WC}|}{\delta}}{n}}$, we have with at least probability $1-\frac{\delta}{2|{\VC}||{\WC}|}$,
\begin{equation}
|\frac{1}{n}\sum_{i=1}^nl^{{v},w}_i-\mathbb{E}[l^{{v},w}_i]|\leq(\alpha {\BF}+{\BW}{\BE})\sqrt{\frac{2\log\frac{4|{\VC}||{\WC}|}{\delta}}{n}}.
\end{equation}
Therefore by union bound, with at least probability $1-\frac{\delta}{2}$, we have for all ${v}\in {\VC}$ and $w\in {\WC}$, 
\begin{equation}
|\frac{1}{n}\sum_{i=1}^nl^{{v},w}_i-\mathbb{E}[l^{{v},w}_i]|\leq (\alpha {\BF}+{\BW}{\BE})\sqrt{\frac{2\log\frac{4|{\VC}||{\WC}|}{\delta}}{n}}.
\end{equation}

Similarly, we have  with at least probability $1-\frac{\delta}{2}$, for all ${v}\in {\VC}$, 
\begin{equation}
	|\frac{1}{n_0}\sum_{j=1}^{n_0}v(s_{0,j})-\mathbb{E}_{s\sim\mu_0}[v(s)]|\leq \BV\sqrt{\frac{2\log\frac{4|{\VC}|}{\delta}}{n_0}}.
\end{equation}
Therefore, with at least probability $1-\delta$ we have
\begin{equation}
	|\hat{L}_{\alpha}({v},w)-{\La}({v},w)|\leq(\alpha {\BF}+{\BW}{\BE})\sqrt{\frac{2\log\frac{4|{\VC}||{\WC}|}{\delta}}{n}}+(1-\gamma)\BV\sqrt{\frac{2\log\frac{4|{\VC}|}{\delta}}{n_0}}.
\end{equation}

\subsection{Proof of Lemma~\ref{lem:hat L tilde L}}
\label{proof:hat L tilde L}
First we decompose ${\La}({\va},\hat{w})-{\La}({\va},{\wa})$ into the following terms:
\begin{align}
&{\La}({\va},\hat{w})-{\La}({\va},{\wa})=(\underbrace{{\La}({\va},\hat{w})-\hat{L}_{\alpha}({\va},\hat{w})}_{(1)}) +(\underbrace{\hat{L}_{\alpha}({\va},\hat{w})-\hat{L}_{\alpha}(\hat{{v}},\hat{w})}_{(2)}) \notag\\
&+(\underbrace{\hat{L}_{\alpha}(\hat{{v}},\hat{w})-\hat{L}_{\alpha}(\hat{{v}}({\wa}),{\wa})}_{(3)}) + (\underbrace{\hat{L}_{\alpha}(\hat{{v}}({\wa}),{\wa})-{\La}(\hat{{v}}({\wa}),{\wa})}_{(4)})\\& + (\underbrace{{\La}(\hat{{v}}({\wa}),{\wa})-{\La}({\va},{\wa})}_{(5)}),
\end{align}
where $\hat{{v}}(w)=\arg\min_{{v}\in {\VC}}\hat{L}_{\alpha}({v},w)$.

For term (1) and (4), we can apply Lemma~\ref{lem:hat L conc} and thus
\begin{equation}
(1)\geq-{\lstat},(4)\geq-{\lstat}.
\end{equation}
For term (2), since $\hat{{v}}=\arg\min_{{v}\in {\VC}}\hat{L}_{\alpha}({v},\hat{w})$ and ${\va}\in {\VC}$, we have
\begin{equation}
(2)\geq0.
\end{equation}
For term (3), since $\hat{w}=\arg\max_{w\in {\WC}}\hat{L}_{\alpha}(\hat{{v}}(w),w)$ and ${\wa}\in {\WC}$, 
\begin{equation}
(3)\geq0.
\end{equation}
For term (5), note that due to the strong duality of the regularized problem~(\ref{eq:constrained})(\ref{eq:bellman flow 1}), $({\va},{\wa})$ is a saddle point of ${\La}({v},w)$ over $\mathbb{R}^{|\mathcal{S}|}\times\mathbb{R}_+^{|\mathcal{S}||\mathcal{A}|}$. Therefore,
\begin{equation}
{\va}=\arg\min_{{v}\in\mathbb{R}^{|\mathcal{S}|}}{\La}({v},{\wa}).
\end{equation}
Since $\hat{{v}}({\wa})\in\mathbb{R}^{|\mathcal{S}|}$, we have:
\begin{equation}
(5)\geq0.
\end{equation}

Combining the above inequalities, it is obvious that
\begin{equation}
{\La}({\va},\hat{w})-{\La}({\va},{\wa})\geq-2{\lstat}.
\end{equation}

\subsection{Proof of Lemma~\ref{lem: hat w error}}
\label{proof:hat w error}
First we need to show ${\La}({\va},w)$ is $\alpha M_f$-strongly-concave with respect to $w$ and $\Vert\cdot\Vert_{2,d^D}$. Consider ${\Lp}(w)={\La}({\va},w)+\frac{\alpha M_f}{2}\Vert w\Vert_{2,d^D}^2$, then we know that
\begin{equation}
{\Lp}(w)=(1-\gamma)\mathbb{E}_{s\sim \mu_0}[{v}(s)]-\alpha\mathbb{E}_{(s,a)\sim d^D}[f(w(s,a))-\frac{M_f}{2}w(s,a)^2]+\mathbb{E}_{(s,a)\sim d^D}[w(s,a)e_{{v}}(s,a)].
\end{equation}
Since $f$ is $M_f$-strongly-convex, we know ${\Lp}(w)$ is concave, which implies that ${\La}({\va},w)$ is $\alpha M_f$-strongly-concave with respect to $w$ and $\Vert\cdot\Vert_{2,d^D}$. 

On the other hand, since $({\va},{\wa})$ is a saddle point of ${\La}({v},w)$ over $\mathbb{R}^{|\mathcal{S}|}\times\mathbb{R}_+^{|\mathcal{S}||\mathcal{A}|}$, we have ${\wa}=\arg\max_{w\geq0}{\La}({\va},w)$. Then we have:
\begin{equation}
\Vert\hat{w}-{\wa}\Vert_{2,d^D}\leq \sqrt{\frac{2({\La}({\va},{\wa})-{\La}({\va},\hat{w}))}{\alpha M_f}}.
\end{equation}
Substituting Lemma~\ref{lem:hat L tilde L} into the above equation we can obtain (\ref{eq:lem hat w 1}). For (\ref{eq:lem hat w 2}), it can be observed that
\begin{equation}
\Vert \hat{d}-{\da}\Vert_{1}=\Vert \hat{w}-{\wa}\Vert_{1,d^D}\leq\Vert\hat{w}-{\wa}\Vert_{2,d^D}\leq\sqrt{\frac{4{\lstat}}{\alpha M_f}}.
\end{equation}

\subsection{Proof of Lemma~\ref{lem:tilde w tilde pi}}
\label{proof:tilde w tilde pi}
First note that $\Vert \hat{w}-{\wa}\Vert_{1,d^D}\leq\Vert\hat{w}-{\wa}\Vert_{2,d^D}$, which implies that
\begin{equation}
\sum_s\epsilon_{\hat{w},s}\leq\Vert\hat{w}-{\wa}\Vert_{2,d^D}
\end{equation}
where
\begin{equation} 
\epsilon_{\hat{w},s}=\sum_{a}|\hat{w}(s,a)d^D(s,a)-{\wa}d^D(s,a)|
\end{equation}

If $\hat{d}(s)>0$, then we have:
\begin{align}
&{\da}(s)\sum_a|\hat{\pi}(s,a)-{\pia}(s,a)|\\
=&\sum_a|\frac{{\da}(s)}{\hat{d}(s)}\hat{w}(s,a)d^D(s,a)-{\wa}d^D(s,a)|\label{eq:importance_weight}\\
\leq&\sum_a(|\frac{{\da}(s)}{\hat{d}(s)}-1|\hat{w}(s,a)d^D(s,a))+\sum_a|\hat{w}(s,a)d^D(s,a)-{\wa}d^D(s,a)|\\
\leq&\epsilon_{\hat{w},s}+\sum_a(|\frac{{\da}(s)}{\hat{d}(s)}-1|\hat{w}(s,a)d^D(s,a)).
\end{align}

Notice that $|\hat{d}(s)-{\da}(s)|\leq\epsilon_{\hat{w},s}$, which implies $|\frac{{\da}(s)}{\hat{d}(s)}-1|\leq\frac{\epsilon_{\hat{w},s}}{\hat{d}(s)}$, therefore:
\begin{equation}
{\da}(s)\sum_a|\hat{\pi}(s,a)-{\pia}(s,a)|\leq\epsilon_{\hat{w},s}(1+\sum_a\frac{\hat{w}(s,a)d^D(s,a)}{\hat{d}(s)})=2\epsilon_{\hat{w},s}.
\end{equation}

If $\hat{d}(s)=0$, then we know that $\sum_{a}|{\wa}(s,a)d^D(s,a)|\leq\epsilon_{\hat{w},s}$. Therefore
\begin{align}
{\da}(s)\sum_a|\hat{\pi}(s,a)-{\pia}(s,a)|\leq 2{\da}(s)=2\epsilon_{\hat{w},s}.
\end{align}

Thus we have ${\da}(s)\sum_a|\hat{\pi}(s,a)-{\pia}(s,a)|\leq 2\epsilon_{\hat{w},s}$, from which we can easily obtain:
\begin{equation}
\mathbb{E}_{s\sim {\da}}[\Vert{\pia}(s,\cdot)-\hat{\pi}(s,\cdot)\Vert_1]\leq2\sum_{s}\epsilon_{\hat{w},s}\leq2\Vert\hat{w}-{\wa}\Vert_{2,d^D}.
\end{equation}

\subsection{Proof of Lemma~\ref{lem:tilde pi performance}}
\label{proof:tilde pi performance}
To bound $J({\pia})-J(\hat{\pi})$, we introduce the performance difference lemma which was previously derived in \citet{Kakade02approximatelyoptimal,kakade2003sample}:
\begin{lemma}[Performance Difference]
	\label{lem:performance difference}
	For arbitrary policies $\pi,\pi'$ and initial distribution $\mu_0$, we have
	\begin{equation}
		V^{\pi'}(\mu_0)-V^{\pi}(\mu_0)=\frac{1}{1-\gamma}\mathbb{E}_{s\sim d^{\pi'}}[\langle Q^{\pi}(s,\dot), \pi'(\cdot|s)-\pi(\cdot|s)\rangle].
	\end{equation}
\end{lemma}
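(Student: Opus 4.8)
The plan is to prove the performance difference identity by the classical telescoping (``value-difference'') argument along trajectories drawn from $\pi'$. Fix $\pi$ and its value function $V^{\pi}$. Along a trajectory $s_0\sim\mu_0$, $a_t\sim\pi'(\cdot|s_t)$, $s_{t+1}\sim P(\cdot|s_t,a_t)$, the terms $\gamma^{t+1}V^{\pi}(s_{t+1})-\gamma^{t}V^{\pi}(s_t)$ telescope to $-V^{\pi}(s_0)$ (legitimate since $\|V^{\pi}\|_{\infty}\le\tfrac1{1-\gamma}$ and $\gamma<1$ force $\gamma^{t}V^{\pi}(s_t)\to0$). Adding $\sum_t\gamma^t r(s_t,a_t)$ and taking expectation over the $\pi'$-trajectory gives
\[
V^{\pi'}(\mu_0)-V^{\pi}(\mu_0)=\mathbb{E}_{\pi'}\Bigl[\,\sum_{t=0}^{\infty}\gamma^{t}\bigl(r(s_t,a_t)+\gamma V^{\pi}(s_{t+1})-V^{\pi}(s_t)\bigr)\Bigr],
\]
since $\mathbb{E}_{\pi'}[\sum_t\gamma^t r(s_t,a_t)]=V^{\pi'}(\mu_0)$ and $\mathbb{E}_{s_0\sim\mu_0}[V^{\pi}(s_0)]=V^{\pi}(\mu_0)$.

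Next I would peel off the inner conditional expectations term by term: conditioning on $(s_t,a_t)$ and averaging over $s_{t+1}$ turns $r(s_t,a_t)+\gamma V^{\pi}(s_{t+1})$ into $Q^{\pi}(s_t,a_t)$ by the Bellman equation for $Q^{\pi}$; conditioning further on $s_t$ and averaging over $a_t\sim\pi'(\cdot|s_t)$ turns $Q^{\pi}(s_t,a_t)-V^{\pi}(s_t)$ into $\langle Q^{\pi}(s_t,\cdot),\pi'(\cdot|s_t)-\pi(\cdot|s_t)\rangle$, using $V^{\pi}(s_t)=\langle Q^{\pi}(s_t,\cdot),\pi(\cdot|s_t)\rangle$. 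Thus the right-hand side equals $\sum_{t\ge0}\gamma^{t}\mathbb{E}_{s_t}[\langle Q^{\pi}(s_t,\cdot),\pi'(\cdot|s_t)-\pi(\cdot|s_t)\rangle]$, with $s_t$ distributed according to its marginal law under $\pi'$. Swapping the sum with the expectation and invoking the definition $d^{\pi'}(s)=(1-\gamma)\sum_{t\ge0}\gamma^{t}\mathrm{Pr}_{\pi'}(s_t=s)$, i.e.\ $\sum_{t\ge0}\gamma^{t}\mathrm{Pr}_{\pi'}(s_t=s)=\tfrac1{1-\gamma}d^{\pi'}(s)$, collapses this to $\tfrac1{1-\gamma}\mathbb{E}_{s\sim d^{\pi'}}[\langle Q^{\pi}(s,\cdot),\pi'(\cdot|s)-\pi(\cdot|s)\rangle]$, which is the claim.

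The only genuinely delicate point — and hence the step I would spell out carefully — is the legitimacy of the infinite-series manipulations (the vanishing of $\gamma^{t}V^{\pi}(s_t)$ in the telescoping, and the Tonelli-type interchange of $\sum_t$ with $\mathbb{E}$), both of which follow at once from $\|V^{\pi}\|_{\infty},\|Q^{\pi}\|_{\infty}\le\tfrac1{1-\gamma}$ and $\gamma<1$, so that every term is dominated by a summable geometric sequence; the remaining work is just bookkeeping of the $(1-\gamma)$ normalization built into $d^{\pi'}$. This lemma then immediately closes the proof of Lemma~\ref{lem:tilde pi performance}: applying it with $\pi'=\pia$, $\pi=\hat{\pi}$ and using $J(\pi)=(1-\gamma)V^{\pi}(\mu_0)$ gives $J(\pia)-J(\hat{\pi})=\mathbb{E}_{s\sim\da}[\langle Q^{\hat{\pi}}(s,\cdot),\pia(\cdot|s)-\hat{\pi}(\cdot|s)\rangle]$, and then $\langle Q^{\hat{\pi}}(s,\cdot),\pia(\cdot|s)-\hat{\pi}(\cdot|s)\rangle\le\|Q^{\hat{\pi}}(s,\cdot)\|_{\infty}\,\|\pia(\cdot|s)-\hat{\pi}(\cdot|s)\|_1\le\tfrac1{1-\gamma}\|\pia(\cdot|s)-\hat{\pi}(\cdot|s)\|_1$ finishes it.
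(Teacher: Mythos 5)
Your proof is correct and follows essentially the same route as the paper's: the telescoping/value-difference decomposition along $\pi'$-trajectories, the Bellman identity converting $r(s_t,a_t)+\gamma V^{\pi}(s_{t+1})$ into $Q^{\pi}(s_t,a_t)$, and the definition of $d^{\pi'}$ to absorb the $\sum_t\gamma^t$ into the $\tfrac{1}{1-\gamma}$ normalization. Your added care about the vanishing tail $\gamma^t V^{\pi}(s_t)$ and the sum--expectation interchange is a welcome (if routine) refinement the paper leaves implicit.
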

The proof of Lemma~\ref{lem:performance difference} is referred to Appendix~\ref{proof:lem performance difference}. With Lemma~\ref{lem:performance difference}, we have
\begin{align}
	&J({\pia})-J(\hat{\pi})\\
	=&(1-\gamma)(V^{{\pia}}(\mu_0)-V^{\hat{\pi}}(\mu_0))\\
	=&\mathbb{E}_{s\sim {\da}}[\langle Q^{\hat{\pi}}(s,\dot), {\pia}(\cdot|s)-\hat{\pi}(\cdot|s)\rangle]\\
	\leq&\frac{1}{1-\gamma}\mathbb{E}_{s\sim {\da}}[\Vert{\pia}(s,\cdot)-\hat{\pi}(s,\cdot)\Vert_1].
\end{align}

\subsection{Proof of Lemma~\ref{lem:performance difference}}
\label{proof:lem performance difference}
For any two policies $\pi'$ and $\pi$, it follows from the definition of $V^{\pi'}(\mu_0)$ that
\begin{align}
	&V^{\pi'}(\mu_0)-V^{\pi}(\mu_0)\\
	=&\mathbb{E}_{\pi'}\left[\sum_{t=0}^{\infty}\gamma^tr(s_t,a_t)\Big\vert\, s_0\sim \mu_0\right]-V^{\pi}(\mu_0) \notag\\
	=&\mathbb{E}_{\pi'}\left[\sum_{t=0}^{\infty}\gamma^t\Big[r(s_t,a_t)+V^{\pi}_{\tau}(s_t)-V^{\pi}(s_t)\Big] \,\Big\vert\, s_0\sim\mu_0\right]-V^{\pi}(\mu_0) \notag\\
	=&\mathbb{E}_{\pi'}\left[\sum_{t=0}^{\infty}\gamma^t\Big[r(s_t,a_t)+\gamma V^{\pi}(s_{t+1})-V^{\pi}(s_t)\Big] \,\Big\vert\, s_0\sim\mu_0\right]\notag\\
	=&\mathbb{E}_{\pi'}\left[\sum_{t=0}^{\infty}\gamma^t\Big[r(s_t,a_t)+\gamma \mathbb{E}_{s_{t+1}\sim P(\cdot|s_t,a_t)}[V^{\pi}_{\tau}(s_{t+1})|s_t,a_t]-V^{\pi}_{\tau}(s_t)\Big]
	\,\Big\vert\, s_0\sim\mu_0\right] \notag\\
	=&\mathbb{E}_{\pi'}\left[\sum_{t=0}^{\infty}\gamma^t\Big[Q^{\pi}(s_t,a_t)-V^{\pi}(s_t)\Big] \,\Big\vert\, s_0\sim\mu_0\right]\notag \\
	=&\frac{1}{1-\gamma}\mathbb{E}_{(s,a)\sim d^{\pi'}}\left[Q^{\pi}(s,a)-V^{\pi}(s)\rangle\right]\notag \\
	=&\frac{1}{1-\gamma}\mathbb{E}_{s\sim d^{\pi'}}\left[\langle Q^{\pi}(s,\cdot),\pi'(\cdot|s)-\pi(\cdot|s)\rangle\right],
	\label{eq:Vpiprime-Vpi-diff}
\end{align}
where the second to last step comes from the definition of $d^{\pi'}$ and the last step from the fact $V^{\pi}(s)=\mathbb{E}_{a\sim\pi(\cdot|s)}[Q^{\pi}(s,a)]$.

\section{Proof of Corollary~\ref{cor:sample unregularized}}
\label{proof:cor sample unregularized}
The proof consists of two steps. We first show that $J({\piz})-J({\pie})\leq\frac{\epsilon}{2}$ and then we bound $J({\pie})-J(\hat{\pi})$ by utilizing Theorem~\ref{thm:sample regularized}.

\paragraph{Step 1: Bounding $J({\piz})-J({\pie})$.}
Notice that ${\pie}$ is the solution to the regularized problem \eqref{eq:constrained}\eqref{eq:bellman flow 1}, therefore we have:
\begin{equation}
\mathbb{E}_{(s,a)\sim {\de}}[r(s,a)]-\alpha\mathbb{E}_{(s,a)\sim d^D}[f({\we}(s,a))]\geq\mathbb{E}_{(s,a)\sim {\dz}}[r(s,a)]-\alpha\mathbb{E}_{(s,a)\sim d^D}[f({\wz}(s,a))],
\end{equation}
which implies that
\begin{align}
J({\piz})-J({\pie})&=\mathbb{E}_{(s,a)\sim {\dz}}[r(s,a)]-\mathbb{E}_{(s,a)\sim {\de}}[r(s,a)]\\
&\leq\alpha\mathbb{E}_{(s,a)\sim d^D}[f({\wz}(s,a))]-\alpha\mathbb{E}_{(s,a)\sim d^D}[f({\we}(s,a))]\\
&\leq\alpha\mathbb{E}_{(s,a)\sim d^D}[f({\wz}(s,a))]\label{cor1-proof-eq1}\\
&\leq\alpha B^0_{f}\label{cor1-proof-eq2},
\end{align}
where (\ref{cor1-proof-eq1}) comes from the non-negativity of $f$ and (\ref{cor1-proof-eq2}) from the boundedness of $f$ when $\alpha=0$ (Assumption~\ref{ass:f prop}). Thus we have
\begin{equation}
\label{eq:cor1-proof-3}
J({\piz})-J({\pie})\leq\frac{\epsilon}{2}.
\end{equation}

\paragraph{Step 2: Bounding $J({\pie})-J(\hat{\pi})$.}
Using Theorem~\ref{thm:sample regularized}, we know that if
\begin{align}
	&n\geq\frac{131072\left(\epsilon {\BFE}+2{\BWE}{\BEE}{\BFZ}\right)^2}{\epsilon^6M_f^2(1-\gamma)^4}\cdot\log\frac{4|\VC||\WC|}{\delta}, \\
	&n_0\geq\frac{131072\left(2{\BVE}{\BFZ}\right)^2}{\epsilon^6M_f^2(1-\gamma)^2}\cdot\log\frac{4|\VC|}{\delta},
\end{align}
then with at least probability $1-\delta$, 
\begin{equation}
	\label{eq:cor1-proof-5}
	J({\pie})-J(\hat{\pi})\leq\frac{\epsilon}{2}.
\end{equation}


Using (\ref{eq:cor1-proof-3}) and (\ref{eq:cor1-proof-5}), we concludes that 
\begin{equation}
J({\piz})-J(\hat{\pi})\leq\epsilon
\end{equation} 
hold with at least probability $1-\delta$. This finishes our proof.

\section{Proof of Proposition~\ref{prop:LP stability}}
\label{proof:prop LP stability}
This proof largely follows \citet{mangasarian1979nonlinear}. First note that the regularized problem~\eqref{eq:constrained}\eqref{eq:bellman flow 1} has another more commonly used form of Lagrangian function:
\begin{equation}
\label{eq:Langrangian}
\bar{L}_{\alpha}(\lambda,\eta,w)=(1-\gamma)\mathbb{E}_{s\sim\mu_0}[\lambda(s)]-\alpha\mathbb{E}_{(s,a)\sim d^D}[f(w(s,a))]+\mathbb{E}_{(s,a)\sim d^D}[w(s,a)e_{\lambda}(s)]-\eta^{\top}w,
\end{equation} 
where $\lambda\in\mathbb{R}^{|\mathcal{S}|},\eta\in\mathbb{R}^{|\mathcal{S}||\mathcal{A}|}\geq0,w\in\mathbb{R}^{|\mathcal{S}||\mathcal{A}|}$. Let $(\lambda^*_{\alpha},\eta^*_{\alpha})=\arg\min_{\eta\geq0,\lambda\in\mathbb{R}^{|\mathcal{S}|}}\max_{w\in\mathbb{R}^{|\mathcal{S}||\mathcal{A}|}}\bar{L}_{\alpha}(\lambda,\eta,w)$, then we have the following lemma:
\begin{lemma}
\label{lem:Lagrangian equivalence}	
\begin{equation}
\lambda^*_{\alpha}={\va}.
\end{equation}	
\end{lemma}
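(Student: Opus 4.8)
The plan is to identify both $\va$ and $\lambda^*_\alpha$ as the Bellman-flow multiplier at a saddle point of the Lagrangian of the regularized LP \eqref{eq:constrained}\eqref{eq:bellman flow 1} --- the only difference being whether the nonnegativity constraint $d\geq 0$ (equivalently $w\geq 0$) is dualized via $\eta$, giving $\bar{L}_{\alpha}$, or kept explicit, giving $\La$ --- and then to argue that any two such multipliers coincide because their difference is a $\gamma$-discounted harmonic function and hence vanishes.

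First I would set up duality. The regularized LP has an objective that is strictly concave in $d$ on $\supp(d^D)$ (by strong convexity of $f$), affine equality constraints \eqref{eq:bellman flow 1}, the polyhedral constraint $d\geq 0$, and a strictly feasible point --- the uniform-policy occupancy, which is positive everywhere since $\mu_0>0$. Hence strong duality holds with zero gap: $\bar{L}_{\alpha}$ admits a saddle point whose primal part is the unique optimum $\da=\wa\cdot d^D$, so that $\lambda^*_\alpha$ is precisely the flow multiplier at that saddle point; and $(\va,\wa)$ is a saddle point of $\La$ over $\R^{|\cS|}\times\R_+^{|\cS||\cA|}$ (the partial Lagrangian, keeping $w\geq 0$ explicit), as already used in the proof of Lemma~\ref{lem:hat L tilde L}. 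Equivalently, for each fixed $\lambda$ one has $\min_{\eta\geq 0}\max_w\bar{L}_{\alpha}(\lambda,\eta,w)=\max_{w\geq 0}\La(\lambda,w)$ by strong duality of the inner concave maximization over $\{w\geq 0\}$ with $\eta$ its multiplier, so the two dual problems share their value and their $\arg\min$ in $\lambda$.

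Next I would extract the stationarity relation. Since $\bar{L}_{\alpha}(\lambda^*_\alpha,\eta^*_\alpha,\cdot)$ is strongly concave in $w$ (the $-\alpha\E_{(s,a)\sim d^D}[f(w(s,a))]$ term), its maximizer $\wa$ is characterized by first-order stationarity together with complementary slackness $\eta^*_\alpha(s,a)\wa(s,a)=0$; for every $(s,a)$ with $\wa(s,a)>0$ this forces $\eta^*_\alpha(s,a)=0$, hence $e_{\lambda^*_\alpha}(s,a)=\alpha f'(\wa(s,a))$. The $\La$-saddle point yields the same for $\va$: from $\wa=\arg\max_{w\geq 0}\La(\va,w)$ one gets $\wa(s,a)=\max\!\big(0,(f')^{-1}(e_{\va}(s,a)/\alpha)\big)$ --- exactly as recorded in the proof of Lemma~\ref{lem:bound tilde nu} --- so $e_{\va}(s,a)=\alpha f'(\wa(s,a))$ whenever $\wa(s,a)>0$. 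Therefore $e_{\lambda^*_\alpha}(s,a)=e_{\va}(s,a)$ on the support of $\wa$.

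Finally I would close the argument exactly as in Lemma~\ref{lem:bound tilde nu}. Since $\da$ satisfies \eqref{eq:bellman flow 1}, $\da(s)\geq(1-\gamma)\mu_0(s)>0$ for every $s$, so each $s$ admits $a_s\in\cA$ with $\wa(s,a_s)>0$; subtracting the two expressions for $e_{(\cdot)}(s,a_s)$, the reward terms cancel and $g:=\lambda^*_\alpha-\va$ satisfies $g(s)=\gamma\sum_{s'}P(s'|s,a_s)g(s')$ for all $s$. Evaluating at a state attaining $\|g\|_\infty$ gives $\|g\|_\infty\leq\gamma\|g\|_\infty$, and since $\gamma<1$ this forces $g\equiv 0$, i.e.\ $\lambda^*_\alpha=\va$. (The same computation shows the flow multiplier is unique, so $\lambda^*_\alpha$ is unambiguous.) I expect the main obstacle to be the bookkeeping of the setup step --- verifying that the full and partial Lagrangians genuinely share a single, unique primal optimum and that first-order stationarity in $w$ holds in the stated coordinatewise form --- rather than the $\gamma$-harmonic finish, which is routine given the earlier lemmas.
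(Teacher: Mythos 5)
Your proposal is correct and follows essentially the same route as the paper: both arguments write down the KKT conditions of the full Lagrangian $\bar{L}_{\alpha}$ and of the partial Lagrangian $\La$, use complementary slackness to kill $\eta^*_{\alpha}$ on the support of $\wa$, and conclude that $\lambda^*_{\alpha}$ and $\va$ satisfy the same stationarity system $e_{(\cdot)}(s,a)=\alpha f'(\wa(s,a))$ wherever $\wa(s,a)>0$. The one place you go beyond the paper is the finish: the paper simply observes that the two KKT systems coincide and declares $\lambda^*_{\alpha}=\va$, which tacitly assumes that system has a unique solution in the multiplier, whereas you prove uniqueness explicitly by noting that every state has a supported action (since $\da(s)\geq(1-\gamma)\mu_0(s)>0$), so the difference $g=\lambda^*_{\alpha}-\va$ satisfies $g(s)=\gamma\sum_{s'}P(s'|s,a_s)g(s')$ and hence $\Vert g\Vert_{\infty}\leq\gamma\Vert g\Vert_{\infty}=0$. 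That contraction step is a genuine (if small) improvement in rigor over the paper's one-line conclusion, and the rest of your bookkeeping (strong duality, shared primal optimum $\wa$, restriction to $d^D(s,a_s)>0$) matches what the paper uses elsewhere, e.g.\ in the proof of Lemma~\ref{lem:bound tilde nu}.
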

\begin{proof}
The proof is referred to Appendix~\ref{proof:lem Lagrangian equivalence}.
\end{proof}
Due to Lemma~\ref{lem:Lagrangian equivalence}, we can only consider the primal optimum ${\wa}$ and the dual optimum $(\lambda^*_{\alpha},\eta^*_{\alpha})$ of the Lagrangian function (\ref{eq:Langrangian}). 

Let $\wzlp$ be the solution to the following optimization problem:
\begin{equation}
\max_{w\in\WCZ}-\alpha\mathbb{E}_{(s,a)\sim d^D}[f(w(s,a))]
\end{equation}

Then since $\wzlp\in\WCZ$, we know that $({{\wzlp}},\lambda^*_{0},\eta^*_{0})$ is the primal and dual optimum of the following constrained optimization problem, which is equivalent to the unregularized problem \eqref{prob:original problem}:
\begin{align}
&\max_{w}\sum_{s,a}[r(s,a)d^D(s,a)w(s,a)]\label{eq:constrained-alpha-0}\\
&\text{s.t. }\sum_{a}d^D(s,a)w(s,a)=(1-\gamma)\mu_0(s)+\gamma\sum_{s',a'}P(s|s',a')d^D(s',a')w(s',a')\label{eq:bellman-flow-alpha-0-1}\\
&\quad w(s,a)\geq0,\forall s,a\label{eq:bellman-flow-alpha-0-2}.
\end{align}
Let $p(s,a)$ denote $r(s,a)d^D(s,a)$ and $Aw=b$ denote the equality constraint (\ref{eq:bellman-flow-alpha-0-1}), then we can obtain the following LP:
\begin{align}
&\min_{w} -p^{\top}w\label{eq:LP-0}\\
&\text{s.t. }Aw=b\label{eq:LP-0-constraint-1}\\
&\quad w(s,a)\geq0,\forall s,a\label{eq:LP-0-constraint-2}.
\end{align}
By the KKT conditions of the above problem, we can obtain:
\begin{align}
&A^{\top}\lambda^*_{0}-p-\eta^*_{0}=0,\\
&A{{\wzlp}}=b, {{\wzlp}}\geq0,\\
&\eta^*_{0}\geq0,\\
&\eta^*_{0}(s,a){{\wzlp}}(s,a)=0,\forall s,a.\\
\end{align}

Let $c=-p^{\top}{\wzlp}$. Next we construct an auxiliary constrained optimization problem:
\begin{align}
&\min_{w} \mathbb{E}_{(s,a)\sim d^D}[f(w(s,a))]\label{eq:auxiliary}\\
&\text{s.t. }Aw=b, \label{eq:lp1}\\
&\quad w(s,a)\geq0,\forall s,a, \label{eq:lp2}\\
&-p^{\top}w\leq c \label{eq:lp3}.
\end{align}
Then the corresponding Lagrangian function is
\begin{equation}
\mathbb{E}_{(s,a)\sim d^D}[f(w(s,a))]+\lambda_{aux}^{\top}(Aw-b)-\eta_{aux}^{\top}w+\xi_{aux}(-p^{\top}w-c).
\end{equation}
Denote the primal and dual optimum of the auxiliary problem by $({w}^*_{aux},\lambda^*_{aux},\eta^*_{aux},\xi^*_{aux})$. Then obviously the constraints \eqref{eq:lp1}\eqref{eq:lp2}\eqref{eq:lp3} are equivalent to $w\in\WCZ$ and therefore ${w}^*_{aux}=\wzlp$, implying that $(\wzlp,\lambda^*_{aux},\eta^*_{aux},\xi^*_{aux})$ satisfies the following KKT conditions:
\begin{align}
&d^D\circ\nabla f({\wzlp})+A^{\top}\lambda^*_{aux}-\eta^*_{aux}-\xi^*_{aux}p=0,\\
&A{\wzlp}=b, {\wzlp}\geq0, -p^{\top}{\wzlp}=c,\\
&\eta^*_{aux}\geq0,\xi^*_{aux}\geq0,\\
&\eta^*_{aux}(s,a){\wzlp}(s,a)=0,\forall s,a,
\end{align}
where $d^D\circ\nabla f({\wzlp})$ denotes product by element.

Now we look at KKT conditions of (\ref{eq:Langrangian}):
\begin{align}
&A^{\top}\lambda^*_{\alpha}-p-\eta^*_{\alpha}+\alpha d^D\circ\nabla f({\wa})=0,\\
&A{\wa}=b, {\wa}\geq0,\\
&\eta^*_{\alpha}\geq0,\\
&\eta^*_{\alpha}(s,a){\wa}(s,a)=0,\forall s,a.\\
\end{align}

\begin{itemize}
\item \textbf{When $\mathbf{\xi^*_{aux}=0}$.} It can be easily checked that $({\wa}={\wzlp},\lambda^*_{\alpha}=\lambda^*_{0}+\alpha\lambda^*_{aux},\eta^*_{\alpha}=\eta^*_{0}+\alpha\eta^*_{aux})$ satisfies the KKT conditions of (\ref{eq:Langrangian}) for all $\alpha\geq0$.

\item \textbf{When $\mathbf{\xi^*_{aux}>0}$.} It can be easily checked that $({\wa}={\wzlp},\lambda^*_{\alpha}=(1-\alpha\xi^*_{aux})\lambda^*_{0}+\alpha\lambda^*_{aux},\eta^*_{\alpha}=(1-\alpha\xi^*_{aux})\eta^*_{0}+\alpha\eta^*_{aux})$ satisfies the KKT conditions of (\ref{eq:Langrangian}) for $\alpha\in[0,\bar{\alpha}]$ where $\bar{\alpha}=\frac{1}{\xi^*_{aux}}$.
\end{itemize}
Therefore, when $\alpha\in[0,\bar{\alpha}]$, $({\wa}={\wzlp},\lambda^*_{\alpha}=(1-\alpha\xi^*_{aux})\lambda^*_{0}+\alpha\lambda^*_{aux},\eta^*_{\alpha}=(1-\alpha\xi^*_{aux})\eta^*_{0}+\alpha\eta^*_{aux})$ is the primal and dual optimum of (\ref{eq:Langrangian}). Then by Lemma~\ref{lem:Lagrangian equivalence}, we know for $\alpha\in[0,\bar{\alpha}]$,
\begin{equation}
{\wa}={\wzlp}\in W^*_0, {\va}=(1-\alpha\xi^*_{aux})\lambda^*_{0}+\alpha\lambda^*_{aux}.
\end{equation}
Let $\alpha=\bar{\alpha}=\frac{1}{\xi^*_{aux}}$, then since $\Vert\wab\Vert_{\infty}=\Vert {\wzlp}\Vert_{\infty}\leq B^0_w$, by Lemma~\ref{lem:bound tilde nu} we have:
\begin{equation}
\Vert\bar{\alpha}\lambda^*_{aux}\Vert_{\infty}=\Vert\vab\Vert_{\infty}\leq\frac{\bar{\alpha}{\BFPZ}+1}{1-\gamma},
\end{equation}
which implies that
\begin{equation}
\Vert\lambda^*_{aux}\Vert_{\infty}\leq\frac{{\BFPZ}+\xi^*_{aux}}{1-\gamma}.
\end{equation}
Therefore, combining with $\Vert{\vz}\Vert_{\infty}\leq\frac{1}{1-\gamma}$, we have
\begin{equation}
\Vert{\va}-{\vz}\Vert_{\infty}\leq\alpha\cdot\frac{{\BFPZ}+2\xi^*_{aux}}{1-\gamma},\forall \alpha\in[0,\bar{\alpha}]
\end{equation}
which concludes our proof.

\subsection{Proof of Lemma~\ref{lem:Lagrangian equivalence}}
\label{proof:lem Lagrangian equivalence}
From KKT conditions of $\bar{L}_{\alpha}(\lambda,\eta,w)$, we have
\begin{align}
&{\wa}(s,a)=(f')^{-1}(\frac{e_{\lambda^*_{\alpha}}(s,a)+\eta^*_{\alpha}(s,a)}{\alpha}),\forall s,a,\\
&{\wa}\geq0,\\
&\sum_{a}{\wa}(s,a)d^D(s,a)=(1-\gamma)\mu_0(s)+\gamma\sum_{s',a'}P(s|s',a'){\wa}(s',a')d^D(s',a'),\forall s,\\
&\eta^*_{\alpha}\geq0,\\
&\eta^*_{\alpha}(s,a){\wa}(s,a)=0,\forall s,a.
\end{align}
Therefore, we can see that $\lambda^*_{\alpha}$ is the solution of the following equations:
\begin{align}
&e_{\lambda^*_{\alpha}}(s,a)=\alpha f'({\wa}(s,a)),\text{ for }s,a \text{ such that }{\wa}(s,a)\neq0,\label{eq:lagrange-1}\\
&e_{\lambda^*_{\alpha}}(s,a)\leq\alpha f'(0),\text{ for }s,a \text{ such that }{\wa}(s,a)=0.\label{eq:lagrange-2}
\end{align}

Besides, from KKT conditions of ${\La}({v},w)$, we have
\begin{align}
&{\wa}(s,a)=\max\{0,(f')^{-1}(\frac{e_{\lambda^*_{\alpha}}(s,a)}{\alpha})\},\forall s,a,\\
&{\wa}\geq0,\\
&\sum_{a}{\wa}(s,a)d^D(s,a)=(1-\gamma)\mu_0(s)+\gamma\sum_{s',a'}P(s|s',a'){\wa}(s',a')d^D(s',a'),\forall s.
\end{align}
Therefore,  ${\va}$ is the solution of the following equations:
\begin{align}
&e_{{\va}}(s,a)=\alpha f'({\wa}(s,a)),\text{ for }s,a \text{ such that }{\wa}(s,a)\neq0,\label{eq:lagrange-3}\\
&e_{{\va}}(s,a)\leq\alpha f'(0),\text{ for }s,a \text{ such that }{\wa}(s,a)=0.\label{eq:lagrange-4}
\end{align}

It is observed that (\ref{eq:lagrange-1})(\ref{eq:lagrange-2}) is the same as (\ref{eq:lagrange-3})(\ref{eq:lagrange-4}), which implies that $\lambda^*_{\alpha}={\va}$.

\section{Proof of Theorem~\ref{thm:sample regularized approximate}}
\label{proof:thm sample regularized approximate}
Our proof follows a similar procedure of Theorem~\ref{thm:sample regularized} and also consists of (1) bounding $|{\La}({v},w)-\hat{L}_{\alpha}({v},w)|$, (2) characterizing the error $\Vert\hat{w}-{\wa}\Vert_{2,d^D}$ and (3) analyzing $\hat{\pi}$ and ${\pia}$. The first and third step are exactly the same as Theorem~\ref{thm:sample regularized} but the second step will be more complicated, on which we will elaborate on in this section. We will use the following notations for brevity throughout the discussion:
\begin{align}
&{\vav}=\arg\min_{{v}\in {\VC}}\Vert {v}-{\va}\Vert_{1,\mu_0}+\Vert {v}-{\va}\Vert_{1,d^D}+\Vert {v}-{\va}\Vert_{1,d^{D'}},\\
&{\waww}=\arg\min_{w\in {\WC}}\Vert w-{\wa}\Vert_{1,d^D},\\
&\hat{{v}}(w)=\arg\min_{{v}\in {\VC}}\hat{L}_{\alpha}({v},w),\forall w.
\end{align}

We first need to characterize ${\La}({\va},{\wa})-{\La}({\va},\hat{w})$. Similarly, we decompose ${\La}({\va},{\wa})-{\La}({\va},\hat{w})$ into the following terms:
\begin{align}
&{\La}({\va},\hat{w})-{\La}({\va},{\wa})=(\underbrace{{\La}({\va},\hat{w})-{\La}({\vav},\hat{w})}_{(1)})+(\underbrace{{\La}({\vav},\hat{w})-\hat{L}_{\alpha}({\vav},\hat{w})}_{(2)})\\ &+(\underbrace{\hat{L}_{\alpha}({\vav},\hat{w})-\hat{L}_{\alpha}(\hat{{v}},\hat{w})}_{(3)})+(\underbrace{\hat{L}_{\alpha}(\hat{{v}},\hat{w})-\hat{L}_{\alpha}(\hat{{v}}({\waww}),{\waww})}_{(4)})\\ &+(\underbrace{\hat{L}_{\alpha}(\hat{{v}}({\waww}),{\waww})-{\La}(\hat{{v}}({\waww}),{\waww})}_{(5)})+ (\underbrace{{\La}(\hat{{v}}({\waww}),{\waww})-{\La}(\hat{{v}}({\waww}),{\wa})}_{(6)}),\\
&+ (\underbrace{{\La}(\hat{{v}}({\waww}),{\wa})-{\La}({\va},{\wa})}_{(7)}).
\end{align}

For term (2) and (5), we can apply Lemma~\ref{lem:hat L conc} and thus
\begin{equation}
(2)\geq-{\lstat},(5)\geq-{\lstat}.
\end{equation}
For term (3), since $\hat{L}_{\alpha}(\hat{{v}},\hat{w})-\min_{{v}\in {\VC}}\hat{L}_{\alpha}({v},\hat{w})\leq\epsilon_{o,{v}}$ and ${\vav}\in {\VC}$, we have
\begin{equation}
(3)\geq-\epsilon_{o,{v}}.
\end{equation}
For term (4), since $\max_{w\in {\WC}}\min_{{v}\in {\VC}}\hat{L}_{\alpha}({v},w)-\min_{{v}\in {\VC}}\hat{L}_{\alpha}({v},\hat{w})\leq\epsilon_{o,w}$ and ${\waww}\in {\WC}$, 
\begin{equation}
\hat{L}_{\alpha}(\hat{{v}},\hat{w})\geq\min_{{v}\in {\VC}}\hat{L}_{\alpha}({v},\hat{w})\geq\max_{w\in {\WC}}\min_{{v}\in {\VC}}\hat{L}_{\alpha}({v},w)-\epsilon_{o,w}\geq\hat{L}_{\alpha}(\hat{{v}}({\waww}),{\waww})-\epsilon_{o,w},
\end{equation}
or
\begin{equation}
(4)\geq-\epsilon_{o,w}.
\end{equation}

For term (7), since ${\va}=\arg\min_{{v}\in\mathbb{R}^{|\mathcal{S}|}}{\La}({v},{\wa})$, we have:
\begin{equation}
(7)\geq0.
\end{equation}

There are only term (1) and (6) left to be bounded, for which we introduce the following lemma on the continuity of ${\La}({v},w)$,
\begin{lemma}
	\label{lem:f continuity}
	Suppose Assumption~\ref{ass:W bound},\ref{ass:f prop},\ref{ass:V bound} hold. Then for any ${v},{v}_1,{v}_2\in {\VC}$ and $w,w_1,w_2\in {\WC}$, we have:
	\begin{align}
	&|{\La}({v}_1,w)-{\La}({v}_2,w)|\leq\left({\BW}+1\right)\left(\Vert{v}_1-{v}_2\Vert_{1,\mu_0}+\Vert{v}_1-{v}_2\Vert_{1,d^D}+\Vert{v}_1-{v}_2\Vert_{1,d^{D'}}\right),\\
	&|{\La}({v},w_1)-{\La}({v},w_2)|\leq({\BE}+\alpha {\BFP})\Vert w_1-w_2\Vert_{1,d^D}.
	\end{align}
\end{lemma}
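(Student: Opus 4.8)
The plan is to exploit the fact that $v$ and $w$ enter $\La(v,w)$ only through a handful of simple terms, so both inequalities reduce to elementary estimates (triangle inequality plus the mean value theorem).

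\textbf{Lipschitzness in $v$.} With $w$ fixed, the only $v$-dependent parts of $\La(v,w)$ are $(1-\gamma)\mathbb{E}_{s\sim\mu_0}[v(s)]$ and $\mathbb{E}_{(s,a)\sim d^D}[w(s,a)e_{v}(s,a)]$. The first contributes at most $(1-\gamma)\Vert v_1-v_2\Vert_{1,\mu_0}\le\Vert v_1-v_2\Vert_{1,\mu_0}$. For the second, I would write $e_{v_1}(s,a)-e_{v_2}(s,a)=\gamma\sum_{s'}P(s'|s,a)(v_1-v_2)(s')-(v_1-v_2)(s)$, bound $w(s,a)\le\BW$ by Assumption~\ref{ass:W bound}, move the absolute value inside both expectations, and use the identity $\mathbb{E}_{(s,a)\sim d^D}[\sum_{s'}P(s'|s,a)g(s')]=\mathbb{E}_{s'\sim d^{D'}}[g(s')]$, which is exactly the definition of $d^{D'}$. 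This bounds the second term by $\BW(\gamma\Vert v_1-v_2\Vert_{1,d^{D'}}+\Vert v_1-v_2\Vert_{1,d^D})\le\BW(\Vert v_1-v_2\Vert_{1,d^{D'}}+\Vert v_1-v_2\Vert_{1,d^D})$. Summing the two contributions and using $\BW+1\ge\max\{1,\BW\}$ gives the first claim.

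\textbf{Lipschitzness in $w$.} With $v$ fixed, the $w$-dependent parts are $-\alpha\mathbb{E}_{(s,a)\sim d^D}[f(w(s,a))]$ and $\mathbb{E}_{(s,a)\sim d^D}[w(s,a)e_{v}(s,a)]$. Since $w_1(s,a),w_2(s,a)\in[0,\BW]$ by Assumption~\ref{ass:W bound}, the mean value theorem together with $|f'|\le\BFP$ on $[0,\BW]$ (Assumption~\ref{ass:f prop}) yields $|f(w_1(s,a))-f(w_2(s,a))|\le\BFP|w_1(s,a)-w_2(s,a)|$, so the $f$-term contributes at most $\alpha\BFP\Vert w_1-w_2\Vert_{1,d^D}$. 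For the bilinear term, bound $|e_{v}(s,a)|=|r(s,a)+\gamma\sum_{s'}P(s'|s,a)v(s')-v(s)|\le 1+(1+\gamma)\Vert v\Vert_\infty\le 1+(1+\gamma)\BV=\BE$ using Assumption~\ref{ass:V bound}, giving a contribution of at most $\BE\Vert w_1-w_2\Vert_{1,d^D}$. Adding the two gives the second claim.

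\textbf{Main obstacle.} There is no genuine obstacle here; the proof is a routine chain of triangle inequalities and one application of the mean value theorem. The only point requiring a moment's care is handling the transition term inside $e_v$ via the pushforward distribution $d^{D'}$, which is precisely why the first bound must carry $\Vert\cdot\Vert_{1,d^{D'}}$ alongside $\Vert\cdot\Vert_{1,\mu_0}$ and $\Vert\cdot\Vert_{1,d^D}$, matching the three-term definition of the approximation error $\erva$ in Section~\ref{sec:results agnostic}.
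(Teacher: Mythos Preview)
Your proof is correct and follows essentially the same approach as the paper: split $\La$ into its $v$-dependent and $w$-dependent terms, bound the $\mu_0$ and $e_v$ pieces separately using $w\le\BW$ and the pushforward $d^{D'}$ for the first inequality, and use $|f'|\le\BFP$ (your mean value theorem) together with $|e_v|\le\BE$ for the second. The only cosmetic difference is that the paper writes out the intermediate chain of inequalities more explicitly, but the argument is identical.
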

The proof is in Section~\ref{proof:lem f continuity}. Using Lemma~\ref{lem:f continuity}, we can bound term (1) and (6) easily:
\begin{equation}
(1)\geq-\left({\BW}+1\right)\erva, (6)-\geq({\BE}+\alpha {\BFP})\erwa.
\end{equation}

Combining the above inequalities, it is obvious that
\begin{equation}
{\La}({\va},\hat{w})-{\La}({\va},{\wa})\geq-2{\lstat}-(\epsilon_{o,{v}}+\epsilon_{o,w})-\left(\left({\BW}+1\right)\erva+({\BE}+\alpha {\BFP})\erwa\right).
\end{equation}

Let $\eag$ denote $\left({\BW}+1\right)\erva+({\BE}+\alpha {\BFP})\erwa$ and $\epsilon_{opt}$ denote $\epsilon_{o,{v}}+\epsilon_{o,w}$, then
\begin{equation}
{\La}({\va},\hat{w})-{\La}({\va},{\wa})\geq-2{\lstat}-\epsilon_{opt}-\eag.
\end{equation}
Further we utilize the strong convexity of $f$ and Lemma~\ref{lem:tilde w tilde pi}, then we have:
\begin{align}
&\mathbb{E}_{s\sim {\da}}[\Vert{\pia}(\cdot|s)-\hat{\pi}(\cdot|s)\Vert_1]\leq 2\Vert \hat{w}-{\wa}\Vert_{2,d^D}\leq4\sqrt{\frac{\lstat}{\alpha M_f}}+2\sqrt{\frac{2(\epsilon_{opt}+\eag)}{\alpha M_f}},
\end{align}
which completes the proof.

\subsection{Proof of Lemma~\ref{lem:f continuity}}
\label{proof:lem f continuity}
First, by the definition of ${\La}({v},w)$ (\ref{prob:maximin2}) we have
\begin{align}
&|{\La}({v}_1,w)-{\La}({v}_2,w)|\\
=&|(1-\gamma)\mathbb{E}_{s\sim \mu_0}[{v}_1(s)-{v}_2(s)]+\mathbb{E}_{(s,a)\sim d^D}[w(s,a)(e_{{v}_1}(s,a)-e_{{v}_2}(s,a))]|\\
\leq&(1-\gamma)\mathbb{E}_{s\sim \mu_0}[|{v}_1(s)-{v}_2(s)|]+\mathbb{E}_{(s,a)\sim d^D}[w(s,a)|e_{{v}_1}(s,a)-e_{{v}_2}(s,a)|]\\
=&(1-\gamma)\Vert{v}_1-{v}_2\Vert_{1,\mu_0}+\mathbb{E}_{(s,a)\sim d^D}[w(s,a)|e_{{v}_1}(s,a)-e_{{v}_2}(s,a)|].
\end{align}

For $\mathbb{E}_{(s,a)\sim d^D}[w(s,a)|e_{{v}_1}(s,a)-e_{{v}_2}(s,a)|]$, notice that from Assumption~\ref{ass:W bound},
\begin{align}
&\mathbb{E}_{(s,a)\sim d^D}[w(s,a)|e_{{v}_1}(s,a)-e_{{v}_2}(s,a)|]\\
\leq&{\BW}\mathbb{E}_{(s,a)\sim d^D}\left[|\gamma\mathbb{E}_{s'\sim P(s'|s,a)}[{v}_1(s')-{v}_2(s')]+\left({v}_2(s)-{v}_1(s)\right)|\right]\\
\leq&{\BW}\mathbb{E}_{(s,a)\sim d^D}\left[|\gamma\mathbb{E}_{s'\sim P(s'|s,a)}[{v}_1(s')-{v}_2(s')]|\right]+{\BW}\mathbb{E}_{s\sim d^D}\left[|{v}_2(s)-{v}_1(s)|\right]\\
\leq&\gamma {\BW}\mathbb{E}_{(s,a)\sim d^D,s'\sim P(s'|s,a)}[|{v}_1(s')-{v}_2(s')|]+{\BW}\Vert{v}_2-{v}_1\Vert_{1,d^D}\\
\leq& {\BW}\left(\Vert{v}_1-{v}_2\Vert_{1,d^D}+\Vert{v}_1-{v}_2\Vert_{1,d^{D'}}\right).
\end{align}

Thus we have
\begin{equation}
|{\La}({v}_1,w)-{\La}({v}_2,w)|\leq\left({\BW}+1\right)\left(\Vert{v}_1-{v}_2\Vert_{1,\mu_0}+\Vert{v}_1-{v}_2\Vert_{1,d^D}+\Vert{v}_1-{v}_2\Vert_{1,d^{D'}}\right).
\end{equation}

Next we bound $|{\La}({v},w_1)-{\La}({v},w_2)|$:
\begin{align}
&|{\La}({v},w_1)-{\La}({v},w_2)|\\
=&|\alpha\mathbb{E}_{(s,a)\sim d^D}[f(w_2(s,a))-f(w_1(s,a))]+\mathbb{E}_{(s,a)\sim d^D}[(w_1(s,a)-w_2(s,a))e_{{v}}(s,a)]|\\
\leq&\alpha\mathbb{E}_{(s,a)\sim d^D}[|f(w_1(s,a))-f(w_2(s,a))|]+\mathbb{E}_{(s,a)\sim d^D}[|w_1(s,a)-w_2(s,a)|e_{{v}}(s,a)].
\end{align}

For $\alpha\mathbb{E}_{(s,a)\sim d^D}[|f(w_1(s,a))-f(w_2(s,a))|]$, from Assumption~\ref{ass:f prop} we know
\begin{align}
&\alpha\mathbb{E}_{(s,a)\sim d^D}[|f(w_1(s,a))-f(w_2(s,a))|]\\
\leq&\alpha {\BFP}\mathbb{E}_{(s,a)\sim d^D}[|w_1(s,a)-w_2(s,a)|]\\
=&\alpha {\BFP}\Vert w_1-w_2\Vert_{1,d^D}.
\end{align}

For $\mathbb{E}_{(s,a)\sim d^D}[|w_1(s,a)-w_2(s,a)|e_{{v}}(s,a)]$, from Assumption~\ref{ass:V bound} we know
\begin{align}
&\mathbb{E}_{(s,a)\sim d^D}[|w_1(s,a)-w_2(s,a)|e_{{v}}(s,a)]\\
\leq&{\BE}\mathbb{E}_{(s,a)\sim d^D}[|w_1(s,a)-w_2(s,a)|]\\
=&{\BE}\Vert w_1-w_2\Vert_{1,d^D}.
\end{align}

Therefore we have
\begin{equation}
|{\La}({v},w_1)-{\La}({v},w_2)|\leq({\BE}+\alpha {\BFP})\Vert w_1-w_2\Vert_{1,d^D}.
\end{equation}

\section{Proof of Lemmas in Theorem~\ref{thm:sample regularized constrained}}
\subsection{Proof of Lemma~\ref{lem:bound tilde nu 2}}
\label{proof:lem bound tilde nu 2}
From KKT conditions of the maximin problem (\ref{prob:maximin2 constrained}), we have  
\begin{equation}
{\waw}(s,a)=\min\left(\max\left(0,(f')^{-1}\left(\frac{e_{{\vaw}}(s,a)}{\alpha}\right)\right),B_w\right).
\end{equation}
Suppose $|{\vaw}(s_m)|=\Vert {\vaw}\Vert_{\infty}$. Then we can consider the following two cases separately. 

\begin{itemize}
\item \textbf{If there exists $a_{s_m}\in\mathcal{A}$ such that $0<{\waw}(s_m,a_{s_m})<B_w$.} 

In this case, we know that 
\begin{equation}
|e_{{\vaw}}(s_m,a_{s_m})|=\alpha |f'({\waw}(s_m,a_{s_m}))|\leq\alpha B_{f'}.
\end{equation}
Then we can follow the arguments in Appendix~\ref{proof:lemma bound tilde nu} to obtain:
\begin{equation}
\Vert{\vaw}\Vert_{\infty}\leq\frac{\alpha B_{f'}+1}{1-\gamma}.
\end{equation}

\item \textbf{If for all $a\in\mathcal{A}$, ${\waw}(s_m,a)\in\{0,B_w\}$.}
In this case, we first introduce the following lemma:
\begin{lemma}
\label{lem:bound nu bridge}
If for all $a\in\mathcal{A}$, ${\waw}(s_m,a)\in\{0,B_w\}$, then there exist $a_1,a_2\in\mathcal{A}$ such that ${\waw}(s_m,a_1)=0,{\waw}(s_m,a_2)=B_w$.
\end{lemma}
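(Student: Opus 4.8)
The plan is to exploit two facts: the optimizer $\daw$ of the constrained and regularized LP satisfies the Bellman flow equation \eqref{eq:bellman flow 2}, and so does $d^D$ itself, since $d^D = d^{\pi_D}$ by Assumption~\ref{ass:dataset}. Writing $\daw(s,a) = \waw(s,a) d^D(s,a)$ and $\daw(s) = \sum_a \daw(s,a)$, and using that the hypothesis forces $\waw(s_m,a) \in \{0, B_w\}$ for every $a$, it is enough to rule out the two extreme configurations: (i) $\waw(s_m,\cdot) \equiv 0$, and (ii) $\waw(s_m,\cdot) \equiv B_w$. Once both are excluded, some action must take value $0$ and some action must take value $B_w$, which is exactly the claim.

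For configuration (i), I would note that $\waw(s_m,\cdot) \equiv 0$ gives $\daw(s_m) = 0$, whereas evaluating \eqref{eq:bellman flow 2} at $s_m$ and using $\daw \ge 0$ together with $\mu_0(s_m) > 0$ yields $\daw(s_m) \ge (1-\gamma)\mu_0(s_m) > 0$, a contradiction. For configuration (ii), $\waw(s_m,\cdot) \equiv B_w$ gives $\daw(s_m) = B_w d^D(s_m)$; subtracting the flow equation for $\daw$ at $s_m$ from $B_w$ times the flow equation for $d^D$ at $s_m$ (legitimate thanks to Assumption~\ref{ass:dataset}) gives
\[
0 \;=\; B_w d^D(s_m) - \daw(s_m) \;=\; (B_w-1)(1-\gamma)\mu_0(s_m) \;+\; \gamma\sum_{s',a'} P(s_m\,|\,s',a')\big(B_w d^D(s',a') - \daw(s',a')\big).
\]
The first term is strictly positive because $B_w > 1$ and $\mu_0(s_m) > 0$, while every summand is nonnegative since $B_w d^D(s',a') - \daw(s',a') = (B_w - \waw(s',a'))\,d^D(s',a') \ge 0$ by the box constraint $w \le B_w$. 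Hence the right-hand side is strictly positive, contradicting that it equals $0$.

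Combining the two exclusions: each $\waw(s_m,a)$ lies in $\{0,B_w\}$, not all of them are $0$, and not all of them are $B_w$, so there exist $a_1$ with $\waw(s_m,a_1)=0$ and $a_2$ with $\waw(s_m,a_2)=B_w$. The whole argument is elementary once the flow equations are in hand; the only point deserving care is that $d^D$ genuinely satisfies \eqref{eq:bellman flow 2} --- this is precisely Assumption~\ref{ass:dataset} (which also ensures the constrained LP is feasible, since $d=d^D$ is admissible when $B_w>1$), so I do not anticipate any real obstacle here.
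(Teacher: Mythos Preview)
Your proposal is correct and follows essentially the same argument as the paper's proof: rule out the all-zero case via $\daw(s_m)\ge(1-\gamma)\mu_0(s_m)>0$, and rule out the all-$B_w$ case by subtracting the Bellman flow identity for $\daw$ from $B_w$ times that for $d^D$ (using Assumption~\ref{ass:dataset}) to obtain a strictly positive quantity equal to a nonpositive one. The presentation is slightly cleaner than the paper's, but the route is identical.
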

See Appendix~\ref{proof:lem bound nu bridge} for proof. With Lemma~\ref{lem:bound nu bridge}, we can bound $|{\vaw}(s_m)|$ as follows.

If ${\vaw}(s_m)\geq0$, then since ${\waw}(s_m,a_2)=B_w$, we know $e_{{\vaw}}(s_m,a_2)\geq \alpha f'(B_w)$. Therefore we have:
\begin{equation}
\alpha f'(B_w)\leq e_{{\vaw}}(s_m,a_2)\leq r(s_m,a_2)-(1-\gamma){\vaw}(s_m),
\end{equation}
which implies:
\begin{equation}
\label{eq:bound nu 1}
{\vaw}(s_m)\leq\frac{1}{1-\gamma}|r(s_m,a_2)+\alpha f'(B_w)|\leq\frac{\alpha B_{f'}+1}{1-\gamma}.
\end{equation}

If ${\vaw}(s_m)<0$, then since ${\waw}(s_m,a_1)=0$, we know $e_{{\vaw}}(s_m,a_1)\leq \alpha f'(0)$. Therefore we have:
\begin{equation}
\alpha f'(0)\geq e_{{\vaw}}(s_m,a_1)\geq r(s_m,a_2)-(1-\gamma){\vaw}(s_m),
\end{equation}
which implies:
\begin{equation}
\label{eq:bound nu 2}
{\vaw}(s_m)\geq-\frac{1}{1-\gamma}(|r(s_m,a_1)|+|\alpha f'(0)|)\geq-\frac{\alpha B_{f'}+1}{1-\gamma}.
\end{equation}

Combining (\ref{eq:bound nu 1}) and (\ref{eq:bound nu 2}), we have $\Vert{\vaw}\Vert_{\infty}=|{\vaw}(s_m)|\leq\frac{\alpha B_{f'}+1}{1-\gamma}$.
\end{itemize}

In conclusion, we have:
\begin{equation}
\Vert{\vaw}\Vert_{\infty}\leq\frac{\alpha B_{f'}+1}{1-\gamma}.
\end{equation}

\subsection{Proof of Lemma~\ref{lem:bound nu bridge}}
\label{proof:lem bound nu bridge}
First note that it is impossible to have ${\waw}(s_m,a)=0,\forall a$. This is because ${\daw}(s_m,a)={\waw}(s_m,a)d^D(s_m,a)$ satisfies Bellman flow constraint (\ref{eq:bellman flow 1}). Therefore
\begin{equation}
{\daw}(s_m)=\sum_a{\waw}(s_m,a)d^D(s_m,a)\geq(1-\gamma)\mu_0(s_m)>0.
\end{equation}

On the other hand, if ${\waw}(s_m,a)=B_w,\forall a$, then from Bellman flow constraints we have:
\begin{equation}
\label{eq:bound nu 3}
B_wd^D(s_m)={\daw}(s_m)=(1-\gamma)\mu_0(s_m)+\sum_{s',a'}P(s_m|s',a'){\waw}(s',a')d^D(s',a').
\end{equation}
Notice that from Assumption~\ref{ass:dataset} $d^D$ is the discounted visitation distribution of $\pi_D$ and thus also satisfies Bellman flow constraints:
\begin{equation}
d^D(s_m)=(1-\gamma)\mu_0(s_m)+\sum_{s',a'}P(s_m|s',a')d^D(s',a'),
\end{equation}
which implies
\begin{equation}
\label{eq:bound nu 4}
B_wd^D(s_m)=(1-\gamma)B_w\mu_0(s_m)+\sum_{s',a'}B_wP(s_m|s',a')d^D(s',a').
\end{equation}

Combining (\ref{eq:bound nu 3}) and (\ref{eq:bound nu 4}), we have
\begin{equation}
(1-\gamma)(B_w-1)\mu_0(s_m)=\sum_{s',a'}({\waw}-B_w)P(s_m|s',a')d^D(s',a').
\end{equation}
However, since $B_w>1,\mu_0(s_m)>0,{\waw}-B_w\leq0$, we have:
\begin{equation}
(1-\gamma)(B_w-1)\mu_0(s_m)>0,\sum_{s',a'}({\waw}-B_w)P(s_m|s',a')d^D(s',a')\leq0,
\end{equation}
which is a contradiction. 

Therefore,  there must exist $a_1,a_2\in\mathcal{A}$ such that ${\waw}(s_m,a_1)=0,{\waw}(s_m,a_2)=B_w$.
\section{Proof of \algbc}
\subsection{Proof of Lemma~\ref{lem:variation}}
\label{proof:lem variation}
Notice that by the variational form of total variation, we have for any policies $\pi,\pi'$ and $s\in\mathcal{S}$,
\begin{align}
\Vert\pi(\cdot|s)-\pi'(\cdot|s)\Vert_1&=\max_{h:\Vert h\Vert_{\infty}\leq 1}[\mathbb{E}_{a\sim\pi(\cdot|s)} h(a)-\mathbb{E}_{a\sim\pi'(\cdot|s)} h(a)]\\
&=\mathbb{E}_{a\sim\pi(\cdot|s)}[ h^s_{\pi,\pi'}(a)]-\mathbb{E}_{a\sim\pi'(\cdot|s)}[h^s_{\pi,\pi'}(a)],
\end{align}
which implies that
\begin{align}
\mathbb{E}_{s\sim d}[\Vert\pi(\cdot|s)-\pi'(\cdot|s)\Vert_1]&=\mathbb{E}_{s\sim d}\left[\mathbb{E}_{a\sim\pi(\cdot|s)}[h^s_{\pi,\pi'}(a)]-\mathbb{E}_{a\sim\pi'(\cdot|s)}[h^s_{\pi,\pi'}(a)]\right]\\
&=\mathbb{E}_{s\sim d}\left[\mathbb{E}_{a\sim\pi(\cdot|s)}[h_{\pi,\pi'}(s,a)]-\mathbb{E}_{a\sim\pi'(\cdot|s)}[h_{\pi,\pi'}(s,a)]\right],
\end{align}
where the last step comes from the definition of $h_{\pi,\pi'}$.

\subsection{Proof of Theorem~\ref{thm:sample regularized BC}}
\label{proof:thm sample regularized BC}
Let $\epsilon_{UO}$ denote $\left(\frac{4(\alpha {\BF}+{\BW}{\BE})}{\alpha M_f}\right)^{\frac{1}{2}}\cdot\left(\frac{2\log\frac{8|{\VC}||{\WC}|}{\delta}}{n_1}\right)^{\frac{1}{4}}+\left(\frac{4(1-\gamma)\BV}{\alpha M_f}\right)^{\frac{1}{2}}\cdot\left(\frac{2\log\frac{8|{\VC}|}{\delta}}{n_0}\right)^{\frac{1}{4}}$. Suppose $E$ denote the event
\begin{equation}
\Vert \hat{w}-{\wa}\Vert_{2,d^D}\leq\epsilon_{UO},
\end{equation} 
then by Theorem~\ref{thm:sample regularized approximate}, we have
\begin{equation}
\text{Pr}(E)\geq 1-\frac{\delta}{2}.
\end{equation}

Our following discussion is all conditioned on $E$. Let $l'_{i,\pi,h}$ denote $\hat{w}(s_i,a_i)(h^{\pi}(s_i)-h(s_i,a_i))$ then we know:
\begin{align}
\mathbb{E}_{\mathcal{D}_2}[l'_{i,\pi,h}]&=\mathbb{E}_{(s,a)\sim d^D}[\hat{w}(s,a)(h^{\pi}(s)-h(s,a))]\\
&=\left(\sum_{s,a}d^D(s,a)\hat{w}(s,a)\right)\mathbb{E}_{s\sim\hat{d}'}\left[\mathbb{E}_{a\sim\pi(\cdot|s)}[h(s,a)]-\mathbb{E}_{a\sim\hat{\pi}(\cdot|s)}[h(s,a)]\right],
\end{align}
where $\hat{d}'(s)=\frac{\sum_{a'}d^D(s,a')\hat{w}(s,a')}{\sum_{s',a'}d^D(s',a')\hat{w}(s',a')}$. Notice that $0\leq\hat{w}(s,a)\leq {\BW},|h(s,a)|\leq1$, then by Hoeffding's inequality we have for any $\pi\in\Pi$ and $h\in\mathcal{H}$, with at least probability $1-\frac{\delta}{2}$,
\begin{align}
&\bigg|\frac{1}{n_2}\sum_{i=1}^{n_2}l'_{i,\pi,h}-\left(\sum_{s',a'}d^D(s',a')\hat{w}(s',a')\right)\mathbb{E}_{s\sim\hat{d}'}\left[\mathbb{E}_{a\sim\pi(\cdot|s)}[h(s,a)]-\mathbb{E}_{a\sim\hat{\pi}(\cdot|s)}[h(s,a)]\right]\bigg|\notag\\
\leq&2{\BW}\sqrt{\frac{2\log\frac{4|\mathcal{H}||\Pi|}{\delta}}{n_2}}\leq2{\BW}\sqrt{\frac{6\log\frac{4|\Pi|}{\delta}}{n_2}}:={\lstatt}.\label{eq:thm3-4}
\end{align}

Besides, the following lemma shows that $\hat{d}'$ is close to ${\da}$ and $\left(\sum_{s',a'}d^D(s',a')\hat{w}(s',a')\right)$ is close to 1 conditioned on $E$:
\begin{lemma}
\label{lem:importance sampling close}
Conditioned on $E$, we have
\begin{align}
\Vert\hat{d}'-{\da}\Vert_1\leq 2\epsilon_{UO},\label{eq:thm3-1}\\
\bigg|\left(\sum_{s',a'}d^D(s',a')\hat{w}(s',a')\right)-1\bigg|\leq\epsilon_{UO}.\label{eq:thm3-2}
\end{align}
\end{lemma}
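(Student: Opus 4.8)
The plan is to lean on the fact that $\da = \wa\cdot d^D$ is a genuine discounted occupancy: it satisfies the Bellman flow constraints \eqref{eq:bellman flow 1}, and summing those constraints over $s$ shows $\sum_{s,a}\da(s,a)=1$, i.e.\ $\sum_{s',a'}d^D(s',a')\wa(s',a')=1$. Writing $Z:=\sum_{s',a'}d^D(s',a')\hat w(s',a')$ for the normalization constant in the definition of $\hat d'$, the bound \eqref{eq:thm3-2} is then immediate:
\[
|Z-1|=\Big|\sum_{s',a'}d^D(s',a')\big(\hat w(s',a')-\wa(s',a')\big)\Big|\le \Vert\hat w-\wa\Vert_{1,d^D}\le\Vert\hat w-\wa\Vert_{2,d^D}\le\epsilon_{UO},
\]
where the middle step is Jensen's inequality (equivalently Cauchy--Schwarz) using that $d^D$ is a probability distribution, and the last step is the conditioning on the event $E$.

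For \eqref{eq:thm3-1} I would first bound the \emph{unnormalized} discrepancy and then pay a small extra price for renormalization. Since $\hat d(s)=\sum_a d^D(s,a)\hat w(s,a)$ and $\da(s)=\sum_a d^D(s,a)\wa(s,a)$ are nonnegative, the triangle inequality gives $\Vert\hat d-\da\Vert_1\le\Vert\hat w-\wa\Vert_{1,d^D}\le\epsilon_{UO}$, and moreover $\Vert\hat d\Vert_1=\sum_{s,a}d^D(s,a)\hat w(s,a)=Z$. Since $\hat d'=\hat d/Z$, splitting through $\hat d$ yields
\[
\Vert\hat d'-\da\Vert_1\le\Big\Vert\tfrac1Z\hat d-\hat d\Big\Vert_1+\Vert\hat d-\da\Vert_1=\Big|\tfrac1Z-1\Big|\,Z+\Vert\hat d-\da\Vert_1=|1-Z|+\Vert\hat d-\da\Vert_1\le 2\epsilon_{UO},
\]
invoking \eqref{eq:thm3-2} for the first summand and the previous display for the second.

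I do not anticipate a genuine obstacle; the only point needing a little care is keeping the renormalization error $|1-Z|$ cleanly separated from the raw $\ell_1$ error between $\hat d$ and $\da$, which the split above accomplishes. Everything else --- that a solution of the Bellman flow equations has unit mass, that $\Vert\cdot\Vert_{1,d^D}\le\Vert\cdot\Vert_{2,d^D}$ under a probability measure, and the definition of $E$ --- is routine.
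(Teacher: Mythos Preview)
Your proof is correct and follows essentially the same approach as the paper: both prove \eqref{eq:thm3-2} by writing $1=\sum_{s',a'}d^D(s',a')\wa(s',a')$ and bounding the $\ell_1$ difference by the $\ell_2$ one, and both prove \eqref{eq:thm3-1} by splitting $\hat d'-\da$ into a renormalization piece $\big(\tfrac{1}{Z}-1\big)\hat d$ and an unnormalized piece $\hat d-\da$, each bounded by $\epsilon_{UO}$. Your write-up is slightly more compact in using the identity $\big|\tfrac{1}{Z}-1\big|\cdot Z=|1-Z|$ directly, but the decomposition and the estimates are the same as the paper's.
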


The proof of the above lemma is in Appendix~\ref{proof:lem importance sampling close}.

With concentration result (\ref{eq:thm3-4}) and Lemma~\ref{lem:importance sampling close}, we can bound $\mathbb{E}_{s\sim{\da}}[\Vert\bar{\pi}(\cdot|s)-{\pia}(\cdot|s)\Vert_1]$. To facilitate our discussion, we will use the following notations:
\begin{align}
&\bar{h}:=h_{\bar{\pi},{\pia}}\in\mathcal{H},\\
&\bar{h}':=\arg\max_{h\in\mathcal{H}}\sum_{i=1}^{n_2}\hat{w}(s_i,a_i)[h^{\bar{\pi}}(s_i)-h(s_i,a_i)],\\
&\tilde{h}:=\arg\max_{h\in\mathcal{H}}\sum_{i=1}^{n_2}\hat{w}(s_i,a_i)[h^{{\pia}}(s_i)-h(s_i,a_i)].
\end{align}

Then we have
\begin{align}
&\mathbb{E}_{s\sim{\da}}[\Vert\bar{\pi}(\cdot|s)-{\pia}(\cdot|s)\Vert_1]\\
\leq&\mathbb{E}_{s\sim\hat{d}'}[\Vert\bar{\pi}(\cdot|s)-{\pia}(\cdot|s)\Vert_1]+4\epsilon_{UO}\\
=&\mathbb{E}_{s\sim\hat{d}'}[\mathbb{E}_{a\sim\bar{\pi}(\cdot|s)}[\bar{h}(s,a)]-\mathbb{E}_{a\sim{\pia}(\cdot|s)}[\bar{h}(s,a)]]+4\epsilon_{UO}\\
=&\mathbb{E}_{s\sim\hat{d}'}[\mathbb{E}_{a\sim\bar{\pi}(\cdot|s)}[\bar{h}(s,a)]-\mathbb{E}_{a\sim\hat{\pi}(\cdot|s)}[\bar{h}(s,a)]]\notag\\
&+\mathbb{E}_{s\sim\hat{d}'}[\mathbb{E}_{a\sim\hat{\pi}(\cdot|s)}[\bar{h}(s,a)]-\mathbb{E}_{a\sim{\pia}(\cdot|s)}[\bar{h}(s,a)]]+4\epsilon_{UO}\\
=&\mathbb{E}_{s\sim\hat{d}',a\sim\hat{\pi}(\cdot|s)}[\bar{h}^{\bar{\pi}}(s)-\bar{h}(s,a)]+\mathbb{E}_{s\sim\hat{d}',a\sim\hat{\pi}(\cdot|s)}[(-\bar{h}^{{\pia}}(s))-(-\bar{h}(s,a))]+4\epsilon_{UO}\\
\leq&\mathbb{E}_{s\sim\hat{d}',a\sim\hat{\pi}(\cdot|s)}[\bar{h}^{\bar{\pi}}(s)-\bar{h}(s,a)]+\mathbb{E}_{s\sim\hat{d}'}[\Vert{\pia}(\cdot|s)-\hat{\pi}(\cdot|s)\Vert_1]+4\epsilon_{UO}\\
\leq&\mathbb{E}_{s\sim\hat{d}',a\sim\hat{\pi}(\cdot|s)}[\bar{h}^{\bar{\pi}}(s)-\bar{h}(s,a)]+\mathbb{E}_{s\sim{\da}}[\Vert{\pia}(\cdot|s)-\hat{\pi}(\cdot|s)\Vert_1]+8\epsilon_{UO}\\
\leq&\mathbb{E}_{s\sim\hat{d}',a\sim\hat{\pi}(\cdot|s)}[\bar{h}^{\bar{\pi}}(s)-\bar{h}(s,a)]+10\epsilon_{UO},\label{eq:thm3-5}
\end{align}
where the first and sixth steps come from (\ref{eq:thm3-1}), the fifth step is due to $\Vert\bar{h}\Vert_{\infty}\leq1$ and the last step from Theorem~\ref{thm:sample regularized approximate}.

For $\mathbb{E}_{s\sim\hat{d}',a\sim\hat{\pi}(\cdot|s)}[\bar{h}^{\bar{\pi}}(s)-\bar{h}(s,a)]$, we utilize concentration result (\ref{eq:thm3-4}) and have with at least probability $1-\delta$:
\begin{align}
&\mathbb{E}_{s\sim\hat{d}',a\sim\hat{\pi}(\cdot|s)}[\bar{h}^{\bar{\pi}}(s)-\bar{h}(s,a)]\\
\leq&\left(\sum_{s',a'}d^D(s',a')\hat{w}(s',a')\right)\mathbb{E}_{s\sim\hat{d}'}\left[\mathbb{E}_{a\sim\bar{\pi}(\cdot|s)}[\bar{h}(s,a)]-\mathbb{E}_{a\sim\hat{\pi}(\cdot|s)}[\bar{h}(s,a)]\right]+2\epsilon_{UO}\\
\leq&\frac{1}{n_2}\sum_{i=1}^{n_2}[\hat{w}(s_i,a_i)(\bar{h}^{\bar{\pi}}(s_i)-\bar{h}(s_i,a_i))]+{\lstatt}+2\epsilon_{UO}\\
\leq&\frac{1}{n_2}\sum_{i=1}^{n_2}[\hat{w}(s_i,a_i)(\bar{h}'^{\bar{\pi}}(s_i)-\bar{h}'(s_i,a_i))]+{\lstatt}+2\epsilon_{UO}\\
\leq&\frac{1}{n_2}\sum_{i=1}^{n_2}[\hat{w}(s_i,a_i)(\tilde{h}^{{\pia}}(s_i)-\tilde{h}(s_i,a_i))]+{\lstatt}+2\epsilon_{UO}\\
\leq&\mathbb{E}_{s\sim\hat{d}',a\sim\hat{\pi}(\cdot|s)}[\tilde{h}^{{\pia}}(s)-\tilde{h}(s,a)]+2{\lstatt}+4\epsilon_{UO}\\
\leq&\mathbb{E}_{s\sim\hat{d}'}[\Vert{\pia}(\cdot|s)-\hat{\pi}(\cdot|s)\Vert_1]+2{\lstatt}+4\epsilon_{UO}\\
\leq&2{\lstatt}+10\epsilon_{UO},\label{eq:thm3-6}
\end{align} 
where the first step comes from (\ref{eq:thm3-2}), the second is due to (\ref{eq:thm3-4}), the third and fourth is from the definition of $\bar{h}'$ and $\bar{\pi}$, the fifth step utilizes (\ref{eq:thm3-2}) and (\ref{eq:thm3-4}), the sixth step is due to $\Vert\tilde{h}\Vert_{\infty}\leq1$ and the last step is from (\ref{eq:thm3-1}) and Theorem~\ref{thm:sample regularized approximate}.

Combining (\ref{eq:thm3-5}) and (\ref{eq:thm3-6}), we have conditioned on $E$, with at least probability $1-\frac{\delta}{2}$, we have
\begin{equation}
\mathbb{E}_{s\sim{\da}}[\Vert\bar{\pi}(\cdot|s)-{\pia}(\cdot|s)\Vert_1]\leq2{\lstatt}+20\epsilon_{UO}.
\end{equation}
Notice that $\epsilon_{UO}\leq 2^{\frac{5}{4}}\sqrt{\frac{\SE_{n_1,n_0,\alpha}(\BW,\BF,\BV,\BE)}{\alpha M_f}}$. Therefore, with at least probability $1-\delta$, we have:
	\begin{align}
		&\mathbb{E}_{s\sim {\da}}[\Vert{\pia}(\cdot|s)-\bar{\pi}(\cdot|s)\Vert_1]\leq4{\BW}\sqrt{\frac{6\log\frac{4|\Pi|}{\delta}}{n_2}}+50\sqrt{\frac{\SE_{n_1,n_0,\alpha}(\BW,\BF,\BV,\BE)}{\alpha M_f}}
\end{align}	
This finishes our proof.

\subsection{Proof of Lemma~\ref{lem:importance sampling close}}
\label{proof:lem importance sampling close}
The proof is similar to Lemma~\ref{lem:tilde w tilde pi}. First notice that
\begin{align}
&\bigg|\left(\sum_{s',a'}d^D(s',a')\hat{w}(s',a')\right)-1\bigg|\\
=&\bigg|\left(\sum_{s',a'}d^D(s',a')\hat{w}(s',a')\right)-\left(\sum_{s',a'}d^D(s',a'){\wa}(s',a')\right)\bigg|\\
=&\bigg|\sum_{s',a'}d^D(s',a')\left(\hat{w}(s',a')-{\wa}(s',a')\right)\bigg|\\
\leq&\sum_{s',a'}d^D(s',a')|\hat{w}(s',a')-{\wa}(s',a')|\\
\leq&\Vert\hat{w}-{\wa}\Vert_{2,d^D}\\
\leq&\epsilon_{UO},\label{eq:thm3-3}
\end{align}
which proves the second part of the lemma. For the first part, we have
\begin{align}
&\Vert \hat{d}'-{\da}\Vert_1\\
=&\sum_s\bigg|\frac{1}{\sum_{s',a'}d^D(s',a')\hat{w}(s',a')}\sum_{a'}d^D(s,a')\hat{w}(s,a')-{\da}(s)\bigg|\\
\leq&\sum_s\left(\bigg|\frac{1}{\sum_{s',a'}d^D(s',a')\hat{w}(s',a')}-1\bigg|\sum_{a'}d^D(s,a')\hat{w}(s,a')\right)\notag\\
&+\sum_s\bigg|\sum_{a'}d^D(s,a')\hat{w}(s,a')-{\da}(s)\bigg|\\
=&\underbrace{\sum_s\left(\bigg|\frac{1}{\sum_{s',a'}d^D(s',a')\hat{w}(s',a')}-1\bigg|\sum_{a'}d^D(s,a')\hat{w}(s,a')\right)}_{(1)}\notag\\
&+\underbrace{\sum_s\bigg|\sum_{a'}d^D(s,a')\hat{w}(s,a')-\sum_{a'}d^D(s,a'){\wa}(s,a')\bigg|}_{(2)}.
\end{align}

For term (1), notice that
\begin{equation}
\bigg|\frac{1}{\sum_{s',a'}d^D(s',a')\hat{w}(s',a')}-1\bigg|=\frac{\big|1-\sum_{s',a'}d^D(s',a')\hat{w}(s',a')\big|}{\sum_{s',a'}d^D(s',a')\hat{w}(s',a')}\leq\frac{\epsilon_{UO}}{\sum_{s',a'}d^D(s',a')\hat{w}(s',a')}.
\end{equation}
Therefore,
\begin{align}
&\sum_s\left(\bigg|\frac{1}{\sum_{s',a'}d^D(s',a')\hat{w}(s',a')}-1\bigg|\sum_{a'}d^D(s,a')\hat{w}(s,a')\right)\\
\leq&\epsilon_{UO}\sum_{s}\frac{\sum_{a'}d^D(s,a')\hat{w}(s,a')}{\sum_{s',a'}d^D(s',a')\hat{w}(s',a')}\\
=&\epsilon_{UO}.
\end{align}

For term (2),
\begin{align}
&\sum_s\bigg|\sum_{a'}d^D(s,a')\hat{w}(s,a')-\sum_{a'}d^D(s,a'){\wa}(s,a')\bigg|\\
\leq&\sum_{s,a'}d^D(s,a')|\hat{w}(s,a')-{\wa}(s,a')|\\
\leq&\epsilon_{UO}.
\end{align}

Thus we have
\begin{equation}
\Vert \hat{d}'-{\da}\Vert_1\leq2\epsilon_{UO}.
\end{equation}
\section{Proof of Corollary~\ref{cor:sample unregularized 0}}
\label{proof:cor sample unregularized 0}
First by Lemma~\ref{lem:hat L tilde L}, we know that
\begin{equation}
{\Lz}({\vz},{\wz})-{\Lz}({\vz},\hat{w})\leq \frac{2{\BWZ}}{1-\gamma}\sqrt{\frac{2\log\frac{4|{\VC}||{\WC}|}{\delta}}{n}}+\sqrt{\frac{2\log\frac{4|{\VC}|}{\delta}}{n_0}}.
\end{equation}

Substitute the definition (\ref{prob:maximin2}) of ${\Lz}({\vz},w)=(1-\gamma)\mathbb{E}_{s\sim\mu_0}[{\vz}(s)]+\mathbb{E}_{(s,a)\sim d^D}[w(s,a)e_{{\vz}(s)}(s,a)]$ into the above inequality, we have
\begin{equation}
\label{eq:cor-2-eq2}
\sum_{s,a}\left({\dz}(s,a)e_{{\vz}(s)}(s,a)\right)-\sum_{s,a}\left(\hat{d}(s,a)e_{{\vz}(s)}(s,a)\right)\leq \frac{2{\BWZ}}{1-\gamma}\sqrt{\frac{2\log\frac{4|{\VC}||{\WC}|}{\delta}}{n}}+\sqrt{\frac{2\log\frac{4|{\VC}|}{\delta}}{n_0}}.
\end{equation}

Note that ${\vz}$ is the optimal value function of the unregularized MDP $\mathcal{M}$ and ${\dz}$ is the discounted state visitation distribution of the optimal policy ${\piz}$ \citep{puterman1994markov}. Therefore, invoking Lemma~\ref{lem:performance difference}, we have
\begin{align}
J(\pi)-J({\piz})&=\mathbb{E}_{(s,a)\sim d^{\pi}}[r(s,a)+\gamma\mathbb{E}_{s'\sim P(\cdot|s,a)}{\vz}(s')- {\vz}(s)]\notag\\
&=\sum_{s,a}d^{\pi}(s,a)e_{{\vz}(s)}(s,a)\label{eq:cor-2-eq1}.
\end{align}

Let $\pi=\tilde{\pi}^*_{0}$ in (\ref{eq:cor-2-eq1}), then we can obtain
\begin{equation}
\sum_{s,a}{\dz}(s,a)e_{{\vz}(s)}(s,a)=0.
\end{equation}
Substitute it into (\ref{eq:cor-2-eq2}),
\begin{equation}
\label{eq:cor-2-eq3}
\sum_{s,a}\left(\hat{d}(s,a)(-e_{{\vz}(s)}(s,a))\right)\leq \frac{2{\BWZ}}{1-\gamma}\sqrt{\frac{2\log\frac{4|{\VC}||{\WC}|}{\delta}}{n}}+\sqrt{\frac{2\log\frac{4|{\VC}|}{\delta}}{n_0}}.
\end{equation}

Notice that since ${\vz}$ is the optimal value function, $-e_{{\vz}(s)}(s,a)\geq0$ for all $s,a$. Therefore, we have:
\begin{align}
J({\piz})-J(\hat{\pi})&=\sum_{s,a}d^{\hat{\pi}}(s,a)(-e_{{\vz}(s)}(s,a))\\
&=\sum_{s,a}d^{\hat{\pi}}(s)\hat{\pi}(a|s)(-e_{{\vz}(s)}(s,a))\\
&\leq{{\BS}}\sum_{s,a}d^D(s)\hat{\pi}(a|s)(-e_{{\vz}(s)}(s,a))\\
&={{\BS}}\sum_{s,a}d^D(s)\frac{\hat{w}(s,a)\pi_D(a|s)}{\sum_{a'}\hat{w}(s,a')\pi_D(a'|s)}(-e_{{\vz}(s)}(s,a))\\
&\leq\frac{{\BS}}{{\BST}}\sum_{s,a}d^D(s)\pi_D(a|s)\hat{w}(s,a)(-e_{{\vz}(s)}(s,a))\\
&=\frac{{\BS}}{{\BST}}\sum_{s,a}\hat{d}(s,a)(-e_{{\vz}(s)}(s,a))\\
&\leq\frac{2{\BWZ}{\BS}}{(1-\gamma){\BST}}\sqrt{\frac{2\log\frac{4|{\VC}||{\WC}|}{\delta}}{n}}+\frac{{\BS}}{{\BST}}\sqrt{\frac{2\log\frac{4|{\VC}|}{\delta}}{n_0}},
\end{align}
where the first step comes from (\ref{eq:cor-2-eq1}), the third step is due to Assumption~\ref{ass:strong conc}, the fifth step comes from Assumption~\ref{ass:W good} and the last step comes from (\ref{eq:cor-2-eq3}). This concludes our proof.

\subsection{Proof of Lemma~\ref{lem:ergodic}}
\label{proof:lem ergodic}
First notice that $d^D(s)\geq(1-\gamma)\mu_0(s)$. Then since $d^{\pi}(s)\leq B_{erg,2}\mu_0(s),\forall s,\pi$, we have for any policy $\pi$:
\begin{equation}
\frac{d^{\pi}(s)}{d^D(s)}\leq\frac{1}{1-\gamma}\frac{d^{\pi}(s)}{\mu_0(s)}\leq\frac{B_{erg,2}}{1-\gamma}.
\end{equation}
On the other hand, ${\dz}(s)\geq(1-\gamma)\mu_0(s)$, therefore similarly we have:
\begin{equation}
\frac{{\dz}(s)}{d^D(s)}\geq\frac{(1-\gamma)\mu_0(s)}{d^D(s)}\geq\frac{1-\gamma}{B_{erg,2}}.
\end{equation}
\end{document}